\newtheorem{theorem}{Theorem}
\newtheorem{proposition}{Proposition}
\newtheorem{definition}{Definition}
\newtheorem{lemma}{Lemma}
\def\eqref#1{equation~\ref{#1}}
\def\1{\bm{1}}
\def\ve{{\bm{e}}}
\def\mC{{\bm{C}}}
\def\mR{{\bm{R}}}
\DeclareMathAlphabet{\mathsfit}{\encodingdefault}{\sfdefault}{m}{sl}
\SetMathAlphabet{\mathsfit}{bold}{\encodingdefault}{\sfdefault}{bx}{n}
\def\gD{{\mathcal{D}}}
\def\gS{{\mathcal{S}}}
\def\gX{{\mathcal{X}}}
\def\sF{{\mathbb{F}}}
\def\sY{{\mathbb{Y}}}
\newcommand{\E}{\mathbb{E}}
\newcommand{\R}{\mathbb{R}}
\newcommand{\softmax}{\mathrm{softmax}}
\DeclareMathOperator*{\argmax}{arg\,max}
\definecolor{codegreen}{rgb}{0,0.6,0}
\definecolor{codegray}{rgb}{0.5,0.5,0.5}
\definecolor{codepurple}{rgb}{0.58,0,0.82}
\definecolor{backcolour}{rgb}{1,1,1}
\lstdefinestyle{mystyle}{
  backgroundcolor=\color{backcolour}, commentstyle=\color{codegreen},
  keywordstyle=\color{blue},
  numberstyle=\tiny\color{codegray},
  stringstyle=\color{codepurple},
  basicstyle=\ttfamily\footnotesize,
  breakatwhitespace=false,         
  breaklines=true,                 
  captionpos=b,                    
  keepspaces=true,                 
  numbers=none,                    
  numbersep=5pt,                  
  showspaces=false,                
  showstringspaces=false,
  showtabs=false,                  
  tabsize=2
}
\newcommand{\hard}{\operatorname{hard}}
\newcommand{\soft}{\operatorname{soft}}
\newcommand{\xent}{\ell_{\operatorname{x-ent.}}}
\def\simnr{\overset{NR}{\sim}} %Sample without repeat
\newcommand{\hmu}{\hat{\mu}}
\newcommand{\rhard}{\mathcal{R}_\mathrm{hard}}
\titlespacing{\paragraph}{0pt}{0pt}{0.5em}
\title{Set Learning for Accurate and Calibrated Models}
\author[1,2,3,\thanks{Equal contributions.}, \thanks{Work partly done while a Student Researcher at Google DeepMind.}]{Lukas Muttenthaler}
\author[1,2,*]{Robert A. Vandermeulen}
\author[3]{Qiuyi (Richard) Zhang}
\author[3]{Thomas Unterthiner}
\author[1,2,3,4,5]{Klaus-Robert M\"uller}
\affil[1]{Machine Learning Group, Technische Universit\"at Berlin, Germany}
\affil[2]{Berlin Institute for the Foundations of Learning and Data, Berlin, Germany}
\affil[3]{Google DeepMind}
\affil[4]{Department of Artificial Intelligence, Korea University, Seoul}
\affil[5]{Max Planck Institute for Informatics, Saarbr\"ucken, Germany}
\begin{document}

\maketitle

\begin{abstract}
 Model overconfidence and poor calibration are common in machine learning and difficult to account for when applying standard empirical risk minimization. In this work, we propose a novel method to alleviate these problems that we call odd-$k$-out learning (OKO), which minimizes the cross-entropy error for sets rather than for single examples. This naturally allows the model to capture correlations across data examples and achieves both better accuracy and calibration, especially in limited training data and class-imbalanced regimes. Perhaps surprisingly, OKO often yields better calibration even when training with hard labels and dropping any additional calibration parameter tuning, such as temperature scaling. We demonstrate this in extensive experimental analyses and provide a mathematical theory to interpret our findings. We emphasize that OKO is a general framework that can be easily adapted to many settings and a trained model can be applied to single examples at inference time, without significant run-time overhead or architecture changes.
\end{abstract}

\section{Introduction}
\label{sec:intro}
In machine learning, a classifier is typically trained to minimize cross-entropy on individual examples rather than on sets of examples. By construction, this paradigm ignores information that may be found in correlations between sets of data. Therefore, we present  \emph{odd-$k$-out} learning (OKO), a new training framework based on learning from sets. It draws inspiration from the \emph{odd-one-out} task which is commonly used in the cognitive sciences to infer notions of object similarity from human decision-making processes \citep{Robilotto2004,fukuzawa1988,hebart2020,muttenthaler2022vice,muttenthaler2023}. The odd-one-out task is a similarity task where subjects choose the most similar pair in a set of objects. We use an adapted version of that task to learn better model parameters while not making any changes to the architecture (see Fig.~\ref{fig:oko}; a). 

Standard classification training often yields overconfident classifiers that are not well-calibrated \citep{label_smoothing,Guo2007_oncalibration,minderer2021revisiting}. Classically, calibration has been treated as an orthogonal problem to accuracy. Miscalibration has been observed to severely worsen while accuracy improves, an interesting phenomenon attributed to over-parametrization, reduced regularization, and biased loss functions \citep{Guo2007_oncalibration,calibration_eval,calibration_bias}. Even log-likelihood --- a proper scoring rule --- was accused of biasing network weights to better classification accuracy at the expense of well-calibrated probabilities \citep{Guo2007_oncalibration,calibration_bias}. Other scoring rules were proposed that are differentiable versions of calibrative measures but these approximations can be crude \citep{karandikar2021soft}. Thus, calibration methods are often treated as an afterthought, comprised of ad-hoc post-processing procedures that require an additional hold-out dataset and monotonically transform the output probabilities, usually without affecting the learned model parameters or accuracy. 
%Such techniques include temperature scaling \citep{platt1999probabilistic}, isotonic regression \citep{zadrozny2002transforming}, and Bayesian binning \citep{naeini2015obtaining}.

Calibration is inherently a performance metric on sets of data; so we propose training the classifier on sets of examples rather than individual samples to find models that yield accurate calibration without ad-hoc post-processing. This is especially crucial in low-data and class-imbalanced settings, for which there is surprisingly little work on calibration \citep{dal2015calibrating}.

Various techniques have been proposed to improve accuracy for imbalanced datasets \citep{branco2016survey,Johnson2019}, which are typically based on non-uniform class sampling or reweighting of the loss function. However, neural nets can still easily overfit to the few training examples for the rare classes \citep{wang2004imbalanced}. There is growing interest in the development of new techniques for handling class imbalance \citep{Johnson2019,class_balanced_distillation2021,Parisot_2022_CVPR,dermatology_2022}. Such techniques are adapted variants of non-uniform sampling, often focusing exclusively on accuracy, and ignoring model calibration. However, techniques for mitigating the effects of imbalance on classification accuracy do not improve calibration for minority instances and standard calibration procedures tend to systematically underestimate the probabilities for minority class instances \citep{wallace2012class}. Moreover, it is widely known that direct undersampling of overrepresented classes modifies the training set distribution and introduces probabilistic biases \citep{dal2015undersampling}. Bayesian prior readjustments were introduced to manipulate posterior probabilities for ameliorating that issue \citep{dal2015calibrating}.

It is known that hard labels tend to induce extreme logit values and therefore cause overconfidence in model predictions \citep{knowledge_distillation,bellinger2020remix}. \emph{Label smoothing} has been proposed to improve model calibration by changing the cross-entropy targets rather than scaling the logits after training \citep{label_smoothing,mixup_regularization}. Label smoothing, in combination with batch balancing --- uniformly sampling over the classes rather than uniformly sampling over all samples in the data (see Appx.~\ref{appx:batch-balancing}), achieves promising results on heavy-tail classification benchmarks, i.e. datasets that contain many classes with few samples and a few classes with many samples \citep{bellinger2020remix}. Yet, all these methods ignore the need for accuracy on the underrepresented classes, generally lack rigorous theoretical grounding, and require fine-tuned parameters for good empirical performance, such as the noise parameter for label smoothing, or the scaling parameter for temperature scaling for which additional held-out data is required.

In contrast to the popular philosophy of training for accuracy and then calibrating, we pose our main question: Can we provide a training framework to learn network parameters that simultaneously obtain better accuracy and calibration, especially with class imbalance?
\begin{figure}[t]
   \centering
    \includegraphics[width=1.0\textwidth]{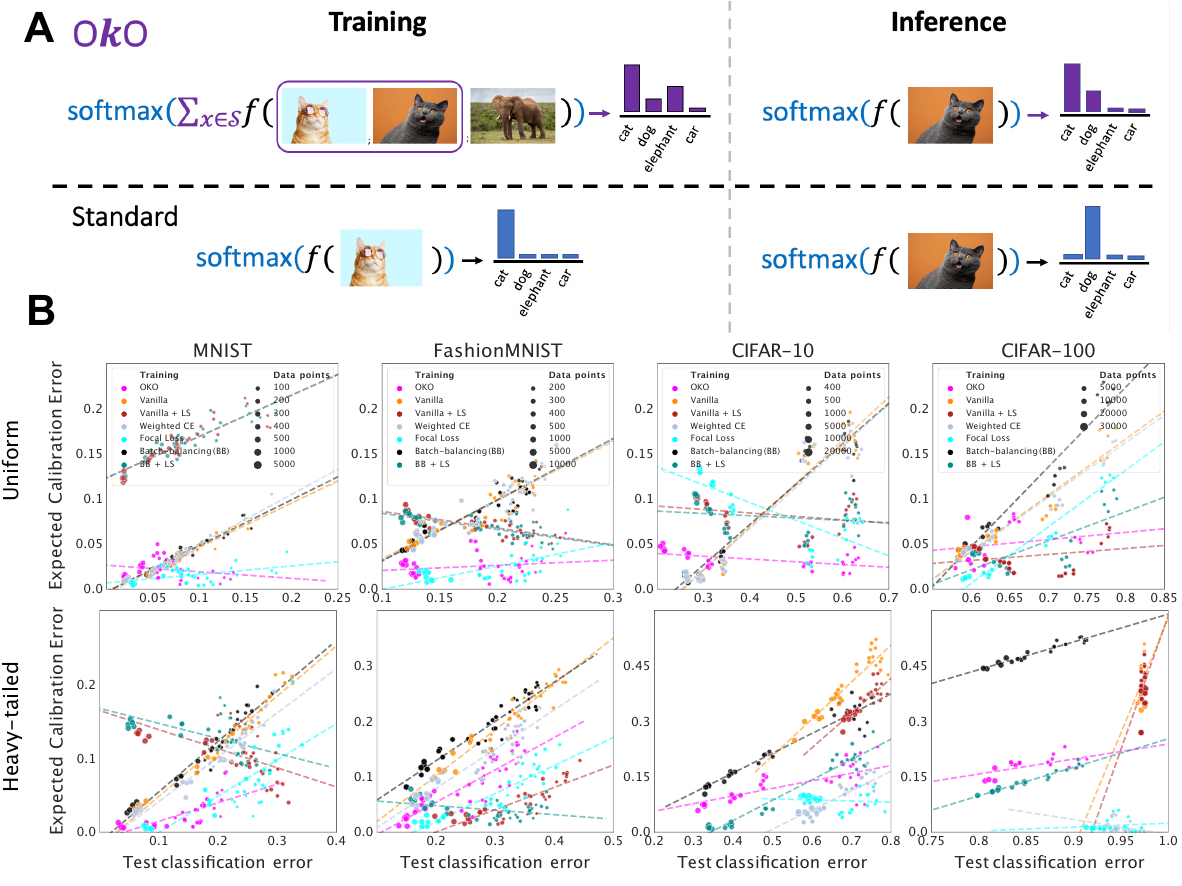}
    \caption{\textbf{A}: OKO minimizes cross-entropy on sets of examples rather than on single examples and naturally yields smoothed logits after training. At inference time it can be applied to single examples without additional computational overhead. \textbf{B}: Expected calibration error as a function of the classification error. Each point in the graph represents the performance of a single seed; there are five for every number of training data points. For each dataset, every model was evaluated on the same test set. Dashed diagonal lines indicate a linear regression fit. Top: Uniform class distribution during training. Bottom: Heavy-tailed class distribution during training.}
\label{fig:oko}
\end{figure}

\paragraph{Contributions.} \label{sec:intro-contrib}
%In this work, we present  \emph{odd-$k$-out} learning (OKO), a new training framework based on learning with sets of data for improving calibration, particularly in long-tail classification. It draws inspiration from the \emph{odd-one-out} task which is commonly used in the cognitive sciences to infer notions of object similarity from human participants \citep{Robilotto2004,fukuzawa1988,hebart2020,muttenthaler2022vice,muttenthaler2023}. The odd-one-out task is a similarity task where subjects choose the most similar pair in a set of objects. We use an adapted version of that task to learn better model parameters (see Fig.~\ref{fig:oko}; a). Specifically, our contributions are the following:
\emph{Indeed, we find that OKO achieves better calibration and uncertainty estimates than standard cross-entropy training. The benefits of OKO over vanilla cross-entropy are even more pronounced in limited training data settings and with heavy-tailed class distributions.
\footnote{A \texttt{JAX} implementation of OKO is publicly available at: \url{https://github.com/LukasMut/OKO}}}

\noindent {\bf Empirical.} \textbf{First}, through extensive experiments, we show that OKO often achieves \emph{better accuracy} while being \emph{better or equally well calibrated} than other methods for improving calibration, especially in low data regimes and for heavy-tailed class distribution settings (see Fig.~\ref{fig:oko}; b). \textbf{Second}, OKO is a principled approach that changes the learning objective by presenting a model with \emph{sets of examples} instead of individual examples, as calibration is inherently a metric on sets. As such, OKO does not introduce additional hyperparameters for post-training tuning or require careful warping of the label distribution via a noise parameter as in label smoothing (see Fig.~\ref{fig:oko}). \textbf{Third}, surprisingly, this differently posed set learning problem results in \emph{smoothed logits} that yield \emph{accurate calibration}, although models are trained using hard labels. \textbf{Fourth}, we emphasize that OKO is extremely easy to plug into any model architecture, as it provides a general training framework that does not modify the model architecture and can therefore be applied to single examples at test time exactly like any network trained via single-example learning (see Fig.~\ref{fig:oko}; a). The training complexity scales linearly in $O(|\mathcal{S}|)$ where $|\mathcal{S}|$ denotes the number of examples in a set and hence introduces \emph{little computational overhead} during training. \textbf{Last}, in few-shot settings, OKO achieves compellingly \emph{low calibration and classification errors} (see Fig.~\ref{fig:oko}; b). Notably, OKO improves test accuracy for $10$-shot MNIST by $8.59\%$ over the best previously reported results \citep{NEURIPS2022_svms}.  

\noindent {\bf Theoretical.} To get a better understanding of OKO's exceptional calibration performance, we offer mathematical analyses that show why OKO yields logit values that are not as strongly encouraged to diverge as in vanilla cross-entropy training. We develop a new scoring rule that measures excess confidence on a per-datapoint basis to provably demonstrate improved calibration. This scoring rule compares the predictive entropies and cross-entropies, and for calibrated predictors, we show that our measure is consistent in that the average excess confidence is 0. By using this new scoring rule we demonstrate that OKO implicitly performs a form of \emph{entropic regularization}, giving insight into how it prevents excess confidence in certain low entropy regions.

\section{Related Work}
\label{sec:relatedwork}
The \emph{odd-one-out} task has been widely used in the cognitive sciences to infer notions of object similarity from human participants \citep{Robilotto2004,hebart2020,muttenthaler2022vice,muttenthaler2023}, and first uses are slowly percolating into machine learning: \citet{Fernando2017_oko} trained a self-supervised video understanding  network by predicting which one out of three sequences was in reverse time order, \citet{Locatello2020_weaksupervision,Mohammadi2020OddOneOutRL} used comparisons between samples as weak supervision target. \citet{muttenthaler2023improving} use human odd-one-out choices to improve pretrained representations for few-shot learning and anomaly detection tasks.
However, none of these works investigated calibration or provided any theory for (odd-one-out) set learning.

Improving calibration is of practical interest for many applications. However, deep neural networks often appear badly calibrated \citep{Guo2007_oncalibration}. Even though this depends on the concrete architecture used, scaling up a model usually increases accuracy at the cost of calibration \citep{minderer2021revisiting}. Many post-hoc approaches to increase calibration have been proposed, such as temperature scaling \citep{platt1999probabilistic}, isotonic regression \citep{zadrozny2002transforming,Niculescu2005}, and Bayesian binning \citep{naeini2015obtaining}, while improving calibration during training is a less explored topic. Most related to our approach are techniques that use data augmentations that blend different inputs together \citep{Thulasidasan2019_onmixup} or use ensembles to combine representations \citep{Lakshminarayanan2017}. However, none of these works examined calibration for sets of data.
The task of classifying sets of instances is known as \emph{multiple instance learning} \citep{set-survey}. It is desirable to leverage the set structure, instead of simply using a concatenated representation of the examples in each set. A common approach is to pool representations, either by mean pooling, which is akin to OKO, or max pooling \citep{set-miml,set-image-to-pixel}. Other approaches include the use of permutation invariant networks \citep{deep-sets} or attention mechanisms \citep{attention-set, Cordonnier2021}. 
Also related are Error-correcting output coding\cite{Dietterich1995_ecoc}, which looks at multiclass classification by training several different binary classifiers. We are unaware of work that leverages set learning for improving the calibration of standard cross-entropy training.

Learning from imbalanced data has a long history in machine learning \citep{japkowicz2002, haibo2009,branco2016survey}. Approaches usually center around resampling the training data \citep{chawla2002_smote,drummond2003,Liu2009} or modifying the loss function \citep{chen04,ting2000,wallace2011,khan2018,cui2019,Lin2020_focal_loss,du2023}, or combinations thereof \citep{huang2016,liu2019,Tian2022}. Transfer learning \citep{Wang2017,Zhong2019,Parisot_2022_CVPR}, self-supervised learning \citep{yang2020,kang2021exploring}, or ensembles of experts \citep{collell2018simple,long_tail_experts,dermatology_2022,cui2023, mixture_of_experts} can also be helpful for rare classes.
Our method is a novel way to improve performance on imbalanced data at excellent calibration.

\section{Method}
\label{sec:oko}
Here we present the core contribution of this work, \emph{odd-$k$-out} training (OKO). In OKO a model is simultaneously presented with multiple data points. At least two of these data points are from the same class, while the remaining $k$ data points are each from a different class, i.e., the odd-$k$-outs, or \emph{odd classes}. The objective is to predict the \emph{pair class}. This forces a model to consider correlations between sets of examples that would otherwise be ignored in standard, single-example learning.

\paragraph{Notation.} More formally, we are interested in the classification setting on a training set $\mathcal{D} = \{\left(x_1,y_1\right), \ldots, \left(x_n,y_n\right)\} \subset \mathbb{R}^d\times [C]$ of inputs $x_i$ and labels $y_i$ from $C$ classes. The number of odd classes $k$ is chosen such that $k +1 \le C$. We construct an OKO training example $\gS$ as follows:

Let $\mathcal{X}_c$ be the set of all training inputs, $x_i$, such that $y_i = c$. One first, uniformly at random, selects a label $y'\in [C]$ and sets $y_1'=y_2'=y'$ as the \emph{pair} class. Next $y'_3,\ldots,y'_{k+2}$ are sampled uniformly without replacement from $[C]\setminus \left\{y'\right\}$ as the \emph{odd} classes. Finally $x'_1,\ldots,x'_{k+2}$ are selected uniformly at random from $\mathcal{X}_{y'_1},\ldots,\mathcal{X}_{y'_{k+2}}$, while enforcing $x'_1\neq x'_2$. So $x'_1$ and $x'_2$ have the same class label, $y'$, and $x'_3,\ldots,x'_{k+2}$ all have unique class labels not equal to $y'$. A training example is then $\gS = \left((x'_1, y_1'),\ldots, \left(x'_{k+2},y'_{k+2}\right) \right).$ Let $\gS_x := \left(x'_1,\ldots,x'_{k+2} \right)$ and $\gS_y = \left(y'_1,\ldots,y'_{k+2} \right)$. Alg. \ref{alg:oko_batch_balancing} describes the sampling process. The distribution of $\gS$ according to Alg. \ref{alg:oko_batch_balancing} is $\mathscr{A}$.

\begin{algorithm}  
    \caption{$\mathscr{A}$ - \texttt{OKO} set sampling}
    \label{alg:oko_batch_balancing}
    \begin{algorithmic}
    \Require $\mathcal{D}, C, k$ \Comment{$C$ is the number of classes and $k$ is the number of odd classes}
    \Ensure $\gS_x, \gS_y, y'$  
    \State $y' \sim \mathcal{U}\left([C]\right)$ \Comment{Sample a pair class for constructing the set}
    \State $y'_1 \leftarrow y', y'_2 \leftarrow y'$
    \State $y'_3,\ldots,y'_{k+2} \simnr \mathcal{U}\left([C]\setminus \left\{y' \right\} \right)$
    \Comment{Sample $k$ odd classes \emph{without replacement}}
    \For{$i=3,\ldots,k+2$}
    \State $x'_i \leftarrow \mathcal{U}\left(\mathcal{X}_{y'_i}\right)$ \Comment{Choose a representative input for each of the $k+2$ set members}
    \EndFor
    \State $\mathcal{S}_x \leftarrow \left(x'_1,\ldots,x'_{k+2}\right)$; $\mathcal{S}_y \leftarrow \left(y'_1,\ldots,y'_{k+2}\right)$
    \end{algorithmic}
\end{algorithm}

\paragraph{OKO objective} For a tuple  of vectors, $\gS_x := \left(x'_1,\ldots,x'_{k+2} \right)$ and a neural network function $f_\theta$ parameterized by $\theta$, we define  
$f_\theta(\gS_x) \coloneqq \sum_{i=1}^{k+2} f_\theta\left(x'_i\right)$ and the following \emph{soft} loss for a fixed set $\gS$:
\begin{align}
  \ell_{\operatorname{oko}}^{\soft}\left(\gS_y,f_\theta\left(\gS_x\right) \right)\coloneqq -(\left(k+2\right)^{-1}\sum_{i=1}^{k+2}\ve_{y'_i})^\top \log \left[ \softmax\left(f_{\theta}\left(\gS_x\right)\right)\right],
\end{align}
where  $\ve_{a} \in \R^{C}$ is the indicator vector at index $a$ and $\softmax$ denotes the softmax function. The soft loss encourages a model to learn the distribution of all labels in the set $\mathcal{S}$. One may also consider the case where the network is trained to identify the most common class $y'$, yielding the \emph{hard} loss:
\begin{align}
  \ell_{\operatorname{oko}}^{\hard}\left(\gS_y, f_\theta\left(\gS_x \right) \right)\coloneqq -\ve_{y'}^\top \log \left[ \softmax\left(f_\theta(\gS_x)\right)\right].
\label{eq:oko_objective}
\end{align}
In ablation experiments, we found the hard loss to always outperform the soft loss and have thus chosen not to include experimental results for the soft loss in the main text (see Appx.~\ref{appx:exp_results-ablation-hardvssoft} for further details). 
For OKO set sampling, $\gS = (\gS_x, \gS_y) \sim \mathscr{A}$, the empirical risk is
$
  \E_{\gS \sim \mathscr{A}}\left[\ell_{\operatorname{oko}}^{\soft}\left(\gS_y,f_\theta\left(\gS_x\right) \right) \right].
$

\section{Properties of OKO}
\label{sec:oko_properties}

Here we theoretically analyze aspects of the OKO loss that are relevant to calibration. First, via rigorous analysis of a simple problem setting, we demonstrate that OKO implicitly performs regularization by preventing models from overfitting to regions with few samples, thereby lowering certainty for predictions in those regions. We refer to these regions as \emph{low-entropy} regions, where, for all inputs $x$ in such a region, $p(y|x)$ has most of its probability mass assigned to one class. Second, we introduce and analyze a novel measure for calibration that is based on the model output entropy rather than label entropy. This has the advantage of directly examining the cross-entropy error as a function of the model uncertainties and evaluate its correspondence. Additionally, in Appx.~\ref{appx:landscape}, we include an analysis of a simplified loss landscape of OKO, and show that it less strongly encourages logit outputs to diverge compared to vanilla cross-entropy, while allowing more flexibility than label smoothing. %TODO ref

\noindent{\bf{OKO is less certain in low-data regions.}} Imagine a dataset where the majority of the data is noisy. Specifically, most of the data share the same feature vector but have different class labels --- \emph{one-feature-to-many-classes}, and each class has $0<\epsilon\ll 1$ fraction of data points in a low-entropy region in which the data points are clustered together by one class label --- \emph{many-features-to-one-class}.

In such a high-noise dataset %--- where most labels are random --- 
it is likely that the low-entropy regions are mislabeled. If $f_\theta$ has high capacity and was fitted via vanilla regression, it would overfit to low-entropy regions by classifying them with high certainty since those examples are well-separated from the noise. As mentioned in the previous section, even label smoothing only slightly alleviates overfitting \citep{label_smoothing}.

%Let $x_c$ be the common feature vector for class $c$, and for simplicity, we assume that every class has $1-\epsilon$ proportion of its data points as $x_c$. That is, almost all data are noisy. When sampling sets of data, a $\left(1 - \epsilon\right)^{k+2}$ portion of the sets will be (repeats of) the singleton set $\{x_c\}$. This implies that after training, $f_\theta(x_c)$ must be near-uniform for sufficiently small $\epsilon$. Surprisingly, with OKO the learned model parameters extend the uncertainty in the noisy majority to all data points in the classes. We consider the first order terms: after ignoring all singleton sets $\{x_c\}$, the most common sets would be pair sets of the form $\{x_c, x_i\}$, where $x_i \neq x_c$ belongs to some class $C_i$. Since all classes have the same proportions, all pair sets are equally likely and our loss, restricted to these sets, is by symmetry
%$$ 
    %\sum_{j \in [C]} \ve_j^\top \sum_{x_i} \log(\softmax((k+1)f_\theta(x_c) + f_\theta(x_i))).
%$$
%Now for a fixed $x_i$, the loss will try to minimize $\sum_j \ve_j^\top \log(\softmax((k+1)f_\theta(x_c) + f_\theta(x_i)))$, implying that $f_\theta(x_i)$ will also have high entropy as $j$ is summed through every label.
A simple example should illustrate the previously mentioned setting: We will demonstrate that, for this example, the OKO method assigns low certainty to low-entropy regions. To this end we will consider a binary classification problem on an input space consisting of three elements. Let $\sF$ be the set of all functions in $\{0,1,2\} \mapsto \R^2$, this is analogous to the space of all possible models that can be used to classify, e.g., $f_\theta$ from before.

Now let $\mathscr{A}_\epsilon$ with $\epsilon \in [0,1]$ be defined as in Alg.~\ref{alg:oko_batch_balancing} where the proportion of the training data,  $\left(x_1,y_1\right),\ldots,\left(x_n,y_n\right)$, having specific values is defined in Table \ref{tab:toy}. Note that $n$ does not matter for the results that we present here.
\begin{wraptable}[5]{r}{0.4\linewidth}
    \centering
    \begin{tabular}{c|c|c}
         & $y_i = 1$ & $y_i=2$\\\hline
        $x_i$ = 0 & $\left(1-\epsilon\right)/2$ &$\left(1-\epsilon\right)/2$\\\hline
        $x_i$ = 1 &$\epsilon/2$  &0 \\\hline
        $x_i$ = 2 & 0  &$\epsilon/2$
    \end{tabular}
    \caption{PMF for Theorem \ref{thm:oko-minimizer}}
    \label{tab:toy}
\end{wraptable}
For $0 <\epsilon\ll 1$ this indicates that the vast majority of the data has $x=0$ with equal probability of both labels. The remainder of the data is split between $x=1$ and $x=2$ where the label always matches the feature and is thus a zero-entropy region. For this setting, we introduce the following theorem which we proof in Appx.~\ref{appx:oko-minimizer}.
\begin{theorem} \label{thm:oko-minimizer}
 For all $\epsilon \in (0,1)$ there exists $f_\epsilon$ such that
    \begin{equation}
        f_\epsilon \in \arg \min_{f \in \sF} \E_{\gS \sim \mathscr{A}_\epsilon}\left[\ell_{\operatorname{oko}}^{\hard}\left(\gS_y,f_\theta\left(\gS_x\right) \right) \right].
    \end{equation}
    Furthermore, for any collection of such minimizers indexed by $\epsilon$, $f_\epsilon$, as $\epsilon \to 0$, then $\softmax\left(f_\epsilon \left(0\right)\right) \to [1/2,1/2]$, $\softmax\left(f_\epsilon \left(1\right)\right) \to [2/3,1/3]$, and $\softmax\left(f_\epsilon \left(2\right)\right) \to [1/3,2/3]$.
\end{theorem}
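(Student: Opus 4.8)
The plan is to reduce the minimization over $\sF$ to a finite-dimensional convex program in the logit gaps, use the symmetry of the data law to collapse it to one scalar unknown, and read the limit off the first-order condition. First I would change coordinates: since $\softmax$ is invariant under adding a constant vector, each $f\in\sF$ is determined for our purposes by $u=f(0)_1-f(0)_2$, $v=f(1)_1-f(1)_2$, $w=f(2)_1-f(2)_2$; writing $\sigma(t)=(1+e^{-t})^{-1}$ we have $\softmax(f(x))=(\sigma(g_x),\sigma(-g_x))$ with $g_0=u,\ g_1=v,\ g_2=w$, and $\softmax\big(\sum_i f(x'_i)\big)=(\sigma(s),\sigma(-s))$ with $s=\sum_i g_{x'_i}$. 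Next I would write out $\mathscr{A}_\epsilon$ explicitly: it picks the pair class $y'\in\{1,2\}$ uniformly, draws two inputs whose feature values follow the class-$y'$ conditional law ($0$ with probability $1-\epsilon$, and $1$ or $2$ with probability $\epsilon$ according as $y'=1$ or $y'=2$), and one input from the other class's conditional law; the constraint $x'_1\neq x'_2$ perturbs the feature-pair law only by $o(1)$ as $n\to\infty$ and does not affect the claim, so the two pair draws may be taken independent. Enumerating the finitely many feature multisets with their probabilities (polynomials in $\epsilon$) and substituting into $\ell_{\operatorname{oko}}^{\hard}$ produces the objective $\Phi(u,v,w):=\E_{\gS\sim\mathscr{A}_\epsilon}[\ell_{\operatorname{oko}}^{\hard}(\gS_y,f(\gS_x))]$ as an explicit finite sum of terms $\beta\big(-\log\sigma(\pm\langle\ell,(u,v,w)\rangle)\big)$ with $\beta>0$.

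Then I would settle existence and exploit symmetry. Each summand is convex, and the linear functionals that occur include $3u$, $2u+v$, $2u+w$, which span $\R^3$; hence $\Phi$ is strictly convex and coercive, so for every $\epsilon\in(0,1)$ it has a unique minimizer — this gives the existence assertion and shows every minimizer yields the same $\softmax$ outputs. The data law is invariant under relabeling $y\mapsto 3-y$ together with the feature swap $1\leftrightarrow 2$; this acts on coordinates by $(u,v,w)\mapsto(-u,-w,-v)$ and leaves $\mathscr{A}_\epsilon$, hence $\Phi$, invariant, so by uniqueness the minimizer is a fixed point: $u_\epsilon=0$ and $w_\epsilon=-v_\epsilon$. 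In particular $\softmax(f_\epsilon(0))=(\sigma(0),\sigma(0))=[1/2,1/2]$ for all $\epsilon$, and everything reduces to the single scalar $v_\epsilon$.

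Restricting to $u=0,\ w=-v$ and collecting terms gives
\[
L(v):=\Phi(0,v,-v)=\alpha\log 2+\beta\big(-\log\sigma(v)\big)+\gamma\big(-\log\sigma(-v)\big)+\delta\big(-\log\sigma(2v)\big),
\]
where $\alpha,\beta,\gamma,\delta\ge 0$ are explicit polynomials in $\epsilon$ with $\alpha+\beta+\gamma+\delta=1$ and $\beta,\gamma,\delta=\Theta(\epsilon)$ (and $-\log\sigma(0)=\log 2$). Using $(-\log\sigma(t))'=\sigma(t)-1$, the stationarity equation $L'(v_\epsilon)=0$ factors as $\epsilon\,h(v_\epsilon,\epsilon)=0$ with $h$ jointly continuous and $h(v,0)=3\sigma(v)-2$. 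Since $h(\cdot,0)$ is strictly increasing with unique zero $v=\log 2$ and $\partial_v h(\log 2,0)>0$, the implicit function theorem (equivalently: the family $(v_\epsilon)$ is bounded, since otherwise $h(v_\epsilon,\epsilon)\to 1$ or $-2$, and every subsequential limit solves $3\sigma(v)=2$) yields $v_\epsilon\to\log 2$ as $\epsilon\to 0$. Hence $\softmax(f_\epsilon(1))=(\sigma(v_\epsilon),\sigma(-v_\epsilon))\to[2/3,1/3]$ and $\softmax(f_\epsilon(2))=(\sigma(-v_\epsilon),\sigma(v_\epsilon))\to[1/3,2/3]$, which finishes the argument.

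I expect the main obstacle to be the bookkeeping in the first step — enumerating $\mathscr{A}_\epsilon$ correctly and justifying that $x'_1\neq x'_2$ may be dropped (its effect vanishes as $n\to\infty$, and $n$ is irrelevant to the statement) so the pair draws are independent — together with upgrading "set $\epsilon=0$ in the first-order condition" to a genuine statement about $\lim_{\epsilon\to 0}f_\epsilon$, which is exactly where the coercivity/strict-convexity facts and the continuity (implicit-function) argument are needed. Everything else is routine calculus.
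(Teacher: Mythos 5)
Your proof is correct, and while it shares the paper's overall skeleton (pass to logit differences so that only three scalars matter, enumerate the finitely many sets under $\mathscr{A}_\epsilon$, then take $\epsilon\to 0$ limits of minimizers), the middle of your argument is genuinely different from the paper's. The paper proves existence of minimizers by an ad hoc boundedness-and-subsequence argument, then establishes $a_{\epsilon,0}\to 0$, boundedness of $a_\epsilon$, and finally convergence to $a^\star=[0,-\log 2,\log 2]$ through three successive proofs by contradiction that compare $\epsilon^{-1}Q_\epsilon(a_\epsilon)$ with $\epsilon^{-1}Q_\epsilon(a^\star)$ and invoke strict convexity of the order-$\epsilon$ block of terms. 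You instead observe that the objective is globally strictly convex and coercive (hence has a unique minimizer in the $(u,v,w)$ coordinates, which also settles the ``any collection of minimizers'' clause since the softmax outputs are determined by these coordinates), and then exploit the symmetry of Table 1 under swapping labels $1\leftrightarrow 2$ together with features $1\leftrightarrow 2$ to conclude $u_\epsilon=0$ and $w_\epsilon=-v_\epsilon$ exactly, for every $\epsilon$. This collapses the limit analysis to a single scalar first-order condition, whose $\epsilon\to 0$ behaviour is a one-line implicit-function (or monotonicity-plus-subsequence) computation yielding $v_\epsilon\to\log 2$; your $h(v,0)=3\sigma(v)-2$ matches what one gets from the paper's Table 2 (rows 4, 5, 7, 8 contribute $-\log\sigma(v)$ and rows 3, 6 contribute $-\log\sigma(-v)$ at order $\epsilon$). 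The symmetry argument buys exactness of $[1/2,1/2]$ at $x=0$ for all $\epsilon$ and a much shorter limit step; the paper's brute-force route does not require noticing the symmetry and would survive asymmetric perturbations of the table. Two small loose ends in your write-up, neither fatal: coercivity does not follow from the spanning claim alone but from the fact that every linear functional in the sum also occurs with the opposite sign (e.g.\ rows 1 versus 2, rows 3 versus 7), which you should say explicitly; and $\delta=\epsilon^2(1-\epsilon)$ is $\Theta(\epsilon^2)$, not $\Theta(\epsilon)$ --- fortunately your formula for $h(v,0)$ already (correctly) drops that term.
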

The key from Theorem \ref{thm:oko-minimizer} is that, although $x=1$ or $x=2$ have zero entropy and are low-entropy regions, OKO is still uncertain about these points because they occur infrequently. This may be interpreted as the network manifesting epistemic uncertainty (label uncertainty in an input region due to having few training samples) as aleatoric uncertainty (uncertainty in an input region due to the intrinsic variance in the labels for that region) in the OKO test time outputs. The desirability of this property may be somewhat dependent on the application. While it is conceivable that such regions do indeed have low aleatoric uncertainty, and that the few samples do indeed characterize the entire region, for safety-critical applications it is often desirable for the network to err towards uncertainty. 

\noindent{\bf{Relative Cross-Entropy for Calibration.}} We introduce an entropy-based measure of sample calibration and demonstrate empirically that it is a useful measure of calibration, along with theoretical justification for its utility. In a sense, our measure is inspired by the log-likelihood scoring function and is a normalized scoring rule that gives sample-wise probabilistic insight (for details see Appx.~\ref{appx:calibration}).

\begin{definition}
\label{def:rc}
    Let the relative cross-entropy of distributions $P, Q$ be $RC(P, Q) = H(P, Q) - H(Q)$.
\end{definition}

Since $RC$ can be computed for each $(y, \hat{y})$ datapoint, it is a scoring rule. The relative cross-entropy is very similar to KL divergence but with a different entropy term. However, unlike the KL divergence, it is not always non-negative. In fact, note that if an incorrect prediction is overconfident, then $RC(y, \hat{y}) \to \infty$ is extremely positive, implying that $RC$ captures some measure of excess confidence. Specifically, we can show when the predictions are inaccurate, we have a provable deviation.
\begin{lemma} \label{lem:hard-rc}
For hard labels $y$, if $\ve_y^\top \hat{y} \leq 1/ |C|$, then $RC(y, \hat{y}) \geq 0$.
\end{lemma}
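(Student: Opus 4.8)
The plan is to unwind the definition of $RC$ for a one-hot target and then bound the two resulting terms separately against $\log |C|$. First I would note that when $y$ is a hard label the associated distribution is the indicator vector $\ve_y$, which is deterministic and hence has zero Shannon entropy, $H(\ve_y) = 0$. Plugging $P = \ve_y$ and $Q = \hat y$ into Definition~\ref{def:rc} gives $RC(y,\hat y) = H(\ve_y,\hat y) - H(\hat y)$, and the cross-entropy term collapses to a single coordinate: $H(\ve_y,\hat y) = -\ve_y^\top \log \hat y = -\log(\ve_y^\top \hat y)$. So the claim reduces to showing $-\log(\ve_y^\top \hat y) \ge H(\hat y)$ whenever $\ve_y^\top \hat y \le 1/|C|$.

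Next I would bound each side by $\log|C|$. Since $t \mapsto -\log t$ is strictly decreasing on $(0,1]$, the hypothesis $\ve_y^\top \hat y \le 1/|C|$ immediately yields $-\log(\ve_y^\top \hat y) \ge -\log(1/|C|) = \log|C|$. For the other side, $\hat y$ is a probability vector supported on $|C|$ outcomes, so by the standard maximum-entropy bound (an immediate consequence of Jensen's inequality, or equivalently of the nonnegativity of the KL divergence from $\hat y$ to the uniform distribution) we have $H(\hat y) \le \log|C|$, with equality iff $\hat y$ is uniform. Chaining these two inequalities gives $-\log(\ve_y^\top \hat y) \ge \log|C| \ge H(\hat y)$, i.e. $RC(y,\hat y) = -\log(\ve_y^\top \hat y) - H(\hat y) \ge 0$, as desired.

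There is no real obstacle here — the whole argument is a two-step comparison with $\log|C|$ — so the only things worth a sentence of care are the degenerate cases. If $\ve_y^\top \hat y = 0$ then $H(\ve_y,\hat y) = +\infty$ and the inequality holds trivially, consistent with the paper's remark that an overconfident incorrect prediction sends $RC \to \infty$; and one should make sure to invoke the max-entropy fact $H(\hat y) \le \log|C|$ explicitly rather than leave it implicit. I would also remark, as a sanity check on the statement, that the threshold $1/|C|$ is exactly the probability the uniform predictor assigns to each class, so the lemma says: any predictor that is at least as ``wrong'' on the true class as the uniform predictor incurs nonnegative relative cross-entropy.
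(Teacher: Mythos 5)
Your proof is correct, but it takes a genuinely different and more elementary route than the paper's. You decompose $RC(y,\hat y) = -\log(\ve_y^\top \hat y) - H(\hat y)$ and then bound each term separately against the single pivot $\log|C|$: the hypothesis gives $-\log(\ve_y^\top\hat y)\ge\log|C|$, and the global max-entropy bound gives $H(\hat y)\le\log|C|$, so the difference is nonnegative. The paper instead keeps the two terms coupled: it writes $RC = (\hat y_i-1)\log\hat y_i - \sum_{j\ne i}(-\hat y_j\log\hat y_j)$ and applies the max-entropy principle only to the \emph{renormalized} distribution on $[C]\setminus\{i\}$ (which carries mass $1-\hat y_i$ over $|C|-1$ outcomes), arriving at $RC \ge (1-\hat y_i)\bigl[\log\bigl(\tfrac{1-\hat y_i}{|C|-1}\bigr)-\log(\hat y_i)\bigr]$, whose nonnegativity is then exactly equivalent to $\hat y_i\le 1/|C|$. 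What each buys: your argument is shorter and makes transparent why $1/|C|$ is the natural threshold (it is where the true-class cross-entropy term crosses the maximal possible entropy), while the paper's conditional-entropy bound is quantitatively sharper --- it lower-bounds $RC$ by a positive quantity whenever $\hat y_i<1/|C|$, improving on your $-\log(\hat y_i\,|C|)$ by roughly the $(1-\hat y_i)$-weighted replacement of $|C|$ with $|C|-1$ --- which matters if one wants to track how fast $RC\to\infty$ for overconfident wrong predictions. Your handling of the degenerate case $\ve_y^\top\hat y=0$ and the explicit invocation of $H(\hat y)\le\log|C|$ are both appropriate; no gaps.
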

Furthermore, we show that $RC$ captures some notion of calibration when averaged across all data points. Specifically, when a predictor is perfectly calibrated, its average $RC$, a measure of excess confidence, should be 0. Note that $RC$ is no longer proper due to this zero mean.

\begin{lemma} \label{lem:perfect}
If $\hat{y}$ is a predictor that is perfectly calibrated across $\gD$, then the average excess confidence, as measured by relative cross-entropy, is $\mathbb{E}_{(x, y) \sim \gD}[RC(y, \hat{y}(x)) ]  = 0$
\end{lemma}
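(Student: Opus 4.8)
The plan is to unfold the definition of $RC$, split the expectation into its two constituent terms, and then condition on the value of the predicted distribution so that the calibration hypothesis can be applied directly. Writing $\hat{y}$ for $\hat{y}(x)$ and recalling that $y$ is the one-hot label, we have $RC(y,\hat{y}) = H(y,\hat{y}) - H(\hat{y}) = -\ve_y^\top \log \hat{y} - H(\hat{y})$, so that
\[
\mathbb{E}_{(x,y)\sim\gD}\!\left[RC(y,\hat{y}(x))\right] = \mathbb{E}\!\left[-\ve_y^\top \log \hat{y}(x)\right] - \mathbb{E}\!\left[H(\hat{y}(x))\right].
\]
It therefore suffices to show that the expected cross-entropy term equals the expected entropy term.

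First I would apply the tower property, conditioning on the random distribution $Q := \hat{y}(x)$. For the cross-entropy term this gives
\[
\mathbb{E}\!\left[-\ve_y^\top \log \hat{y}(x)\right] = \mathbb{E}_Q\!\left[\,\mathbb{E}\!\left[-\log Q_y \,\middle|\, Q\right]\right] = \mathbb{E}_Q\!\left[ -\sum_{c \in [C]} \Pr(y = c \mid Q)\, \log Q_c\right].
\]
Here is the one substantive step: the definition of perfect calibration across $\gD$ is precisely that, conditioned on the predictor outputting distribution $Q$, the label is distributed according to $Q$, i.e.\ $\Pr(y = c \mid Q) = Q_c$ for every class $c$. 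Substituting this identity collapses the inner sum to $-\sum_{c} Q_c \log Q_c = H(Q)$, whence $\mathbb{E}[-\ve_y^\top \log \hat{y}(x)] = \mathbb{E}_Q[H(Q)] = \mathbb{E}[H(\hat{y}(x))]$, and therefore $\mathbb{E}_{(x,y)\sim\gD}[RC(y,\hat{y}(x))] = 0$.

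The only obstacle is bookkeeping rather than mathematics. One must check that the notion of ``perfectly calibrated'' being invoked matches the formal definition in Appx.~\ref{appx:calibration} (full multiclass, distribution-level calibration, not merely top-label calibration), and one must handle the case where $Q$ has a continuous distribution, which is exactly what the conditional-expectation argument above does; no regularity beyond measurability of $\hat{y}$ is needed since the integrands are bounded whenever $\hat{y}$ stays in the interior of the simplex. If $\hat{y}$ placed zero mass on a class that $y$ can take with positive probability, $RC$ would be $+\infty$ on that event, but such a predictor fails to be calibrated on that region, so the hypothesis already excludes this degenerate case.
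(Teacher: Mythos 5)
Your proposal is correct and follows essentially the same route as the paper's proof, which likewise reduces the claim to the identity $\mathbb{E}_{y\sim \hat{y}}[H(\ve_y,\hat{y})]=H(\hat{y})$ under the calibration hypothesis and concludes that the expectation of $RC$ vanishes. You simply make explicit the tower-property conditioning on $Q=\hat{y}(x)$ and the degenerate boundary cases that the paper's one-line argument leaves implicit.
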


\section{Experimental results}
\label{sec:experiments}

In this section, we present experimental results for both generalization performance and model calibration. In general, model calibration and generalization performance are orthogonal quantities. A classifier can show strong generalization performance while being poorly calibrated, and, vice versa, a classifier can be well-calibrated although its generalization performance is weak. Here, we are equally interested in both quantities.

\noindent {\bf Experimental details.} For every experiment we present in this section, we use a simple randomly-initialized CNN for MNIST and FashionMNIST and ResNet18 and ResNet34 architectures \citep{resnet} for CIFAR-10 and CIFAR-100 respectively. We use standard SGD with momentum and schedule the learning rate via cosine annealing. We select hyperparameters and train every model until convergence on a held-out validation set. To examine generalization performance and model calibration in low data regimes, we vary the number of training data points while holding the number of test data points fixed. %We consider very small subsets of the training data as well as subsets whose cardinality is close to that of the entire training data and a few variations in between. 
We report accuracy for the official test sets of MNIST, FashionMNIST, CIFAR-10, and CIFAR-100. We are specifically interested in heavy-tailed class distributions. Since heavy-tailed class distributions are a special rather than a standard classification setting, we report experimental results for both uniform and heavy-tailed class distributions during training. We consider heavy-tailed class distributions with probability mass $p=0.9$ distributed uniformly across three overrepresented classes and $(1-p)=0.1$ distributed across the remaining $7$ or $97$ underrepresented classes respectively. In ablation experiments we have seen that, although odd class examples are crucial, OKO is not sensitive to the particular choice of $k$ (see App.~\ref{appx:exp_results-ablation}). Therefore, we set $k$ in OKO to $1$ for all experiments. Note that $k=1$ results in the computationally least expensive version of OKO. Since preliminary experiments have shown that generalization performance can be boosted by predicting the odd class using an additional classification head, in the following we report results for a version of OKO with $k=1$ where in addition to the pair class prediction (see Eq.~\ref{eq:oko_objective}) a model is trained to classify the odd class with a second classification head that is discarded at inference time. 

For simplicity and fairness of comparing against single example methods, we set the maximum number of randomly sampled sets to the total number of training data points $n_{\text{train}}$ in every setting. This is guaranteed to yield the same number of gradient updates as standard cross-entropy training.

\noindent {\bf Training methods.} Alongside OKO, we consider six different baseline methods for comparing generalization performance and seven different methods for investigating model calibration: 1.) Standard maximum-likelihood estimation (see Eq.~\ref{eq:vanilla_xent} in Appx.~\ref{appx:problem}), 2.) Vanilla + label smoothing (LS; \citealp{label_smoothing}), 3.) Focal Loss \citep{focal_loss}, 4.) Cross-entropy error reweighting (see Eq.~\ref{eq:weighted_xent} in Appx.~\ref{appx:error_reweighting}),  5.) Batch-balancing (BB; see Alg.~\ref{alg:batch_balancing} in Appx.~\ref{appx:batch-balancing}), 6.) BB + LS, 7.) BB + temperature scaling (TS; $\tau=2.0$). We consider label smoothing because it yields significantly better calibration than using hard labels for training neural nets and equivalent model calibration to temperature scaling \citep{label_smoothing}. We deliberately ignore temperature scaling for generalization performance analyses because it does not change the $\argmax$ of a classifier's predicted probability distribution after training and therefore yields the same test accuracy as BB.
\begin{figure}[ht!]
% \vspace{-1em}
    \centering
    \includegraphics[width=1.\textwidth]{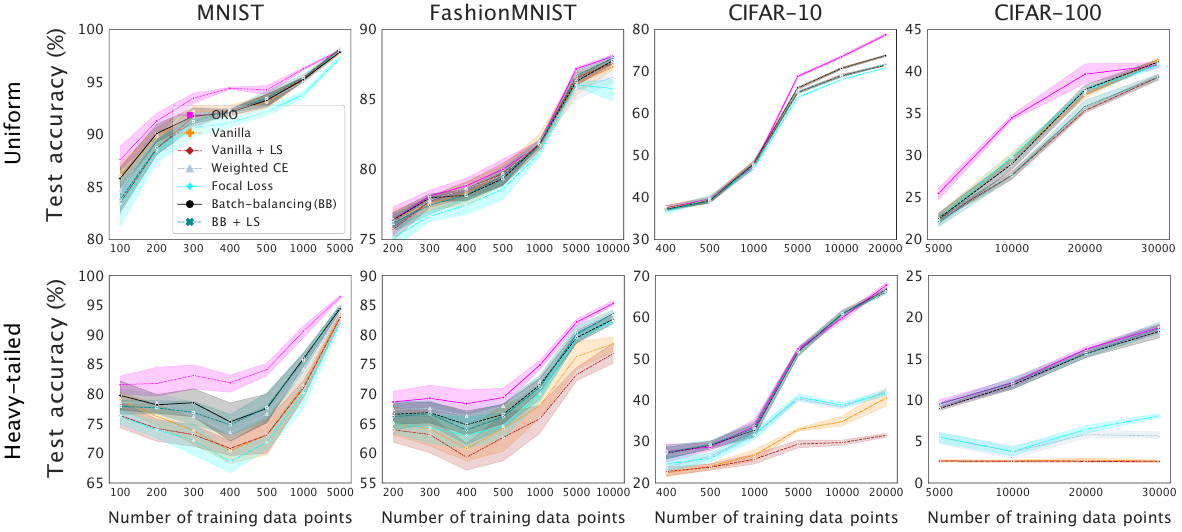}
\caption{Test set accuracy in \% as a function of different numbers of data points used during training. Error bands depict 95\% CIs and are computed over five random seeds for all training settings and methods. Top: Uniform class distribution during training. Bottom: Heavy-tailed class distribution.}
\vspace{-1em}
\label{fig:accuracy}
\end{figure}

\paragraph{Generalization.} \label{sec:exp:generalization} For both uniform and heavy-tailed class distribution settings, OKO either outperforms or performs on par with the best baseline approaches considered in our analyses across all four datasets (see Fig.~\ref{fig:accuracy}). 
We observe the most substantial improvements over the baseline approaches for both balanced and heavy-tailed  MNIST, heavy-tailed FashionMNIST, and balanced CIFAR-10 and CIFAR-100.
%\begin{wrapfigure}[17]{r}{0.75\textwidth}
%\vspace{-1em}
%    \centering
%    \includegraphics[width=0.75\textwidth]{figures/accuracy/accuracy_postprocessed.pdf}
%\caption{Test set accuracy in \% as a function of different numbers of data points used during training. Error bands depict 95\% CIs and are computed over five random seeds for all training settings and methods. Top: Uniform class distribution during training. Bottom: Heavy-tailed class distribution.}
%\label{fig:accuracy}
%\end{wrapfigure}
For $10$-shot MNIST OKO achieves an average test set accuracy of $87.62$\%, with the best random seed achieving $90.14\%$. This improves upon the previously reported best accuracy by $8.59\%$ \citep{NEURIPS2022_svms}. For $20$-shot and $50$-shot MNIST, OKO improves upon the previously reported best test set accuracies by $2.85\%$ and $1.81\%$ respectively \citep{NEURIPS2022_svms}. OKO achieves the strongest average generalization performance across all datasets and class distribution settings (see Tab.~\ref{tab:generalization}). Improvements over the other training methods are most substantial for heavy-tailed training settings.

\begin{table}[!ht]
\caption{Test set accuracy averaged across all training settings shown in Fig.~\ref{fig:accuracy}.}
\label{tab:generalization}
\centering
\scriptsize
\begin{tabular}{l|cc|cc|cc|cc}
\toprule
 & \multicolumn{2}{c}{MNIST} & \multicolumn{2}{c}{FashionMNIST} & \multicolumn{2}{c}{CIFAR-10} & \multicolumn{2}{c}{CIFAR-100} \\
Training $\setminus$ Distribution & uniform & heavy-tailed & uniform & heavy-tailed & uniform & heavy-tailed & uniform & heavy-tailed \\
\midrule
Vanilla   &  92.34\% & 78.21\%   & 81.05\% & 68.16\%  & 55.84\% & 30.24\% & 32.72\% &  02.68\% \\
Vanilla + LS & 91.84\% & 77.38\% & 81.10\% &  66.46\% & 55.06\% & 27.16\%  &  31.20\% & 02.62\% \\
Weighted CE &  92.14\% &  80.18\%  & 79.52\% & 71.45\% & 55.74\% & 33.76\%  & 32.42\% & 05.20\% \\
Focal Loss \citep{focal_loss} & 90.98\% & 76.42\% & 80.13\%  & 69.86\% &  54.39\% & 33.89\%  & 32.72\% & 05.98\% \\
BB & 92.31\% & 81.42\% & 81.11\%  & 71.24\% &  55.87\% & 44.69\%  & 32.63\% & 13.67\% \\
BB + LS &  91.86\% & 80.81\% &  81.12\% & 71.13\% & 54.96\% & 44.72\% & 31.26\% &  13.96\% \\
OKO (ours) &  \textbf{93.62\%} & \textbf{85.67\%}   & \textbf{81.49\%} &  \textbf{74.02\%}  & \textbf{57.63\%} & \textbf{44.95\%}  & \textbf{35.11\%} & \textbf{14.13\%} \\
\bottomrule
\end{tabular}
%\vspace{-2em}
\end{table}

\noindent {\bf Calibration.} \label{sec:results-calibration} We present different qualitative and quantitative results for model calibration. Although model calibration is an orthogonal quantity to generalization performance, it is equally important for the deployment of machine learning models.

\noindent {\bf Reliability.} The reliability of a model can be measured by looking at a model's accuracy as a function of its confidence. An optimally calibrated classifier is a model whose predicted class is correct with probability $\hat{p}_{\theta}(x)$, where $\hat{p}_{\theta}(x)$ is the confidence of a model's prediction, i.e., optimal calibration occurs along the diagonal of a reliability diagram (see Fig.~\ref{fig:reliability_balanced}). OKO's reliability lies along the diagonal substantially more often than to any competing method. This is quantified by lower Expected Calibration Errors (see Fig.~\ref{fig:oko};~\ref{fig:ece}) of OKO compared to the other methods. Its calibration is on par with BB + LS or BB + TS in some settings. In Fig.~\ref{fig:reliability_balanced}, we show reliability diagrams for MNIST, FashionMNIST, CIFAR-10, and CIFAR-100 averaged over all training settings using a uniform class distribution. Reliability diagrams for the heavy-tail training settings can be found in Appx.~\ref{appx:exp_results}.

\noindent {\bf Uncertainty.} Entropy is a measure of uncertainty and therefore can be used to quantify the confidence of a classifier's prediction. Here, we examine the distribution of entropies of the predicted probability distributions for individual test data points as a function of (in-)correct predictions. 
\begin{wrapfigure}[24]{r}{0.7\textwidth}
\vspace{-1em}
\centering
    \begin{subfigure}{0.7\textwidth}
        \centering
        \includegraphics[width=1.0\textwidth]{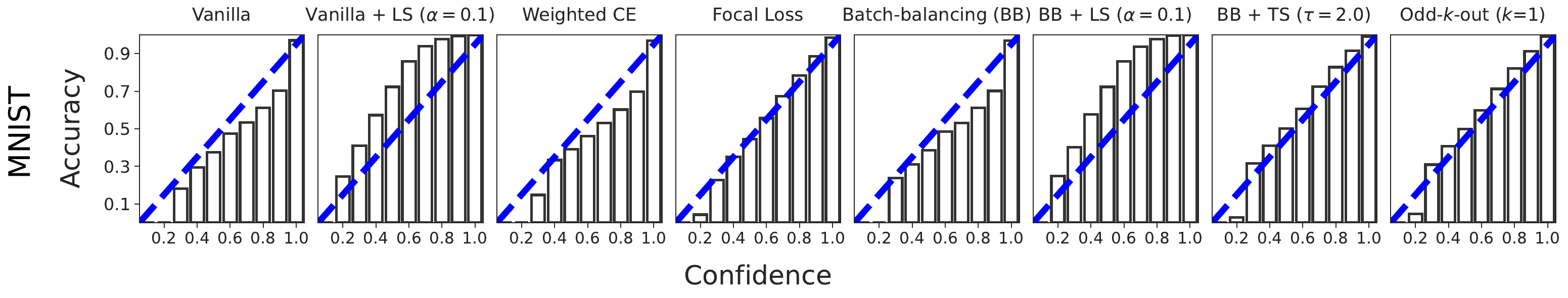}
    \end{subfigure}
    \begin{subfigure}{0.7\textwidth}
        \centering
        \includegraphics[width=1.0\textwidth]{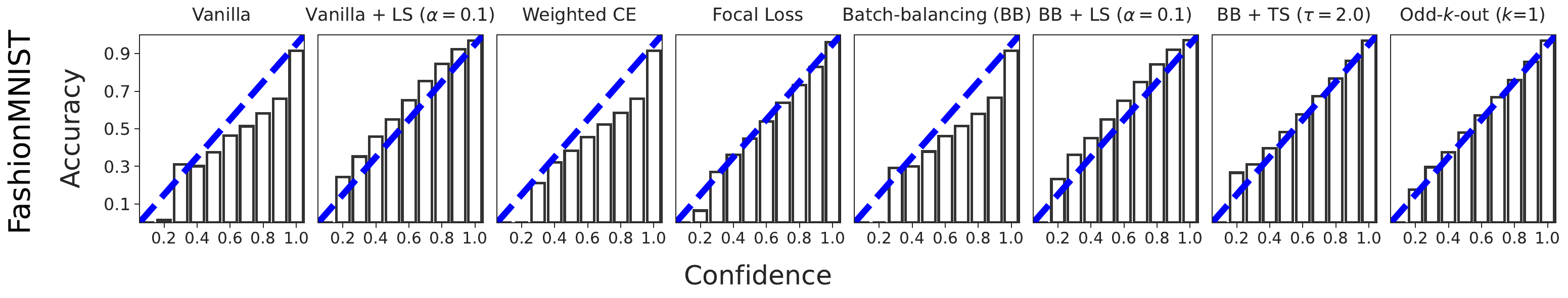}
    \end{subfigure}
    \begin{subfigure}{0.7\textwidth}
        \centering
        \includegraphics[width=1.0\textwidth]{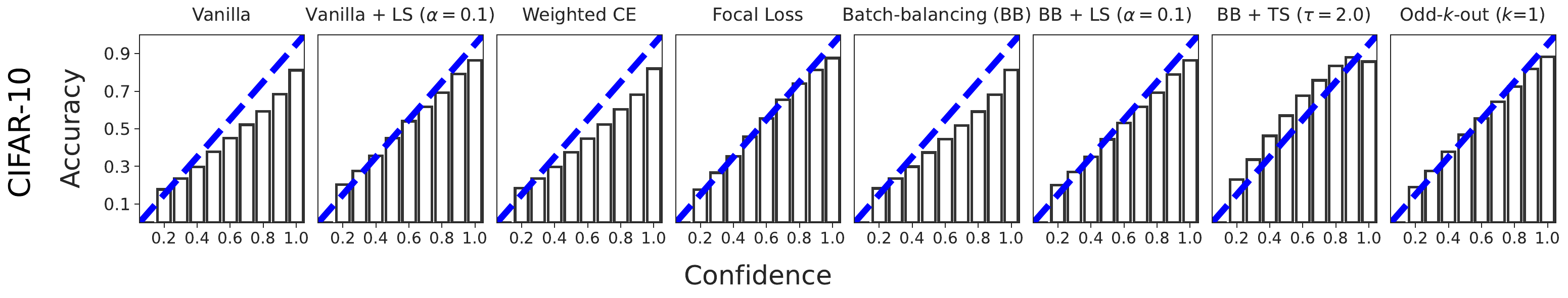}
    \end{subfigure}
    \begin{subfigure}{0.7\textwidth}
        \centering
        \includegraphics[width=1.0\textwidth]{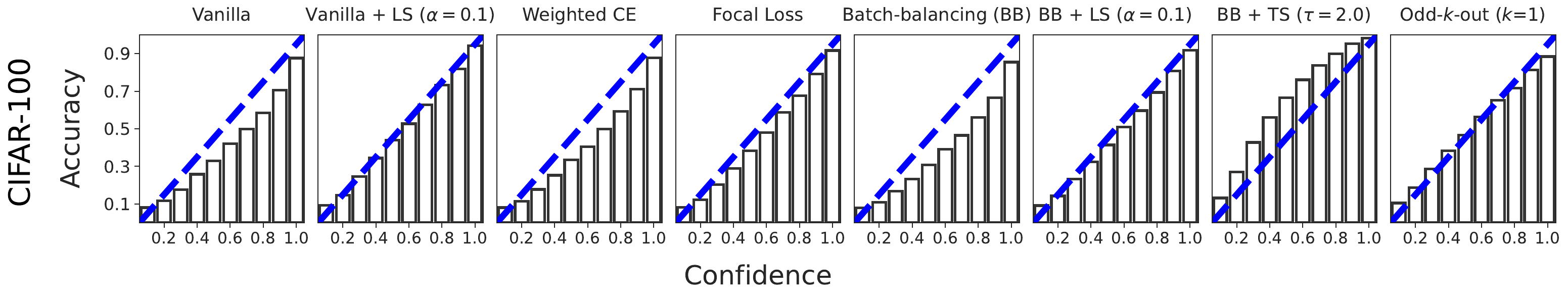}
    \end{subfigure}
\caption{Reliability diagrams for balanced datasets. Confidence and accuracy scores were averaged over random seeds and the number of training data points. Dashed diagonal lines indicate perfect calibration.}
%\vspace{-1em}
\label{fig:reliability_balanced}
\end{wrapfigure}
An optimally calibrated classifier has much density at entropy close to $\log 1$ and little density at entropy close to $\log C$ for correct predictions, and, vice versa, small density at entropy close to $\log 1$ and much density at entropy close to $\log C$ for incorrect predictions, irrespective of whether classes were in the tail or the mode of the training class distribution. In Fig.~\ref{fig:uncertainty_of_predictions}, we show the distribution of entropies of the models' probabilistic outputs partitioned into correct and incorrect predictions respectively for MNIST and FashionMNIST across all training settings with heavy-tailed class distributions.
\begin{wrapfigure}[14]{r}{0.7\textwidth}
\vspace{-1em}
    \centering
    \begin{subfigure}{0.7\textwidth}
        \centering
    \includegraphics[width=1.0\textwidth]{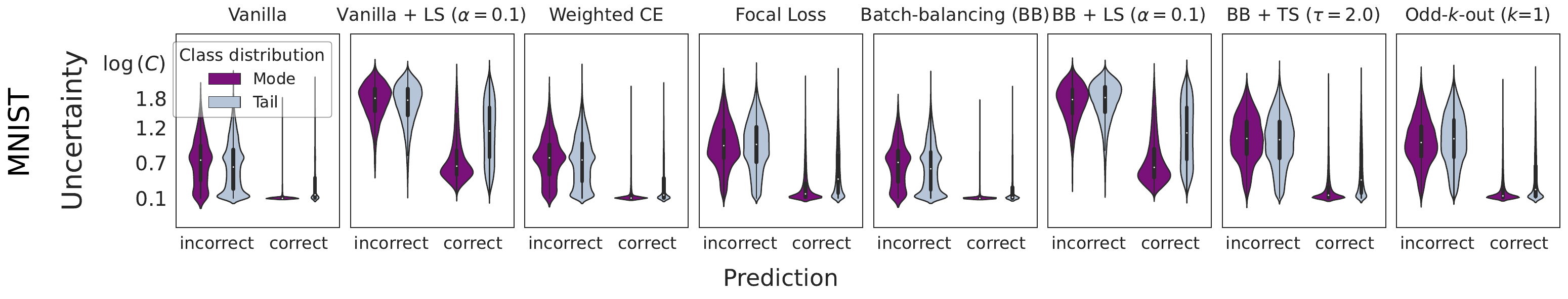}
    \end{subfigure}
    \begin{subfigure}{0.7\textwidth}
        \centering
    \includegraphics[width=1.0\textwidth]{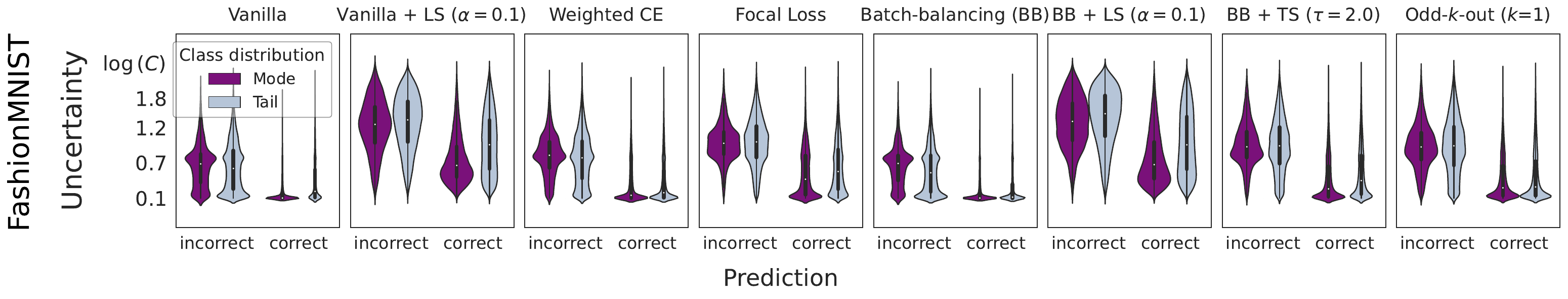}
    \end{subfigure}
\caption{Here, we show the distribution of entropies of the predicted probability distributions for individual test data points across all heavy-tailed training settings partitioned into correct and incorrect predictions respectively.}
%\vspace{-2em}
\label{fig:uncertainty_of_predictions}
\end{wrapfigure}
We observe that label smoothing does alleviate the overconfidence problem to some extent, but is worse calibrated than OKO. More entropy visualizations can be found in Appx.~\ref{appx:exp_results}.

\noindent {\bf ECE.} ECE is a widely used scoring rule to measure a classifier's calibration. It is complementary to reliability diagrams (see Fig.~\ref{fig:reliability_balanced}) in that it quantifies the reliability of a model's confidence with a single score, whereas reliability diagrams qualitatively demonstrate model calibration. A high ECE indicates poor calibration, whereas a classifier that achieves a low ECE is generally well-calibrated. Aside from CIFAR-100 where batch-balancing in combination label smoothing shows slightly lower ECEs than OKO, OKO achieves lower ECE scores than any other method across training settings (see Fig.~\ref{fig:oko} in \S\ref{sec:intro} and Fig~\ref{fig:ece} in Appx.~\ref{appx:exp_results}).
\begin{table}[ht!]
\caption{MAE between entropies and cross-entropies averaged over the entire test set for different numbers of training data points. Lower is better and therefore bolded.}
\label{tab:xent_vs_entropy}
\centering
\scriptsize
\begin{tabular}{l|cc|cc|cc|cc}
\toprule
 & \multicolumn{2}{c}{MNIST} & \multicolumn{2}{c}{FashionMNIST} & \multicolumn{2}{c}{CIFAR-10} & \multicolumn{2}{c}{CIFAR-100} \\
Training $\setminus$ Distribution & uniform & heavy-tailed & uniform & heavy-tailed & uniform & heavy-tailed & uniform & heavy-tailed \\
\midrule
Vanilla   & 0.189 &  0.723  & 0.455 &  1.075 & 0.330 &  1.845 & 0.708 & 2.638\\
Vanilla + LS & 0.475  & 0.342 &  0.243 & 0.119 & 0.230 & 1.158 & 0.236 & 2.171  \\
Weighted CE & 0.207 & 0.558 & 0.505  & 0.758 &  0.315 & 0.366 & 0.705 &  \textbf{0.189} \\
Focal Loss \citep{focal_loss} & \textbf{0.044} & 0.333 & 0.107  & 0.308 &  0.222 & \textbf{0.296} & 0.526 &  0.198 \\
BB &  0.201   & 0.709  & 0.455  & 1.275 &  0.330 &  1.438  &  0.918 & 6.362\\
BB + LS &   0.475  & 0.380 & 0.240 & \textbf{0.114} & 0.225  & 0.471 & 0.337 &  1.141 \\
OKO (ours) &    0.073  & \textbf{0.094}  &  \textbf{0.080}   & 0.334 & \textbf{0.116} & 0.498  & \textbf{0.314}  & 1.164 \\
\bottomrule
\end{tabular}
\end{table}

\noindent {\bf $\mR\mC$.} Here, we demonstrate empirically that our novel entropy-based measure of datapoint calibration is a useful measure of calibration. Following Def.~\ref{def:rc} and Lemma~\ref{lem:perfect} in \S\ref{sec:oko_properties}, we quantify the average excess confidence $RC\left(y, \hat{y}(x)\right)$ by measuring the mean absolute difference (MAE) between $\bar{H}(P, Q)$ and $\bar{H}(Q)$ for the different number of training data point settings (see Fig.~\ref{fig:rc}). We find that OKO achieves the lowest MAE for all balanced training settings and is among the top-2 or top-3 training methods with the lowest MAE for the heavy-tailed training settings (see Tab.~\ref{tab:xent_vs_entropy}).
\begin{figure}[t]
\begin{subfigure}{1.\textwidth}
    \centering
    \includegraphics[width=1.\textwidth]{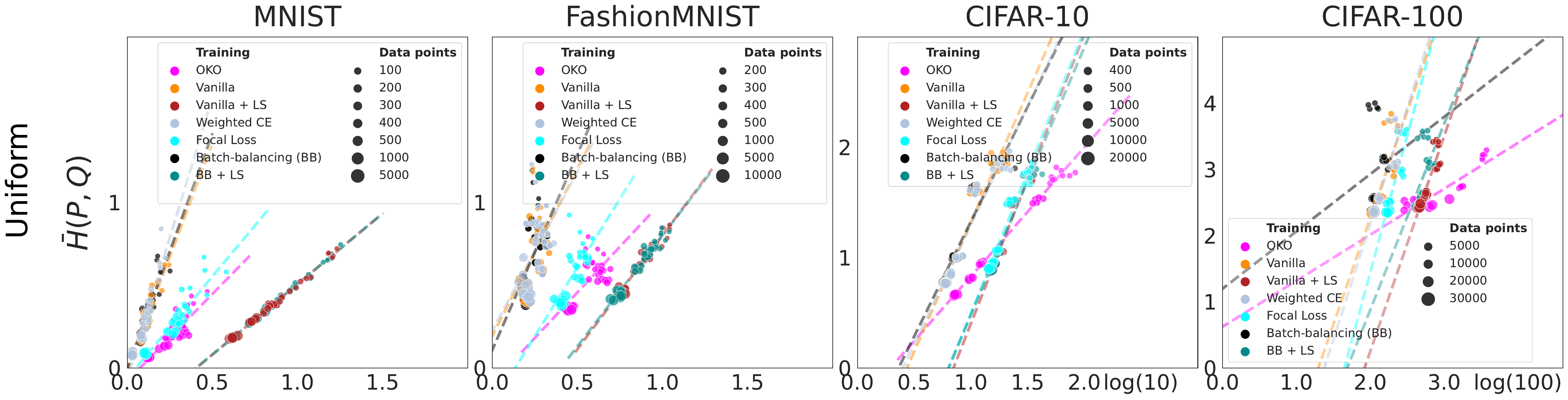}
\end{subfigure}
\begin{subfigure}{1.\textwidth}
    \centering
    \includegraphics[width=1.\textwidth]{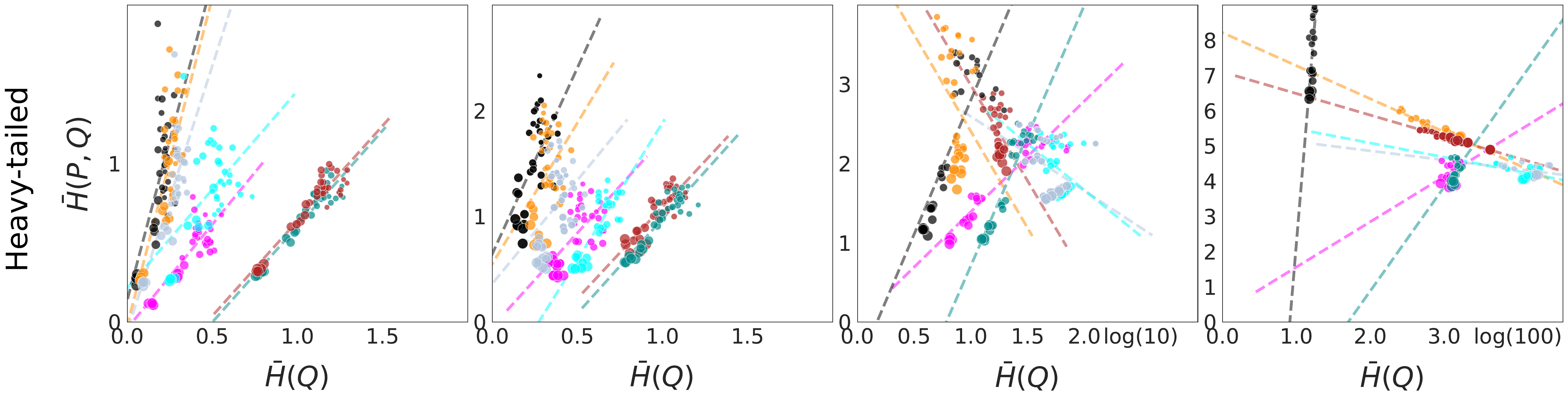}
\end{subfigure}
\caption{For different numbers of training data points, OKO achieves a substantially lower MAE for the average cross-entropy error between true and predicted class distributions and the average entropy of the predictions --- across both uniform and heavy-tailed class distributions during training.}
\label{fig:rc}
\vspace{-1em}
\end{figure}

\vspace{-1em}
\section{Conclusion}
\vspace{-1em}
In standard empirical risk minimization, a classifier minimizes the risk on individual examples; thereby ignoring more complex correlations that may emerge when considering sets of data. Our proposed odd-$k$-out (OKO) framework addresses this caveat --- inspired by the \emph{odd-one-out} task used in the cognitive sciences \citep{hebart2020,muttenthaler2022vice,muttenthaler2023}. Specifically, in OKO, a classifier learns from sets of data, leveraging the odd-one-out task rather than single example classification (see Fig.~\ref{fig:oko}). We find that OKO yields well-calibrated model predictions, being better or equally well-calibrated as models that are either trained with label smoothing or whose logits are scaled with a temperature parameter found via grid search after training (see \S\ref{sec:results-calibration}). This alleviates the ubiquitous calibration problem in ML in a more principled manner. In addition to being well-calibrated, OKO achieves better test set accuracy than all training approaches considered in our analyses (see Tab.~\ref{tab:generalization}). Improvements are particularly pronounced for the heavy-tailed class distribution settings.

OKO modifies the training objective into a classification problem for sets of data. We provide theoretical analyses that demonstrate why OKO yields smoother logits than standard cross-entropy, as corroborated by empirical results. OKO does not require any grid search over an additional hyperparameter. While OKO is trained on sets, at test time it can be applied to single examples exactly like any model trained via a standard single example loss. The training complexity scales linearly in $O(|\mathcal{S}|)$ where $|\mathcal{S}|$ denotes the number of examples in a set and hence introduces little computational overhead during training.

One caveat of OKO is that classes are treated as semantically equally distant --- similar to standard cross-entropy training. An objective function that better reflects global similarity structure may alleviate this limitation. In addition, we remark that we have developed OKO only for supervised learning with labeled data. It may thus be interesting to extend OKO to self-supervised learning.

We expect OKO to benefit areas that are in need of reliable aleatoric uncertainty estimates but suffer from a lack of training data --- such as medicine, physics, or chemistry, where data collection is costly and class distributions are often heavy-tailed.

% \section*{Acknowledgments}
% LM, RV, and KRM acknowledge funding from the German Federal Ministry of Education and Research (BMBF) for the grants BIFOLD22B and BIFOLD23B. LM acknowledges support through the Google Research Collabs Programme. We thank Rodolphe Jenatton for helpful comments on an earlier version of the manuscript.

% \section*{References}
\bibliography{bibliography}
\bibliographystyle{iclr}
\clearpage
\appendix

\section{OKO classification head}
\label{appx:oko_implementation}

Here, we demonstrate how easily OKO can be applied to any neural network model in practice, irrespective of its architecture. OKO does not require an additional set of parameters. OKO essentially is just a sum over the logits in a set of inputs. Below we provide JAX code for the classification head that is used for OKO. During training, logits obtained from the classification head are summed across the inputs in a set. At inference time, the classification head is applied to single inputs just as any standard classification head.

%"mystyle" code listing set
\lstset{style=mystyle}
\begin{lstlisting}[language=Python, caption=OKO classification head implemented in JAX.]
import flax.linen as nn
import jax.numpy as jnp
from einops import rearrange
from jax import vmap

Array = jnp.ndarray

class OKOHead(nn.Module):
    num_classes: int # number of classes in the data
    k: int # number of odd classes in a set

    def setup(self) -> None:
        self.clf = nn.Dense(self.num_classes)
        self.scard = self.k+2 # set card is number of odd classes + 2

    def set_sum(self, x: Array) -> Array:
        """Aggregate the logits across all examples in a set."""
        x = rearrange(x, "(b scard) d -> b scard d", scard=self.scard)
        dots = vmap(self.clf, in_axes=1, out_axes=1)(x)
        set_logits = dots.sum(axis=1) # set sum
        return set_logits

    @nn.compact
    def __call__(self, x: Array, train: bool) -> Array:
        # x \in \mathbb{R}^{b(k+2) \times d}
        if train:
            logits = self.set_sum(x_p)
        else:
            logits = self.clf(x)
        return logits
\end{lstlisting}
\label{appx:oko_head}

\section{Background} 
\label{appx:background} 
\subsection{Problem Setting}\label{appx:problem}
In the heavy-tailed class distribution setting, we are interested in the classification setting where a classifier has access to training data $\mathcal{D} = \{\left(x_1,y_1\right), \ldots, \left(x_n,y_n\right)\} \subset \mathbb{R}^d\times [C]$ consisting of inputs $x_i$ and labels $y_i$ from $C$ classes. For $c\in[C]$, $\mathcal{D}_{c}\subset \mathcal{D}$ will denote those samples with label $c$, so $\mathcal{D} = \bigcup_{c=1}^{C}\mathcal{D}_{c}$. Class imbalance occurs when $\left|\mathcal{D}_c \right|
\gg \left|\mathcal{D}_{c'}\right|$ for some $c$ and $c'$. Let $\ve_{a} \in \R^{C}$ be the indicator vector at index $a$ and $\softmax$ be the softmax function. Let $\mu_n$ be the uniform empirical measure of $\gD$. 

\noindent {\bf Cross-entropy}. The cross-entropy error between the original and the predicted labels is the following risk function when averaged over all $n$ data points in the dataset $\mathcal{D}$, 
\begin{align}
  \mathcal{L}_{\text{vanilla}}\left( \gD,\theta\right)  \coloneqq \E_{\left(X,Y\right) \sim \mu_n}\left[\xent(\ve_Y,\softmax \left( f_{\theta}\left(X\right) \right) \right]= -\frac{1}{n}\sum_{i=1}^{n} \ve_{y_i}^T \log \left[ \softmax \left( f_{\theta}(x_{i}) \right) \right]. 
\label{eq:vanilla_xent}
\end{align}
In the class imbalanced setting optimizing Eq.~\ref{eq:vanilla_xent} is known to produce classifiers that strongly favor common classes and are therefore likely to incorrectly label rare classes during test time. Here we describe a few methods designed to counteract this phenomenon, which we will use as competitors in our experiments.
    
\subsection{Error re-weighting}
\label{appx:error_reweighting}

To counteract the fact that there are fewer terms in the summation in Eq.~\ref{eq:vanilla_xent} for rare classes, one may simply weight those terms more greatly. Let $\hmu_{n,Y} \left( \cdot \right)\coloneqq\mu_n \left(\R^d \times \cdot \right)$ be the empirical distribution over the class labels. In error re-weighting, the terms in Eq.~\ref{eq:vanilla_xent} are weighted inversely to their class frequency, such that the contribution of a sample in class $c$ to the error decreases with its number of unique examples $n_{c}$,
\begin{align}
  \mathcal{L}_{\text{re-weighted}}(\mathcal{D},\theta) \coloneqq \E_{\left(X,Y\right) \sim \mu_n}\left[\frac{\ell(\ve_Y,\softmax \left( f_{\theta}(x_{i}) \right)) }{\mu_{n,Y} \left( Y \right)}\right] \propto -\frac{1}{n}\sum_{i=1}^{n} { \frac{a}{n_{y_i}}}\ve_{y_i}^T \log \left[ \softmax \big( f_{\theta}(x_{i}) \big) \right],
\label{eq:weighted_xent}
\end{align}
where $a \in \mathbb{R}$ is a constant to bring the error term back onto the correct scale to avoid vanishing gradient problems or a learning rate $\eta$ that is unusually large, since $n_{c} \gg 1 \forall c \in \{1,\ldots,C\}$. 

\subsection{Batch-Balancing} \label{appx:batch-balancing}
\begin{algorithm} 
    \caption{Batch-balancing}
    \label{alg:batch_balancing}
    \begin{algorithmic}
    \Require $\mathcal{D}, C, B$ \Comment{$C$ is the number of classes, $B$ is the batch size}
   \State $\mathcal{D} = \bigcup_{c=1}^{C}\mathcal{D}_{c}$ \Comment{$\mathcal{D}$ is the union of its class partitions}
    \State $\mathcal{D}_{c} \coloneqq  (x_{i}^{c}, y_{i}^{c})_{i=1}^{{n}_{c}} \subset \mathcal{X}_{c} \times \mathcal{Y}_{c}$ \Comment{Each class partition contains $n_{c}$ ordered pairs}
    \State $[C] = \{1,\ldots, C\}$ \Comment{$|[C]| = C$}
    \For{$i \in \{1, \hdots, B\} $}
        \State $c \sim \mathcal{U}([C])$ \Comment{Sample a class $c$}
        \State $(\tilde{x}_{i}, \tilde{y}_{i}) \sim \mathcal{U}(\mathcal{D}_{c})$ \Comment{Sample an image and label pair from $\mathcal{D}_{c}$}
    \EndFor
    \Ensure $\tilde{\gD}\coloneqq (\tilde{x}_{i}, \tilde{y}_{i})_{i=1}^{B}$ \Comment{Balanced mini-batch of $B$  image and label pairs}
    \end{algorithmic}
\end{algorithm}

In normal batch construction, a sample for a batch is selected uniformly at random from the entire dataset $\gD$. Compensating for this by including additional copies of samples from rare classes in the training dataset, or during batch construction, is known as \emph{resampling} or \emph{oversampling} \citep{chawla2002_smote,bellinger2020remix}.
The prototypical version of this selects batch samples by first selecting a class $c\in [C]$, uniformly at random, and then selecting a sample from $\gD_c$, uniformly at random. This causes a batch to contain an equal number of samples from each class on average. We term this \emph{batch-balancing}. The works \citep{rethinking-ad,exposing-ad} use a slight modification of this where the stochasticity of the labels for each batch is removed so each batch contains an equal number of samples from each class.

The pseudo-code for batch-balancing is described in Alg. \ref{alg:batch_balancing} in Appx.~\ref{appx:batch-balancing}. Error re-weighting and batch-balancing are not specific to the cross-entropy loss and may be applied to any loss that is the empirical expectation of some loss function. They represent two different ways of remedying class imbalance: one can weigh the rare examples more heavily or one can present the rare examples more often. The following proposition shows that two methods are equivalent in expectation, although we find that batch-balancing always works better in practice (see \S\ref{sec:experiments}). The sampling distribution for batch-balancing is denoted by $\tilde{\mu}_n$.
\begin{proposition}
  Let $\mathcal{B}$ be a batch selected uniformly at random and $\mathcal{B}'$ be a batch selected using batch-balancing. Then there exists $\lambda >0$ such that $\lambda\E_\mathcal{B} \left[\mathcal{L}_\mathrm{re-weighted}\left(\mathcal{B},\theta \right)\right] = \E_{\mathcal{B}'}\left[\mathcal{L}_\mathrm{vanilla} \left(\mathcal{B}',\theta\right)\right]$ for all $\theta$.
\end{proposition}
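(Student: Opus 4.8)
The plan is to push both expectations down to a single sampled example, rewrite each as a weighted sum of the per-class average losses, and then read off $\lambda$ directly. Throughout, write $\ell_i := \xent(\ve_{y_i},\softmax(f_\theta(x_i)))$ for the per-example loss and $\bar\ell_c := n_c^{-1}\sum_{i:\,y_i=c}\ell_i$ for the mean loss over class $c$; these depend on $\theta$, but every equality below is an identity in $\theta$, which is exactly what the statement requires.

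First I would invoke linearity of expectation. Both $\mathcal{L}_\mathrm{re-weighted}(\mathcal{B},\theta)$ and $\mathcal{L}_\mathrm{vanilla}(\mathcal{B}',\theta)$ are arithmetic means over the $|\mathcal{B}|=|\mathcal{B}'|$ slots of a batch, and each slot is drawn (i.i.d.) from the corresponding single-example sampling distribution: $\mu_n$ for the uniform batch $\mathcal{B}$ and $\tilde\mu_n$ for the batch-balanced batch $\mathcal{B}'$. Hence $\E_\mathcal{B}[\mathcal{L}_\mathrm{re-weighted}(\mathcal{B},\theta)] = \E_{(X,Y)\sim\mu_n}\!\big[\tfrac{a}{n_Y}\,\ell(X,Y)\big]$ and $\E_{\mathcal{B}'}[\mathcal{L}_\mathrm{vanilla}(\mathcal{B}',\theta)] = \E_{(X,Y)\sim\tilde\mu_n}[\ell(X,Y)]$, and in particular the batch size cancels out.

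Next I would evaluate each expectation by grouping over classes. Under $\mu_n$ every example carries mass $1/n$, so $\E_{\mu_n}[\tfrac{a}{n_Y}\ell] = \tfrac1n\sum_{c=1}^C\sum_{i:\,y_i=c}\tfrac{a}{n_c}\ell_i = \tfrac{a}{n}\sum_{c=1}^C\bar\ell_c$ — the $1/n_c$ weight exactly cancels the class count $n_c$. Under $\tilde\mu_n$ an example in class $c$ carries mass $\tfrac1C\cdot\tfrac1{n_c}$ (pick the class uniformly, then the example uniformly within it), so $\E_{\tilde\mu_n}[\ell] = \tfrac1C\sum_{c=1}^C\sum_{i:\,y_i=c}\tfrac1{n_c}\ell_i = \tfrac1C\sum_{c=1}^C\bar\ell_c$. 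Equivalently, one can skip the grouping and observe directly that $\mu_n(\{i\})\cdot\tfrac{a}{n_{y_i}} = \tfrac{aC}{n}\,\tilde\mu_n(\{i\})$ for every $i$, which is the same change-of-measure statement. Comparing the two displays gives $\E_\mathcal{B}[\mathcal{L}_\mathrm{re-weighted}(\mathcal{B},\theta)] = \tfrac{aC}{n}\,\E_{\mathcal{B}'}[\mathcal{L}_\mathrm{vanilla}(\mathcal{B}',\theta)]$, so $\lambda := n/(aC) > 0$ works, and it does not depend on $\theta$.

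There is no genuine obstacle here: the content is purely a change-of-measure bookkeeping argument. The one point worth stating carefully is that the identity holds slotwise, hence is independent of the batch size, and that nothing is used about $\ell$ beyond its being a fixed function of $(x,y,\theta)$ — so the same proof covers any loss of the form $\E_{\mu_n}[\ell]$, not just cross-entropy, consistent with the remark preceding the proposition. If one prefers without-replacement batch construction, I would simply note that the per-slot marginal is still the relevant sampling distribution, leaving the linearity-of-expectation step intact.
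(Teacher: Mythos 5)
Your proof is correct and follows essentially the same route as the paper's: both reduce the two expectations to a class-wise decomposition in which the $1/n_c$ reweighting cancels the class count, so that each side becomes a constant multiple of $\sum_{c}\bar\ell_c$, the paper phrasing this via conditional expectations on the class label and you via explicit grouping of the sum. The only cosmetic difference is that the paper works directly with the dataset-level expectations and obtains $\lambda=C$ for its normalization, while you track the $a/n$ factor from the displayed definition and get $\lambda=n/(aC)$; both are valid readings of the (loosely normalized) definition of $\mathcal{L}_\mathrm{re-weighted}$.
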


\begin{proof}
  Let $q = \mathcal{U}([n])$. For the empirical class distribution, $\mu_{n,Y} \left( Y \right) = \left|\left\{i \mid y_i = y_q \right\}\right|/n$. So now we have
  \begin{align}
    \E_{\left(X,Y\right) \sim \mu_n}\left[\frac{\ell(Y,f_\theta(X)) }{\mu_{n,Y} \left( Y \right)}\right] 
    &=n\E_{q\sim \mathcal{U}([n])} \left[\frac{\ell(y_q,f_\theta\left(x_q\right))}{\left|\left\{i \mid y_i = y_q \right\}\right|} \right]\notag\\
    &= n\sum_{j=1}^C\E_{q\sim \mathcal{U}([n])} \left[\frac{\ell(y_q,f_\theta\left(x_q\right)}{\left|\left\{i \mid y_i = y_q \right\}\right|} \mid q=j \right] P_{q\sim \mathcal{U}([n])}\left(y_q = j\right)\notag\\
    &= n\sum_{j=1}^C\E_{q\sim \mathcal{U}([n])} \left[\frac{\ell(y_q,f_\theta\left(x_q\right)}{\left|\left\{i \mid y_i = j \right\}\right|} \mid q=j \right] \frac{\left|\left\{i \mid y_i = j \right\}\right|}{n}\notag\\
    &= \sum_{j=1}^C\E_{q\sim \mathcal{U}([n])} \left[\ell(y_q,f_\theta\left(x_q\right) \mid y_q=j \right] \notag\\
    &= C\sum_{j=1}^C\E_{q\sim \mathcal{U}([n])} \left[\ell(y_q,f_\theta\left(x_q\right) \mid y_q=j \right]C^{-1}. \label{eqn:balance-proof}
  \end{align}
  Let $\mathcal{B}$ be the distribution over $[n]$ according to batch-balancing. We know that $\mathcal{U}$ and $\mathcal{B}$ select uniformly, conditioned on class label, so
  \begin{equation*}
    \E_{q\sim \mathcal{U}([n])} \left[\ell(y_q,f_\theta\left(x_q\right) \mid y_q=j \right] = \E_{q\sim \mathcal{B}} \left[\ell(y_q,f_\theta\left(x_q\right) \mid y_q=j \right],
  \end{equation*}
  and that $\mathcal{B}$ selects class label uniformly
  \begin{equation*}
    P_{q \sim \mathcal{B}}[y_q = j] = C^{-1},
  \end{equation*}
  for all $j\in[n]$, so Eq.~\ref{eqn:balance-proof} is equal to 
  \begin{align*}
    C\sum_{j=1}^C\E_{q\sim \mathcal{U}([n])} \left[\ell(y_q,f_\theta\left(x_q\right) \mid y_q=j \right]C^{-1}
    &=C\sum_{j=1}^C\E_{q\sim \mathcal{B}} \left[\ell(y_q,f_\theta\left(x_q\right) \mid y_q=j \right]P_{q \sim \mathcal{B}}[y_q = j]\\
    &=C\E_{q\sim \mathcal{B}} \left[\ell(y_q,f_\theta\left(x_q\right)  \right]\\
    &=C\E_{(X,Y)\sim\tilde{\mu}_n} \left[ \ell\left(Y,f_\theta(X)\right) \right].
  \end{align*}
\end{proof}

\section{Analyzing The OKO Loss Landscape} \label{appx:landscape} %TODO: openreview copy
Here we analyze the optimal network output for the OKO loss and how the loss behaves around such an optimum. Such an optimization landscape analysis is notoriously difficult for non-convex optimization; hence we make some simplifying assumptions. It has been repeatedly observed that well-trained neural networks typically ``overfit'' the training data. 
We assume that when a model has memorized its training data, its outputs will be identical for each input corresponding to the same class. In other words, its output on training data only depends on the training label: $f_\theta(x_i)\approx F(y_i):=F_{y_i}$. Here we will consider a matrix $F$ of logit outputs, such that  $F_{i,j}$ denotes the logit for class $j$ when the true label is $i$.

 One issue with standard cross-entropy is that it strongly encourages the entries of $F$ to diverge: For the cross-entropy risk  $\mathcal{R}(F) = \E \left[-\ve_{y}^T \log\left(\softmax \left(f(x)\right) \right)\right] = \E \left[-\ve_{y}^T \log\left(\softmax \left(F_y\right) \right)\right]$, and for all $F$ and $i, j$, $\frac{\partial \mathcal{R}(F)}{\partial F_{i,i}}>0$ and for all $i\neq j$, $\frac{\partial \mathcal{R}(F)}{\partial F_{i,j}}<0$. In other words: standard cross-entropy loss \emph{always} encourages the logits of the true class to move towards $\infty$ and the logits of the wrong class towards $-\infty$. As a result, neural networks tend to be overconfident. In particular, if $f_\theta(x)$ predicts class $\hat{y}$ then $P\left(\hat{y} \mid x\right)<\left[\softmax \left(f_\theta(x)\right)\right]_{\hat{y}}$.
 
A natural way to counteract this issue is via weight decay. Indeed, weight decay has been shown to improve calibration. However, this improvement comes at a cost of generalization, and modern networks therefore typically utilize little to no weight decay \citep{Guo2007_oncalibration}. Thus, there is a desire to find ways to calibrate networks without using weight decay. \emph{Label smoothing} is one potential solution to this since it encourages $\exp F_y$ to be proportional to $\ve_y(1 - \alpha) + \alpha / C$ \citep{mixup_regularization}. We include label smoothing as a competitor for our method in the experimental section.

Before deriving the risk functions for the hard and soft OKO losses, we must introduce further notation. Let $\sY_i = \left\{S \mid S\in  2^{\left[C\right]}, \left|S\right| = k, i\notin S\right\}$. $\sY_i$ represents a potential set of odd labels when $y'=i$. For OKO we have
{\small
\begin{align}\mathcal{R}_\mathrm{soft}(F)&= -\sum_{i=1}^C \sum_{Y \in\sY_i} \sum_{j=1}^{k+2} \log \left[ \softmax\left(\sum_{\ell=1}^{k+2}F_{Y'_\ell}\right)\right]_{Y'_j} \label{eqn:oko-loss-soft}\\
  \mathcal{R}_\mathrm{hard}(F) &= -\sum_{i=1}^C \sum_{Y \in\sY_i } \log \left[ \softmax\left(\sum_{\ell=1}^{k+2}F_{Y'_\ell} \right) \right]_i.\label{eqn:oko-loss-hard}
\end{align}
}%
See Appx.~\ref{appx:overfit-derivation} for a derivation of these. The following proposition demonstrates that the OKO risks naturally encourage the risk not to overfit by constraining when each index of $F$ is viewed individually.

\begin{proposition} \label{prop:risk-minima}
 For a fixed $i,j$, both $\mathcal{R}_{\mathrm{soft}}\left(F\right)$ and $\mathcal{R}_{\mathrm{hard}}\left(F\right)$ are convex and each admits a unique global minimum with respect to $F_{i,j}$.
\end{proposition}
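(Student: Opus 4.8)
The plan rests on the elementary fact that for every index $a$ the map $z\mapsto -\log[\softmax(z)]_a=\logsumexp(z)-z_a$ is convex on $\R^C$ (log-sum-exp is convex, $z_a$ is linear), and that its restriction to a line $t\mapsto v+tw$ has second derivative $\Var_{c\sim\softmax(v+tw)}(w_c)\ge 0$, which is \emph{strictly} positive exactly when $w$ is not a scalar multiple of the all-ones vector $\mathbf 1$. Now each summand of $\mathcal R_{\mathrm{hard}}$ has the form $-\log[\softmax(2F_p+\sum_{c\in Y}F_c)]_p$ for a pair class $p$ and an odd set $Y\in\sY_p$ (and each summand of $\mathcal R_{\mathrm{soft}}$ has this form with the subscript $p$ replaced by one of the labels of the set), where $2F_p+\sum_{c\in Y}F_c$ is a fixed linear function of the matrix $F$. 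Since precomposition with a linear map preserves convexity, every summand is convex in $F$, hence convex when restricted to the line $F+tE_{i,j}$ obtained by varying a single entry $F_{i,j}$ (here $E_{i,j}$ has a $1$ in position $(i,j)$ and zeros elsewhere). Summing, both risks are convex in $F_{i,j}$, which is the first claim.

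For the unique global minimum, fix $(i,j)$ and write $\phi(t):=\mathcal R(F+tE_{i,j})$. Varying $F_{i,j}$ by $t$ changes only the $j$-th coordinate of each summand's logit vector $2F_p+\sum_{c\in Y}F_c$, by $a\,t$ where $a=2\cdot\mathbbm 1[p=i]+\mathbbm 1[i\in Y]\in\{0,1,2\}$. I would first upgrade convexity to strict convexity: take the summand with pair class $i$ and any $Y\in\sY_i$ (which is nonempty since $|[C]\setminus\{i\}|=C-1\ge k$); its logit moves in direction $2\,\ve_j$, which is not proportional to $\mathbf 1$ because $C\ge 2$, so this summand is strictly convex in $t$, and as all other summands are convex, $\phi$ is strictly convex.

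It then remains to prove $\phi$ is coercive, i.e. $\phi(t)\to\infty$ as $t\to\pm\infty$; together with strict convexity this gives existence and uniqueness of the minimizer. Since every summand $\logsumexp(z)-z_a\ge 0$, it suffices in each direction to exhibit one summand diverging to $+\infty$. For $i\neq j$: as $t\to+\infty$ the pair-class-$i$ summand has $z_j\to+\infty$ while $z_i$ is fixed, so it is $\ge z_j-z_i\to\infty$; as $t\to-\infty$ I use a summand whose pair class is $j$ and whose odd set contains $i$ (which exists since $C\ge k+1$), for which $z_j\to-\infty$ with every other coordinate fixed, so $\logsumexp(z)$ stays bounded while $-z_j\to\infty$. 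For $i=j$: as $t\to-\infty$ the pair-class-$i$ summand equals $\log\bigl(1+\sum_{c\ne i}e^{z_c-z_i}\bigr)$ with $z_i\to-\infty$, hence diverges; as $t\to+\infty$ that summand instead tends to $0$, so I switch to a summand with pair class $p\ne i$ and odd set containing $i$ (exists since $C\ge k+1$), for which $z_i\to+\infty$ with $z_p$ fixed, giving $\logsumexp(z)-z_p\ge z_i-z_p\to\infty$. Finally, $\mathcal R_{\mathrm{soft}}=2\,\mathcal R_{\mathrm{hard}}+\sum_{i}\sum_{Y\in\sY_i}\sum_{c\in Y}\bigl(-\log[\softmax(2F_i+\sum_{c'\in Y}F_{c'})]_c\bigr)$, where the extra terms are convex and $\ge 0$, so strict convexity and all divergences established for $\mathcal R_{\mathrm{hard}}$ transfer immediately to $\mathcal R_{\mathrm{soft}}$.

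The genuinely routine ingredients are the convexity of log-sum-exp and its stability under affine precomposition. The one step that needs care is the coercivity bookkeeping — tracking which summands depend on $F_{i,j}$ and through which logit coordinate — and within it the diagonal case $i=j$, where the obvious candidate (the summand with pair class $i$) does \emph{not} diverge as $F_{i,i}\to+\infty$, since $\logsumexp(z)-z_i\to 0$ there, forcing one instead to use a summand in which $i$ occurs as an odd class.
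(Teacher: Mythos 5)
Your proposal is correct and follows essentially the same route as the paper: both arguments reduce to classifying the summands that depend on $F_{i,j}$ according to whether the selected softmax coordinate coincides with the varying logit coordinate, obtain strict convexity from the second derivative, and secure existence of the minimizer by exhibiting one summand of each type (the paper packages this as a support lemma and argues via the sign of the derivative at $\pm\infty$, while you argue via coercivity of the nonnegative summands — an equivalent bookkeeping). Your observation that $\mathcal{R}_{\mathrm{soft}}=2\mathcal{R}_{\mathrm{hard}}+(\text{convex, nonnegative extra terms})$ is a slightly cleaner way to transfer the conclusion to the soft risk than the paper's parallel case analysis, but the substance is the same.
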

This acts as an implicit form of smoothing and helps to keep the logits from diverging during training. This is because the gradient descent path for functions like those described in Proposition \ref{prop:risk-minima} tend to meander rather than directly diverge. A toy example of this phenomenon can be found in Figure \ref{fig:risk-minima}.
\begin{figure}
    \centering
    \includegraphics[scale=0.5]{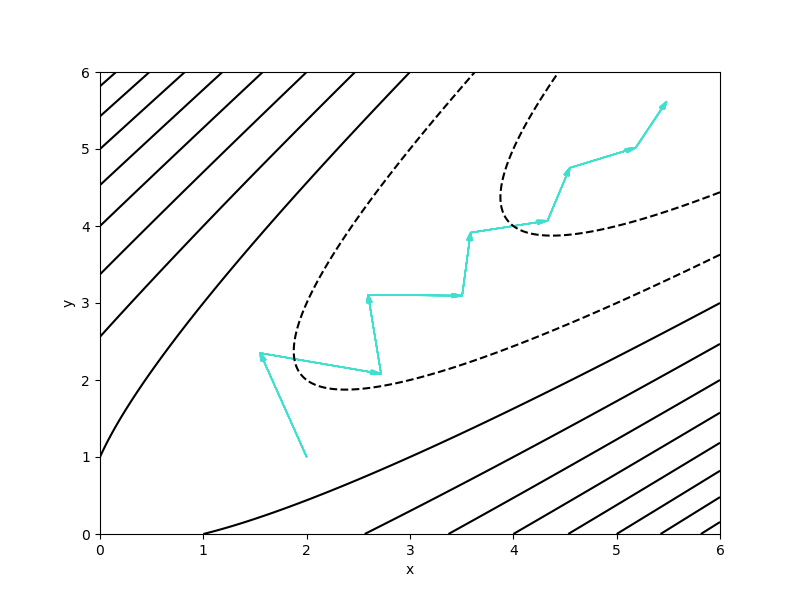}
    \caption{This figure shows the contour lines and gradient descent path of $f(x,y) = -x-y+ (y-x)^2$. Note that for fixed $x$ or $y$, $f$ is convex in the other variable. While gradient descent does indeed diverge to $(x,y)\to (\infty,\infty)$ the properties of $f$ slow its divergence.}
    \label{fig:risk-minima}
\end{figure}
Unlike label smoothing \citep{label_smoothing,mixup_regularization}, however, $F$ is not encouraged to converge to a unique minimum. Optimizing the (hard) OKO risk still allows $F$ to diverge if that's advantageous --- as we demonstrate in the following proposition.

\begin{proposition} \label{prop:integral-curve}
    There exist an initial value, $F^{(0)}$, such that the sequence of points given by minimizing $\mathcal{R}_\mathrm{hard}$ with fixed step size gradient descent,  $F^{(0)},F^{(1)}, F^{(2)},\ldots$, yields, $\lim_{a \to \infty}F^{(a)}_{i,i} \to \infty$ and $\lim_{a \to \infty}F^{(a)}_{i,j} \to -\infty$, for all $i$ and $j\neq i$.
\end{proposition}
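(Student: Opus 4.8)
The plan is to use the permutation symmetry of $\mathcal{R}_\mathrm{hard}$ to collapse the $C^2$-dimensional gradient descent onto a one-dimensional invariant curve on which divergence is transparent. Observe that $\mathcal{R}_\mathrm{hard}$ in Eq.~\ref{eqn:oko-loss-hard} is invariant under relabelling classes, i.e.\ under $F\mapsto P_\sigma F P_\sigma^\top$ for every permutation $\sigma$ of $[C]$: relabelling permutes the pairs $(i,Y)$ with $i\in[C]$, $Y\in\sY_i$ bijectively and permutes each logit vector $\sum_\ell F_{Y'_\ell}$ consistently, so the summed $-\log[\softmax(\cdot)]_i$ is unchanged. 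Since conjugation by a permutation matrix is orthogonal for the Frobenius inner product, the gradient is equivariant, $\nabla\mathcal{R}_\mathrm{hard}(P_\sigma F P_\sigma^\top)=P_\sigma\,\nabla\mathcal{R}_\mathrm{hard}(F)\,P_\sigma^\top$; hence on the fixed-point subspace $\mathcal{V}=\{aI+b(\mathbf{1}\mathbf{1}^\top-I):a,b\in\R\}$ (constant diagonal, constant off-diagonal) the gradient again lies in $\mathcal{V}$, so the gradient-descent map preserves $\mathcal{V}$. I would therefore initialise $F^{(0)}\in\mathcal{V}$ (say $F^{(0)}=0$) and only track the scalars $a^{(t)},b^{(t)}$.

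Next I would evaluate $\mathcal{R}_\mathrm{hard}$ on $\mathcal{V}$. For $F=aI+b(\mathbf{1}\mathbf{1}^\top-I)$ and any pair $(i,Y)$, the logit vector $v\coloneqq 2F_i+\sum_{c\in Y}F_c$ satisfies $v_i=2a+kb$, $v_j=a+(k+1)b$ for $j\in Y$, and $v_j=(k+2)b$ for $j\notin Y\cup\{i\}$, so every margin $v_i-v_j$ depends only on $u\coloneqq a-b$ and $-\log[\softmax(v)]_i=\log\!\big(1+ke^{-u}+(C-1-k)e^{-2u}\big)$ independently of $(i,Y)$. Summing over the $N\coloneqq C\binom{C-1}{k}$ pairs gives $\mathcal{R}_\mathrm{hard}|_{\mathcal{V}}=\psi(u)$ with $\psi(u)\coloneqq N\log\!\big(1+ke^{-u}+(C-1-k)e^{-2u}\big)$. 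Since $k\ge 1$ one checks $\psi'(u)<0$ for all $u$, so $\psi$ is smooth, strictly decreasing, bounded below by $0$, has no critical point, and attains its infimum only as $u\to+\infty$. This is exactly the ``valley'' in the spirit of Figure~\ref{fig:risk-minima}; it does not contradict Proposition~\ref{prop:risk-minima}, which concerns a single entry $F_{i,j}$, because moving $u$ drives an entire diagonal against an entire off-diagonal.

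Finally I would write the descent recursion on $\mathcal{V}$. Because $I$ and $\mathbf{1}\mathbf{1}^\top-I$ are Frobenius-orthogonal, the gradient in the original coordinates has equal diagonal and equal off-diagonal entries, and a short computation gives $a^{(t+1)}=a^{(t)}-c_1\psi'(u^{(t)})$ and $b^{(t+1)}=b^{(t)}+c_2\psi'(u^{(t)})$ for constants $c_1,c_2>0$ proportional to the fixed step size $\eta$. Hence $u^{(t+1)}=u^{(t)}-(c_1+c_2)\psi'(u^{(t)})>u^{(t)}$, so $u^{(t)}$ is strictly increasing; if it converged to a finite $u^\ast$ the increments would tend to $0$, forcing $\psi'(u^\ast)=0$ by continuity, contradicting $\psi'<0$. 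Therefore $u^{(t)}\to+\infty$, and since $a^{(t)}-a^{(0)}=\tfrac{c_1}{c_1+c_2}(u^{(t)}-u^{(0)})\to+\infty$ while $b^{(t)}-b^{(0)}=-\tfrac{c_2}{c_1+c_2}(u^{(t)}-u^{(0)})\to-\infty$, every diagonal entry $F^{(t)}_{i,i}=a^{(t)}\to\infty$ and every off-diagonal entry $F^{(t)}_{i,j}=b^{(t)}\to-\infty$, which is the claim.

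The only point requiring care is the invariance of $\mathcal{V}$ under the gradient-descent map --- the symmetry argument above makes this clean, but it could also be verified by a direct computation of $\partial\mathcal{R}_\mathrm{hard}/\partial F_{i,j}$ on $\mathcal{V}$ --- together with the sign of $\psi'$ and the fact that $|\psi'|$ stays bounded away from $0$ off any neighbourhood of $+\infty$; this last fact is what prevents the descent from stalling at a finite $u$, and it is why no $L$-smoothness or small-step hypothesis is needed (the iteration diverges for any fixed $\eta>0$). Everything else is a routine monotone-sequence argument in one variable.
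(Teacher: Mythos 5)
Your proposal is correct and follows the same skeleton as the paper's proof: both restrict attention to the permutation-symmetric subspace of matrices with constant diagonal $a$ and constant off-diagonal $b$, both use relabelling invariance to argue that gradient descent preserves this subspace, and both compute the restricted risk, arriving at the same per-summand logit vector with entries $2a+kb$, $a+(k+1)b$, and $(k+2)b$. Where you differ is the endgame. The paper differentiates a single summand $R(a,b)$ directly with respect to $a$ and $b$ and shows the two partial derivatives have the required opposite signs for all $(a,b)$, then concludes divergence from these signs alone. You instead observe that the restricted risk depends only on $u=a-b$ (via $\psi(u)=N\log\bigl(1+ke^{-u}+(C-1-k)e^{-2u}\bigr)$, which is exactly the paper's $R(a,b)$ after dividing through by $\exp(2a+bk)$), collapsing the problem to one dimension. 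This buys you a genuinely complete divergence argument: $u^{(t)}$ is strictly increasing, and if it converged to a finite $u^\ast$ the vanishing increments would force $\psi'(u^\ast)=0$, contradicting $\psi'<0$ everywhere. That step is the one place your write-up is more careful than the paper's: monotonicity of $a^{(t)}$ and $b^{(t)}$, which is all the paper's sign computation directly establishes, does not by itself rule out convergence to finite limits, so your no-critical-point argument closes a small gap that the paper leaves implicit.
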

Hence, the OKO risk strikes a balance between the excessive overconfidence caused by standard cross-entropy and the inflexible calibration of fixed minima in label smoothing \citep{mixup_regularization}. 

\label{appx:overfit-derivation}
Here we derive the expressions in Eq.~\ref{eqn:oko-loss-soft} and Eq.~\ref{eqn:oko-loss-hard} in the main text. We have that
\begin{align}
  \E_{\gS \sim \mathscr{A}}\left[\ell_{\operatorname{oko}}^{\soft}\left(\gS_y,f_\theta\left(\gS_x\right) \right) \right]\notag
  &= \sum_{i=1}^k P\left(y' = i \right)\E_{\gS \sim \mathscr{A}}\left[\ell_{\operatorname{oko}}^{\soft}\left(\gS_y,f_\theta\left(\gS_x\right) \right)\mid y'=i \right] \\
  &= \sum_{i=1}^k P\left(y' = i \right)
  \sum_{Y \in\sY_i }\left(P \left(\left\{y'_{3},..,y'_{{k+2}}\right\} \label{eqn:oko-derive-1}=Y\mid y'=i\right)\right.\\
  &\left. \cdots\times \E_{\gS \sim \mathscr{A}}\left[\ell_{\operatorname{oko}}^{\soft}\left(\gS_y,f_\theta\left(\gS_x\right) \right)\mid y'=i, \left\{y'_{3},..,y'_{{k+2}}\right\} =Y \right]\right), \label{eqn:oko-derive-2}
\end{align}
noting that the parenthesis after the second summation in Eq.~\ref{eqn:oko-derive-1} extends to the end of Eq.~\ref{eqn:oko-derive-2}. Since $y'$ is chosen uniformly at random $P\left(y' = i \right)$ are equal for all $i$ and similarly $\left(y'_3,\ldots,y'_{k+2}\right)$ are chosen uniformly at random given $y'$ and $\left|\sY_i\right|$ are all equal so $P\left(\left\{y'_3,\ldots,y'_{k+2}\right\} =Y\mid y'=i\right)$ are equal for all $Y$ and $i$, thus we can ignore these terms when optimizing over $F$.  Letting $Y' = \left(i,i,Y_1,\ldots,Y_k\right)$ in the summation we have, that $\E_{\gS \sim \mathscr{A}}\left[\ell_{\operatorname{oko}}^{\soft}\left(\gS_y,f_\theta\left(\gS_x\right) \right) \right]$ is proportional to,
\begin{align}
  &\sum_{i=1}^k \sum_{Y \in\sY_i } \E_{\gS \sim \mathscr{A}}\left[\ell_{\operatorname{oko}}^{\soft}\left(\gS_y,f_\theta\left(\gS_x\right) \right)\mid y'=i, \left\{y'_{3},\ldots,y'_{{k+2}}\right\} =Y \right] \notag\\
  &=\sum_{i=1}^k \sum_{Y \in\sY_i } \E_{\gS \sim \mathscr{A}}\left[\left(\left(k+2\right)^{-1}\sum_{i=1}^{k+2}\ve_{y'_{i}}\right)^T \log \left[ \softmax\left(f_{\theta}\left(\gS_x\right)\right)\right]\mid y'=i, \left\{y'_{3},\ldots,y'_{{k+2}}\right\} =Y \right] \notag\\
  &\propto \sum_{i=1}^k \sum_{Y \in\sY_i } \E_{\gS \sim \mathscr{A}}\left[\left(\sum_{i=1}^{k+2}\ve_{Y'_i}\right)^T \log \left[ \softmax\left(\sum_{i=1}^{k+2}F_{Y'_i}\right)\right] \right] \notag\\
  &= \sum_{i=1}^k \sum_{Y \in\sY_i } \left(\sum_{i=1}^{k+2}\ve_{Y'_i}\right)^T \log \left[ \softmax\left(\sum_{i=1}^{k+2}F_{Y'_i}\right)\right] \notag \\
  &= \sum_{i=1}^k \sum_{Y \in\sY_i} \sum_{j=1}^{k+2} \log \left[ \softmax\left(\sum_{\ell=1}^{k+2}F_{Y'_\ell}\right)\right]_{Y'_j}. \label{eqn:triple-sum}
\end{align}
The derivation of the hard risk is similar, however the third summation in Eq.~\ref{eqn:triple-sum} only contains the $i$ term, for the single hard label.

\subsection{Proofs}

Before proving Proposition \ref{prop:risk-minima} we will first introduce the following support lemma, which will be proven later.
\begin{lemma}\label{lem:support-lemma}
  Let, $N,N' \in \mathbb{N}$ be positive. Let $q_i,q'_i \in \R^C$, with $q_{i,1}= a_ix+b_i$, $q'_{i,1}= a'_ix + b'_i$ (the remaining entries of $q_i$ and $q_i'$ are fixed and do not depend on $x$) with $a_i> 0$ and $a'_i > 0$, and $n_i$ be a sequence of $N'$ elements in $[C]\setminus \left\{1\right\}$. Then
  \begin{equation}\label{eqn:minform}
        f(x) = \underbrace{-\sum_{i=1}^N \log \left(\softmax\left(q_i \right) \right)_1}_\mathscr{L} \underbrace{- \sum_{i=1}^{N'} \log \left(\softmax\left(q'_i \right) \right)_{n_i}}_\mathscr{R}
  \end{equation}
  is strictly convex and admits a unique minimizer.
\end{lemma}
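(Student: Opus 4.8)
The plan is to show that $f$ splits into a finite sum of one-dimensional strictly convex functions, of which at least one blows up as $x\to-\infty$ and at least one blows up as $x\to+\infty$; then $f$ is strictly convex and coercive, hence has a unique minimizer. First I would rewrite each summand with the identity $-\log\left(\softmax(q)\right)_j = \logsumexp(q) - q_j$, and analyze the two types of terms separately.

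For a term of type $\mathscr{L}$, namely $-\log\left(\softmax(q_i)\right)_1$ with $q_{i,1}=a_ix+b_i$ and the other coordinates constant, set $c_i := \sum_{k=2}^C e^{q_{i,k}}>0$ (positive because it is a sum of exponentials, and $C\geq 2$ is forced by the hypotheses since $n_i\in[C]\setminus\{1\}$). Then the term equals
\[
  \log\!\left(e^{a_ix+b_i}+c_i\right) - (a_ix+b_i) = \log\!\left(1 + c_i e^{-(a_ix+b_i)}\right),
\]
whose second derivative computes to $\frac{a_i^2 c_i e^{-(a_ix+b_i)}}{\left(1+c_ie^{-(a_ix+b_i)}\right)^2}>0$ for every $x$; so this term is strictly convex, is decreasing, is bounded below by $0$, and tends to $+\infty$ as $x\to-\infty$. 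For a term of type $\mathscr{R}$, namely $-\log\left(\softmax(q'_i)\right)_{n_i}$ with $n_i\neq 1$ (so $q'_{i,n_i}$ does not depend on $x$), set $d_i := \sum_{k\neq 1} e^{q'_{i,k}}>0$; the term equals $\log\!\left(e^{a'_ix+b'_i}+d_i\right) - q'_{i,n_i}$, with second derivative $\frac{(a'_i)^2 d_i e^{a'_ix+b'_i}}{\left(e^{a'_ix+b'_i}+d_i\right)^2}>0$, so it is strictly convex, increasing, bounded below, and tends to $+\infty$ as $x\to+\infty$.

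Assembling: $f=\mathscr{L}+\mathscr{R}$ is a finite sum of strictly convex functions and hence strictly convex. For coercivity, use that $N\geq 1$, so at least one $\mathscr{L}$-term is present and drives $f(x)\to+\infty$ as $x\to-\infty$ while all other terms stay bounded below, and that $N'\geq 1$, so at least one $\mathscr{R}$-term drives $f(x)\to+\infty$ as $x\to+\infty$. A continuous coercive function on $\R$ attains its infimum, and strict convexity forces the minimizer to be unique.

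The two second-derivative computations are routine. The one load-bearing point to be careful about is that strict convexity alone does \emph{not} guarantee a minimizer (the second derivatives vanish asymptotically, as $e^x$ illustrates), so the argument genuinely needs both $N\geq 1$ and $N'\geq 1$ to obtain two-sided coercivity; it is also worth recording explicitly that $c_i,d_i>0$, since this positivity is exactly what makes the second-derivative inequalities strict.
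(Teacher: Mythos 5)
Your proof is correct and takes essentially the same approach as the paper: strict convexity via termwise positivity of the second derivative, and existence from the fact that the $\mathscr{L}$-terms force blow-up as $x\to-\infty$ while the $\mathscr{R}$-terms force blow-up as $x\to+\infty$ (the paper phrases this as the derivative tending to $-\sum_i a_i<0$ and $\sum_i a_i'>0$ and invokes the intermediate value theorem, which is equivalent to your coercivity argument). Your explicit remark that strict convexity alone does not yield a minimizer, so that both $N\ge 1$ and $N'\ge 1$ are genuinely needed, is exactly the load-bearing point of the paper's argument as well.
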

\begin{proof}[Proof of Proposition \ref{prop:risk-minima}]
  This proof will be proven using surrogate indices $i',j'$ in place of $i,j$ in the proposition for $F_{i,j}$; it will be useful to be able to use $i$ and $j$ to refer to indices in other expressions.

  Observe that simply relabeling the network outputs does not affect the risk so, for a permutation $\sigma$ and $G$ defined by $G_{i',j'} = F_{\sigma(i'),\sigma(j')}$ we have that $\mathcal{R}(F) = \mathcal{R}(G)$. Because of this we will simply let $j'=1$ for concreteness. We will begin with the case where $i'=j'$.

  \paragraph{Case $i'=j'=1$:}Note that each summand in Eq.~\ref{eqn:oko-loss-soft} and Eq.~\ref{eqn:oko-loss-hard} is either a constant with respect to $F_{1,1}$ or has the form of the left hand sum, $\mathscr{L}$, or right hand sum, $\mathscr{R}$, of Eq.~\ref{eqn:minform}, substituting in $x\leftarrow F_{1,1}$ in the statement of Lemma \ref{lem:support-lemma}. To finish the $i'=j'$ case we will show that both Eq.~\ref{eqn:oko-loss-soft} and Eq.~\ref{eqn:oko-loss-hard} has a summand of the form in $\mathscr{L}$ and one summand of the form in $\mathscr{R}$: 

\begin{itemize}
  \item For $\mathscr{L}$: Consider $i=1$ and an arbitrary $Y \in \sY_1$ for Eq.~\ref{eqn:oko-loss-hard}; for Eq.~\ref{eqn:oko-loss-soft} we use the same values with $j=1$ since $Y'_1 =1$.
  \item For $\mathscr{R}$: In Eq.~\ref{eqn:oko-loss-soft} we can let $i=1$, $Y \in \sY_1$ be arbitrary, and $j = 3$ since $Y'_3 \neq 1 = j'$ in that case. For Eq.~\ref{eqn:oko-loss-hard} we need only consider $i=2$ and some $Y= \sY_2$ that contains an entry with $1$. 
\end{itemize}
  From Lemma \ref{lem:support-lemma} it follows that both $\mathcal{R}_\mathrm{soft}(F)$ and $\mathcal{R}_\mathrm{hard}(F)$ are strictly convex and contain a unique minimum when optimizing over $F_{i',i'}$.
 \paragraph{Case $i'\neq j'=1$:} This case proceeds in a similar fashion to the last.

 \begin{itemize}
     \item For $\mathscr{L}$: In Eq.~\ref{eqn:oko-loss-hard} we have $i=1$ and some $Y \in \sY_1$, so that $Y'_3 =i'$; for Eq.~\ref{eqn:oko-loss-soft} we add $j=1$ so $Y'_1=1$.
     \item For $\mathscr{R}$: In Eq.~\ref{eqn:oko-loss-hard} we have $i=2$, $Y\in \sY_2$ such that $i'$ is in $Y$. We can use the same values for $i$ and $Y$ for Eq.~\ref{eqn:oko-loss-soft}, with $j=1$ since $Y_1 =2$.
 \end{itemize}

\end{proof}

\begin{proof}[Proof of Lemma \ref{lem:support-lemma}]
  To show strict convexity, we will show that $\frac{d^2f}{dx^2}$ is strictly positive. First, we have that
  \begin{align}
  \frac{df}{dx} 
  &= -\sum_{i=1}^N a_i \left(1-\softmax\left(q_i \right)_1\right) - \sum_{i=1}^{N'} a_i' \left(-\softmax\left(q'_i \right)_1\right)\notag\\
  &= \sum_{i=1}^N a_i \left(\softmax\left(q_i \right)_1 - 1\right) + \sum_{i=1}^{N'} a_i' \left(\softmax\left(q'_i \right)_1\right)\label{eqn:df}
  \end{align}
   and thus
   \begin{equation*}
    \frac{d^2f}{dx^2} = \sum_{i=1}^N a_i^2 \left( 1 -\softmax\left(q_i \right)_1\right)\softmax\left(q_i \right)_1 + \sum_{i=1}^{N'} a_i'^2 \left(\softmax\left(q'_i \right)_1\right)\left(1-\softmax\left(q'_i \right)\right)
   \end{equation*}
   which is clearly positive for all $x$.
   To demonstrate the existence of a minimizer we will show that $\frac{df}{dx}$  attains both positive and negative values as a function of $x$ and, by the intermediate value theorem, $\frac{df}{dx}$ must equal $0$ somewhere. To see this observe that 
   \begin{gather*}
     \lim_{x \to \infty}\sum_{i=1}^N a_i \left(\softmax\left(q_i \right)_1 - 1\right) + \sum_{i=1}^{N'} a_i' \left(\softmax\left(q'_i \right)_1\right) = \sum_{i=1}^{N'} a_i' >0 \\
     \lim_{x \to -\infty}\sum_{i=1}^N a_i \left(\softmax\left(q_i \right)_1 - 1\right) + \sum_{i=1}^{N'} a_i' \left(\softmax\left(q'_i \right)_1\right) =  \sum_{i=1}^N -a_i<0
     ,
   \end{gather*}
   which completes the proof.
\end{proof}
\begin{proof}[Proof of Proposition \ref{prop:integral-curve}]
    Let $\mathcal{F}\subset \R^{C\times C}$ be the set of matrices $F$ where $F_{i,i} = F_{j,j}$ for all $i,j$, and $F_{i,j} = F_{i',j'}$ for all $i\neq j, i'\neq j'$. Let $F(a,b)\in \mathcal{F}$ be the matrix which contains $a$ in the diagonal entries and $b$ in all other entries. The chain rule tells us that
    \begin{equation}
    \frac{\partial}{\partial a} \mathcal{R}_\mathrm{hard}\left(F(a,b) \right) = \sum_{i=1}^C \left.\frac{\partial}{\partial F_{i,i}} \rhard \left(F \right)\right|_{F= F(a,b)},\label{eqn:partial-a}
    \end{equation}
    the sum of all the partial derivatives along the diagonal, and
    \begin{equation}
    \frac{\partial}{\partial b} \rhard \left(F(a,b)\right) = \sum_{i\neq j}\left.\frac{\partial}{\partial F_{i,j}} \rhard \left(F \right)\right|_{F= F(a,b)},\label{eqn:partial-b}
    \end{equation}
    the sum of all partial derivatives for the entries off the diagonal.
    Due to the invariance with respect to labeling, as in the proof of Proposition \ref{prop:risk-minima}, for any $F\in \mathcal{F}$ it follows that $\frac{\partial}{\partial F_{i,i}} \rhard(F) = \frac{\partial}{\partial F_{i',i'}} \rhard(F)$ for all $i$ and $i'$, and $\frac{\partial}{\partial F_{i,j}} \rhard(F) = \frac{\partial}{\partial F_{i',j'}} \rhard(F)$ for all $i\neq j$, $i'\neq j'$. Because of this $\nabla\mathcal{R}_\mathrm{hard}\left(F(a,b)\right)$ will always lie in $\mathcal{F}$ and thus a path following gradient descent starting in $\mathcal{F}$ will always remain in $\mathcal{F}$. 
    
    Considering Eq.~\ref{eqn:partial-a} and Eq.~\ref{eqn:partial-b}, if we show that $-\frac{\partial}{\partial a} \mathcal{R}_\mathrm{hard}\left(F(a,b) \right)>0$ and $-\frac{\partial}{\partial b} \mathcal{R}_\mathrm{hard}\left(F(a,b) \right)<0$, for any $a,b$, it would follow that, for $F\in \mathcal{F}$, $-\nabla \rhard\left(F\right) = F(a',b')$  for some $a'>0$ and $b'<0$. This would imply that gradient descent starting from an element of $\mathcal{F}$ will diverge to $F\left(\infty, -\infty\right)$, which would complete the proof. We will now proceed proving  $-\frac{\partial}{\partial a} \mathcal{R}_\mathrm{hard}\left(F(a,b) \right)>0$ and $-\frac{\partial}{\partial b} \mathcal{R}_\mathrm{hard}\left(F(a,b) \right)<0$.
    
    The risk expression applied to $F(a,b)$ is equal to
    \begin{equation}
        \mathcal{R}_\mathrm{hard}(F(a,b)) = -\sum_{i=1}^k \sum_{Y \in\sY_i } \log \left[ \softmax\left(\sum_{\ell=1}^{k+2}F(a,b)_{Y'_\ell} \right) \right]_i. \label{eqn:ab-risk}
    \end{equation}
    For concreteness we will consider the summand with $i=1$ and $Y = \left[2,\ldots,k+1 \right]$ fixed, which implies $Y' = \left[1,1,2,\ldots,k+1 \right]$ is also fixed. In this case we have that
    \begin{align*}
        \sum_{\ell=1}^{k+2}F(a,b)_{Y'_\ell}
        &= \left[\begin{matrix}
            2a + kb\\
            a + (k+1)b\\
            \vdots\\
            a + (k+1)b\\
            (k+2)b\\
            \vdots\\
            (k+2)b
        \end{matrix}\right],
    \end{align*}
    with $k$ entries containing $a + (k+1)b$ and $C-(k+1)$ entries containing $(k+2)b$ (note that $C-(k+1)$ is nonnegative). Continuing with fixed $i$ and $Y'$ have that
    \begin{align}
        &\log \left[ \softmax\left(\sum_{\ell=1}^{k+2}F(a,b)_{Y'_\ell} \right) \right]_i\notag\\
        &= \log\left(\frac{\exp\left(2a+bk\right)}{\exp\left(2a+bk\right) + k\exp\left(a+b(k+1)\right) + (C-k-1) \exp \left((k+2)b\right)} \right). \label{eqn:one-term}
    \end{align}
    We will define $R(a,b)$ to be equal to Eq.~\ref{eqn:one-term}
    Note that every summand in Eq.~\ref{eqn:ab-risk} is equal to Eq.~\ref{eqn:one-term}. Because of this we need only show that $\frac{\partial}{\partial a} R(a,b)>0$ and $\frac{\partial}{\partial b} R(a,b)< 0$ to finish the proof.
    
    Differentiating with respect to $a$ gives us
    \begin{align*}
        \frac{\partial}{\partial a} R(a,b) =2 - \frac{2\exp \left(2a+bk\right) + k\exp\left(a+b(k+1)\right)}{\exp\left(2a+bk\right) + k\exp\left(a+b(k+1)\right) + (C-k-1) \exp \left((k+2)b\right)}.
    \end{align*}
    Letting 
    $$
     Q(a,b):=\exp\left(2a+bk\right) + k\exp\left(a+b(k+1)\right) + (C-k-1) \exp \left((k+2)b\right)
    $$
    it follows that
    \begin{align*}
        \frac{\partial}{\partial a} R(a,b) = 2 -\frac{\exp \left(2a+bk\right) + k\exp\left(a+b(k+1)\right)}{Q(a,b)} - \frac{\exp \left(2a+bk\right) }{Q(a,b)}. 
    \end{align*}
    We have that
    $$
        \frac{\exp \left(2a+bk\right) + k\exp\left(a+b(k+1)\right)}{Q(a,b)} \le 1
    \text{ and }
    \frac{\exp \left(2a+bk\right) }{Q(a,b)} <1
    $$ 
    so $\frac{\partial}{\partial a} R(a,b)>0$.

    Differentiating with respect to $b$ we get
    \begin{align*}
        &\frac{\partial}{\partial b} R(a,b)\\
        &= k -\frac{k\exp \left(2a+bk\right) + k(k+1)\exp\left(a+b(k+1)\right) +(k+2)(C-k-1) \exp \left((k+2)b\right)}{\exp\left(2a+bk\right) + k\exp\left(a+b(k+1)\right) + (C-k-1) \exp \left((k+2)b\right)}.
    \end{align*}
    Observe that
    \begin{equation*}
        k< \frac{k\exp \left(2a+bk\right) + k(k+1)\exp\left(a+b(k+1)\right) +(k+2)(C-k-1) \exp \left((k+2)b\right)}{\exp\left(2a+bk\right) + k\exp\left(a+b(k+1)\right) + (C-k-1) \exp \left((k+2)b\right)}
    \end{equation*}
    so $\frac{\partial}{\partial b} R(a,b)<0$, which completes the proof.
\end{proof}
\section{Proof of Theorem \ref{thm:oko-minimizer}} \label{appx:oko-minimizer} %TODO clean up
To prove this let $\sF_0\subset \sF$ be such that $f\in \sF_0$ satisfies $f(i)_1 = 0$ for all $i$. We have the following lemma showing that $\sF_0$ can be used in place of $\sF$ in our proof.
\begin{lemma}\label{lem:fnot-equiv}
    Let $v_1,\ldots,v_M$ be in $\R^d$. Then there exists $v_1',\ldots,v_M'$ with $v_{i,1}' = 0$ for all $i$, such that, for all $w_1,\ldots, w_M$ in $\R$, the following holds
    \begin{equation*}
        \softmax\left( \sum_{i=1}^M w_i v_i\right)= \softmax\left( \sum_{i=1}^M w_i v_i'\right).
    \end{equation*}
\end{lemma}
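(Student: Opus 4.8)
The plan is to exploit the one nontrivial property of $\softmax$ at play here: its invariance under translation by a multiple of the all-ones vector. Concretely, for every vector $z$ and every scalar $c$ one has $\softmax(z + c\,\vone) = \softmax(z)$, since each coordinate of $\exp(z + c\,\vone)$ picks up the common factor $e^{c}$, which then cancels against the normalizing sum. So the task reduces to arranging, by a fixed (i.e.\ $w$-independent) shift of each $v_i$, that every $v_i$ has vanishing first coordinate while changing $\sum_i w_i v_i$ only by such a multiple of $\vone$.

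First I would define $v_i' := v_i - v_{i,1}\,\vone$ for $i = 1,\ldots,M$; by construction $v_{i,1}' = v_{i,1} - v_{i,1} = 0$, and crucially the $v_i'$ do not depend on the weights $w_1,\ldots,w_M$. Next, for arbitrary $w_1,\ldots,w_M \in \R$, I would compute
\[
\sum_{i=1}^M w_i v_i' = \sum_{i=1}^M w_i v_i - \Bigl(\sum_{i=1}^M w_i v_{i,1}\Bigr)\vone = \sum_{i=1}^M w_i v_i - c\,\vone,
\]
where $c := \sum_{i=1}^M w_i v_{i,1}$ is a scalar. Applying the translation invariance of $\softmax$ with this value of $c$ gives $\softmax\bigl(\sum_i w_i v_i'\bigr) = \softmax\bigl(\sum_i w_i v_i\bigr)$, which is exactly the claim.

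There is no real obstacle here; the only point worth emphasizing is the order of quantifiers. The shift $v_{i,1}\,\vone$ removed from each $v_i$ is chosen once and for all, independently of $w$ — it is exactly the amount needed to zero out the first coordinate — and all $w$-dependence is absorbed into the single scalar $c$, which is harmless precisely because $\softmax$ is shift-invariant. (The statement is phrased with $v_i \in \R^d$, but normalizing the first coordinate and applying $\softmax$ make sense verbatim; in the intended application $d$ is the number of classes $C$, and Lemma~\ref{lem:fnot-equiv} is what lets the subsequent argument restrict attention from $\sF$ to $\sF_0$.)
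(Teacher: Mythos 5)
Your proof is correct and follows the same route as the paper's: both define $v_i' = v_i - v_{i,1}\vone$ and reduce the claim to the shift-invariance of $\softmax$ under adding a scalar multiple of the all-ones vector, with the $w$-dependence absorbed into a single harmless scalar shift. No differences worth noting.
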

This essentially allows us avoid the complications arising from that fact that there exists an uncountable infinitude of distinct vectors, in particular $f(a)$ with various $a\in \R$ and $f(a)_1 = a$, such that $\softmax(f(a)) = \softmax(f({a'}))$ when $a\neq a'$, by simply selecting the unique representative with $f_1 = 0$.
\begin{proof}[Proof of Lemma \ref{lem:fnot-equiv}]
First we will show for all $x \in \mathbb{R}^{d}$ and $c \in \mathbb{R}$ that $\softmax\left(x\right) = \softmax\left(x + c\vec{\mathbbm{1}}\right)$, where $\vec{\mathbbm{1}} \in \mathbb{R}^{d}$ is the ones vector. Let $j \in [d]$ be an arbitrary index. We will show that $\softmax\left(x + c\vec{\mathbbm{1}}\right)_{j} = \softmax\left(x\right)_{j}$. Observe that

\begin{align*}
    \softmax\left(x + c\vec{\mathbbm{1}}\right)_{j} &= \frac{\exp{\left(x_{j} + c\right)}}{\sum_{k=1}^{d}\exp{\left(x_{k} + c\right)}} \\
    &= \frac{\exp{\left(x_{j}\right)}\exp{\left(c\right)}}{\sum_{k=1}^{d}\exp{\left(x_{k}\right)}\exp{\left(c\right)}} \\
    &= \frac{\exp{\left(c\right)}\exp{\left(x_{j}\right)}}{\exp{\left(c\right)}\sum_{k=1}^{d}\exp{\left(x_{k}\right)}} \\
    &= \frac{\exp{\left(x_{j}\right)}}{\sum_{k=1}^{d}\exp{\left(x_{k}\right)}} \\
    &= \softmax\left(x\right)_{j}.
\end{align*}

Because the above equality is true for all indices $j$, it holds for the whole vector, so $\softmax\left(x\right) = \softmax\left(x + c\vec{\mathbbm{1}}\right)$.

Let $v^{\prime}_{i} = v_{i} - v_{i,1}\vec{\mathbbm{1}}$ for all $i$. Note that $v^{\prime}_{i,1} = 0$ for all $i$. Now we have that
\begin{align*}
    \softmax\left(\sum_{i=1}^M w_i v^{\prime}_i\right) &= \softmax\left(\sum_{i=1}^M w_i \left(v_i - v_{i,1}\vec{\mathbbm{1}}\right)\right) \\
    &= \softmax\left(\sum_{i=1}^M w_i v_i - w_iv_{i,1}\vec{\mathbbm{1}}\right) \\
    &= \softmax\left(\left(\sum_{i=1}^M w_i v_i \right) + \left(-\sum_{i=1}^Mw_iv_{i,1}\vec{\mathbbm{1}}\right)\right)\\
    &= \softmax\left(\sum_{i=1}^M w_i v_i  \right),
\end{align*}
where the last line follows from the equality shown at the beginning of this proof.
\end{proof}

Using this we can prove the main theorem.

\begin{proof}[Proof of Theorem \ref{thm:oko-minimizer}]
    From Lemma \ref{lem:fnot-equiv} the theorem statement is true iff it holds for minimizers in $\sF_0$, so we will prove the theorem for $f$ where $f(\cdot)_1 = 0$.
    
    Let $Q_\epsilon(a_0,a_1,a_2) = Q_\epsilon(a) \triangleq   \E_{\gS \sim \mathscr{A}}\left[\ell_{\operatorname{oko}}^{\hard}\left(\gS_y,f\left(\gS_x\right) \right) \right]$, with $f$ satisfying $f(0) = [0,a_0]$, $f(1) = [0,a_1]$, and $f(2) = [0,a_2]$. For the remainder for this proof we will denote the entries of vectors like $a$ beginning with $0$, as above, rather than $1$.
    
    To begin note that
     \begin{align*}
        Q_\epsilon(a) 
        &= \sum_{i=1}^{16}-p_{\epsilon,i}\log\left(V_i(a)\right),
    \end{align*}
    where the $V_i$ are defined in the following table (Table \ref{tab:probabilities}).
   \begin{table}[H]
       \centering
       \begin{tabular}{|c||*{8}{c|}} \hline
           \#   & $y_1$ & $y_2$ & $y_3$ & $x_1$ & $x_2$ & $x_3$ & $P_{\mathscr{A}_\epsilon}\left(x_1,x_2,x_3,y_1,y_2,y_3 \right) = p_{\epsilon,\#}$ &$V_\#(a)$\\\hline
           
            1    & $1$ & $1$ & $2$ & $0$ & $0$ & $0$ & $\left(1-\epsilon\right)^3/2$ & $\frac{\exp(0)}{\exp(0)+ \exp(3a_0)}$ \\
            2    & $2$ & $2$ & $1$ & $0$ & $0$ & $0$ & $\left(1-\epsilon\right)^3/2$ &$\frac{\exp\left(3a_0\right)}{\exp(0)+ \exp(3a_0)}$\\\hline\hline
            3    & $1$ & $1$ & $2$ & $0$ & $0$ & $2$ & $\epsilon\left(1-\epsilon\right)^2/2$ &$\frac{\exp(0)}{\exp(0)+ \exp(2a_0+a_2)}$\\
            4    & $1$ & $1$ & $2$ & $1$ & $0$ & $0$ & $\epsilon\left(1-\epsilon\right)^2/2$ &$\frac{\exp(0)}{\exp(0)+ \exp(2a_0+a_1)}$\\
            5    & $1$ & $1$ & $2$ & $0$ & $1$ & $0$ & $\epsilon\left(1-\epsilon\right)^2/2$ &$\frac{\exp(0)}{\exp(0)+ \exp(2a_0+a_1)}$\\
            6    & $2$ & $2$ & $1$ & $0$ & $0$ & $1$ & $\epsilon\left(1-\epsilon\right)^2/2$ &$\frac{\exp(2a_0 + a_1)}{\exp(0)+ \exp(2a_0+a_1)}$\\
            7    & $2$ & $2$ & $1$ & $2$ & $0$ & $0$ & $\epsilon\left(1-\epsilon\right)^2/2$ &$\frac{\exp(2a_0 + a_2)}{\exp(0)+ \exp(2a_0+a_2)}$\\
            8    & $2$ & $2$ & $1$ & $0$ & $2$ & $0$ & $\epsilon\left(1-\epsilon\right)^2/2$ &$\frac{\exp(2a_0 + a_2)}{\exp(0)+ \exp(2a_0+a_2)}$\\\hline\hline
            9    & $1$ & $1$ & $2$ & $1$ & $1$ & $0$ & $\epsilon^2\left(1-\epsilon\right)/2$ &$\frac{\exp(0)}{\exp(0)+ \exp(a_0+ 2a_1)}$\\
            10    & $1$ & $1$ & $2$ & $1$ & $0$ & $2$ & $\epsilon^2\left(1-\epsilon\right)/2$ &$\frac{\exp(0)}{\exp(0)+ \exp(a_0+ a_1 + a_2)}$\\
            11    & $1$ & $1$ & $2$ & $0$ & $1$ & $2$ & $\epsilon^2\left(1-\epsilon\right)/2$ &$\frac{\exp(0)}{\exp(0)+ \exp(a_0+ a_1 + a_2)}$\\
            12    & $2$ & $2$ & $1$ & $2$ & $2$ & $0$ & $\epsilon^2\left(1-\epsilon\right)/2$ &$\frac{\exp(a_0 + 2a_2)}{\exp(0)+ \exp(a_0+  2a_2)}$\\
            13    & $2$ & $2$ & $1$ & $2$ & $0$ & $1$ & $\epsilon^2\left(1-\epsilon\right)/2$ &$\frac{\exp(a_0 + a_1 + a_2)}{\exp(0)+ \exp(a_0+ a_1 + a_2)}$\\
            14    & $2$ & $2$ & $1$ & $2$ & $1$ & $0$ & $\epsilon^2\left(1-\epsilon\right)/2$ &$\frac{\exp(a_0 + a_1 + a_2)}{\exp(0)+ \exp(a_0+ a_1 + a_2)}$\\\hline\hline
            15    & $1$ & $1$ & $2$ & $1$ & $1$ & $2$ & $\epsilon^3/2$ &$\frac{\exp(0)}{\exp(0)+ \exp(2a_1 + a_2)}$\\
            16    & $2$ & $2$ & $1$ & $2$ & $2$ & $1$ & $\epsilon^3/2$ &$\frac{\exp( a_1 + 2a_2)}{\exp(0)+ \exp(a_1 + 2a_2)}$\\\hline
       \end{tabular}
       \caption{Terms in expectation}
       \label{tab:probabilities}
   \end{table} 
    From this it is clear that $Q_\epsilon$ is continuous for all $\epsilon$.
    
    We will now show that for all $\epsilon \in (0,1)$ there exists a minimizer $a_\epsilon\in \arg\min_a Q_\varepsilon(a)$ by contradiction. For sake of contradiction suppose there exists $\epsilon'$ for which no such minimizer exists. Because $Q_{\epsilon'}$ is bounded from below we can find a sequence $\left(a_i\right)_{i=1}^\infty$ such that $Q_{\epsilon'}\left(a_i\right) \to \inf_aQ_{\epsilon'}(a)$. Because $\sum_{i=1}^{2}-p_{\epsilon',i}\log\left(V_i(a)\right) \to \infty$ as $|a_0| \to \infty$ (here $a_0$ is the first entry of $a$, rather than one vector from the sequence $a_i$, incurring a slight abuse of notation) it follows that the sequence $\left(a_{i,0}\right)_{i=1}^\infty$ must be bounded and thus contains a convergent subsequence. We will now assume that $\left(a_i\right)_{i=1}^\infty$ is such a sequence.
    Note that $\sum_{i=3}^{8}-p_{\epsilon',i}\log\left(V_i(a_j)\right)$ would diverge were $a_{j,1}$ or $a_{j,2}$ to diverge (because $a_{j,0}$ remains bounded) and, because every term $-p_{\epsilon,i}\log\left(V_i(a)\right)>0$, $\left(a_i\right)_{i=1}^\infty$ must remain bounded otherwise $Q_{\epsilon}(a_j)$ would diverge. Thus $a_i$ has a convergent subsequence and its limit must be a minimizer by continuity, a contradiction. Therefore minimizers $a_\epsilon$ exist.
    
   Let $a^\star = [0,-\log(2),\log(2)]$ and define
    \begin{align*}
        &Q_1:a\mapsto \sum_{i=1}^{2}-p_{\epsilon,i}\log\left(V_i(a)\right)\\
        &Q_2:a\mapsto \sum_{i=3}^{8}-p_{\epsilon,i}\log\left(V_i(a)\right)\\
        &Q_3:a\mapsto \sum_{i=9}^{16}-p_{\epsilon,i}\log\left(V_i(a)\right),
    \end{align*}
    
    with the $\epsilon$ subscript for $Q$ left implicit to simplify notation. Observe that $a^\star_0$ minimizes $Q_1$, which holds iff $a^\star_0 = 0$, and that $a^\star$ minimizes $Q_2$ with $a^\star_0$ fixed to 0. Showing that these are minima follows from simple calculus and algebra. The functions $Q_{1}$ and $Q_{2}$ can be shown to be strictly convex by showing that the second derivative/Hessian is positive/positive-definite. The minima can be found by simply taking the derivative/gradient of these terms and setting them equal to zero. All instances in this proof where we assert strict convexity and find minima follow from this kind of argument. 
    
    We will now show that $a_{\epsilon,0} \to 0$ by contradiction. All limits are with respect to $\epsilon \to 0$.  For the sake of contradiction assume there is a 
 decreasing subsequence $\left(\epsilon_i\right)_{i=1}^\infty$ which converges to 0 and $\delta >0$ such that $\left|a_{\epsilon_i,0} - a^\star_0\right| \ge \delta$. Since $a \mapsto \sum_{i=1}^2 -\log\left(V_i\left(a\right)\right)$ is strictly convex in its first index and is minimized when $a_0 =0$, there exists $\Delta_1>0$ such that $\sum_{i=1}^2-\log\left(V_i\left(a_{\epsilon_j}\right)\right)--\log\left(V_i\left(a^\star\right)\right)\ge \Delta_1$, for all $j$ (we've left in the double negative to emphasize that we are subtracting the convex term). Now we have that
\begin{align*}
    Q_{\epsilon_i}(a_{\epsilon_i}) - Q_{\epsilon_i}(a^\star)
    &\ge \frac{\left(1-\epsilon_i \right)^3}{2}\Delta_1 - \sum_{j=2}^3 Q_j \left(a^\star\right)\\
    &= \frac{\left(1-\epsilon_i \right)^3}{2} \Delta_1 - O(\epsilon_i)
\end{align*} 
and thus, for sufficiently large $i$, $Q_{\epsilon_i}(a_{\epsilon_i}) > Q_{\epsilon_i}(a^\star)$, a contradiction. We have now shown that $a_{\epsilon,0} \to 0$.

We will now show that $a_{\epsilon,1}$ and  $a_{\epsilon,2}$ must be bounded, again via contradiction. As before we can consider the case where $\epsilon_i$ converges to zero with $a_{\epsilon_i,1}$ diverging (the proof is virtually identical for $a_{\epsilon_i,2}$). Note that a minimizer of $Q_\epsilon$ is also a minimizer of $\epsilon^{-1}Q_\epsilon$. Now we have
\begin{align}
    &\epsilon_i^{-1} Q_{\epsilon_i}(a_{\epsilon_i}) - \epsilon_i^{-1} Q_{\epsilon_i}(a^\star)\notag
    \\&= \underbrace{\epsilon_i^{-1}Q_1(a_{\epsilon_i}) - \epsilon_i^{-1} Q_1(a^\star)}_{\ge 0}+ \epsilon_i^{-1}Q_2(a_{\epsilon_i}) - \epsilon_i^{-1} Q_2(a^\star)+ \underbrace{\epsilon_i^{-1}Q_3(a_{\epsilon_i})}_{\ge 0} - \underbrace{\epsilon_i^{-1} Q_3(a^\star)}_{O(\epsilon_i)}\notag\\
    &\ge  \epsilon_i^{-1}Q_2(a_{\epsilon_i}) - \epsilon_i^{-1} Q_2(a^\star)+ O(\epsilon_i).\label{eqn:proof-a1-lower}
\end{align}
Observe that if $a_{\epsilon_i,1}$ diverges with $a_{\epsilon_i,0} \to 0$ then the first term in the last line goes to $\infty$ (see terms 4 and 6 in Table \ref{tab:probabilities}) which would contradict the optimality of $a_{\epsilon_i}$ due to Eq.~\ref{eqn:proof-a1-lower}. Thus we have that $a_\epsilon$ must remain bounded as $\epsilon \to 0$.

Since $a_\epsilon$ is bounded, there exists yet another sequence $\epsilon_i$ such that $a_{\epsilon_i}$ converges. We will call this limit $a'$ noting that $a'_0 = 0$. Again we have
\begin{align}
    &\epsilon_i^{-1} Q_{\epsilon_i}(a_{\epsilon_i}) - \epsilon_i^{-1} Q_{\epsilon_i}(a^\star)\notag
    \\&= \underbrace{\epsilon_i^{-1}Q_1(a_{\epsilon_i}) - \epsilon_i^{-1} Q_1(a^\star)}_{\ge 0}+ \epsilon_i^{-1}Q_2(a_{\epsilon_i}) - \epsilon_i^{-1} Q_2(a^\star)+ \underbrace{Q_3(a_{\epsilon_i})}_{\ge 0} - \underbrace{\epsilon_i^{-1} Q_3(a^\star)}_{O(\epsilon_i)}\notag \\
    &\ge  \epsilon_i^{-1}Q_2(a_{\epsilon_i}) - \epsilon_i^{-1} Q_2(a^\star) - \underbrace{\epsilon_i^{-1} Q_3(a^\star)}_{O(\epsilon_i)}\notag\\
    &=  \frac{(1-\epsilon_i)^2}{2} \left(\sum_{j=3}^8  -\log\left(V_j\left(a_{\epsilon_i})\right)\right) - -\log\left(V_j\left(a^\star \right)\right)\right)
    - O(\epsilon_i). \label{eqn:proof-second-bound}
\end{align}
Note that the interior summation of the last line converges to  $\sum_{i=3}^8  -\log\left(V_i\left(a'\right)\right) - -\log\left(V_i\left(a^\star \right)\right)$ which must equal zero since $\sum_{i=3}^8  -\log\left(V_i\left(\cdot\right)\right)$ is continuous and otherwise the optimality of $a_{\epsilon_i}$ would be violated in Eq.~\ref{eqn:proof-second-bound} for sufficiently large $i$. Finally note that $\sum_{i=3}^8  -\log\left(V_i\left(\cdot\right)\right)$ is strictly convex and thus $a' = a^\star$. Since $a_\epsilon$ is bounded for small $\epsilon$ and because every convergent subsequence $\epsilon_i \to 0$ causes $a_{\epsilon_i} \to a^\star$, it follows that $a_\epsilon \to a^\star$.
\end{proof}
\section{Calibration}
\label{appx:calibration}

Here, we provide more intuition about our new entropy-based measure of datapoint calibration. In a sense, our measure is a normalized scoring rule that provides localized probabilistic insight. 

\subsection{Background}
Rare classes make it difficult for calibration in practice, that is the predicted probabilities match the true probabilities of an event occurring, essential for making reliable decisions based on the model's predictions. Specifically, if $f : \gX \to [0, 1]$ is a probabilistic model for binary labels, we want $\E[y | f(x) = v] = v$. For multi-class labels, this generalizes for each class such that for predicted probability vector $p \in [0,1]^C$, $\E[\ve_y | f(x) = p] = p$. 

There are also various ways to measure calibration error. The most common is the expected calibration error; however, arbitrarily small perturbations to the predictor $f$ can cause large fluctuations in $ECE(f)$ \citep{kakade2004deterministic, foster2018smooth}. As a simple example, consider the uniform distribution over a two-point space $\gX = \{x_1, x_2\}$, where the labels are $y_1=0, y_2 =1$ respectively. The predictor f which simply predicts 1/2 is perfectly calibrated, so $ECE(f) = 0$. However, a more accurate estimator $f(x_1) = 1/2 -\epsilon$ and $f(x_2) = 1/2 + \epsilon$ for any small $\epsilon > 0$ suffers calibration error $ECE(f) = 1/2 -\epsilon$. Note that this discontinuity also presents a barrier to popular heuristics for estimating the ECE. However, it has been show that ECE and most binning based variants give an upper bound on the true distance to calibration \citep{blasiok2022unifying}. It has been shown that sample-efficient continuous, complete and sound calibration measures do exist and they all generally measure a distance to the most calibrated predictor, and many of them are similar to binned ECE. Therefore, we utilize binned ECE and brier score as two notions of calibration.

In the multiclass setting, while there is not a consensus on how to measure calibration error, we use the most common way to extend calibration metrics, which is widely implemented via the one-vs-all method  \citep{zadrozny2002transforming}. Recently, there were other multi-class calibration measures proposed, such as class-wise calibration \citep{kull2019beyond} and decision calibration \citep{zhao2021calibrating}.

\noindent {\bf Scoring Rules and Likelihood} A scoring rule provides a local measure of fit or calibration given a predictive distribution and its corresponding labels. Specifically, for a datapoint $(x_i,y_i)$, let $\hat{y}_i$ be the predictive distribution of a model. Then, a scoring rule is any function that outputs a scalar evaluation of the goodness of fit: $S(y_i,\hat{y}_i)$. Such a scoring rule is called \emph{proper} if $\E_{y \sim Q}[S(y,\hat{y})] $ is maximized when $\hat{y} = Q$, meaning that when the predictive distribution is perfectly calibrated and equal to the true label distribution, the score is maximal. 

A common proper scoring rule is the log likelihood. Recall that for distributions over $\left[C\right]$, $p$ and $q$ that cross-entropy is $H(p, q) = -\sum_i p_i\log(q_i)$ and entropy is $H(p) = H(p,p) = -\sum_i p_i \log(p_i)$. In the classification setting the log likelihood is equivalent to the negative cross-entropy $S( y, \hat{y}) = -H(y, \hat{y}) = \ve_y^\top \log(\hat{y})$. Indeed, we see that for any predictive distribution $\hat{y}$ and label distribution $Q$, $\E_{y\sim Q}[S(y, \hat{y})] = -H(Q, \hat{y})$. If $\hat{y} = Q$ and our label distribution is uniform, then this is equal to the maximum entropy of $\log(|C|)$. When $\hat{y}$ is perfectly calibrated, the average negative cross-entropy will be equal to the negative entropy: 
$\E_{y \sim \hat{y}}[\ve_y^\top \log(\hat{y})] = \sum_i \hat{y}_i\log(\hat{y}_i) = - H(\hat{y})$. It is easy to see that this is a proper rule since for any predictive distribution $\hat{y}$ and label distribution $Q$, $\E_{y \sim Q}[S(y, Q)] - \E_{y \sim Q}[S(y, \hat{y})] = -H(Q) + H(Q, \hat{y}) = KL( Q || \hat{y}) \geq 0 $, where the relative entropy or KL divergence is non-negative. 

The definition of proper scoring rule works well when the label distribution is static and continuous, as when $Q$ is a point mass, the KL divergence becomes $H(Q, \hat{y})$, which is the case with hard labels. However, in most settings, $y$ is dependent on $x$ and $Q = \E_{\left(x,y\right) \sim \mu}\left[\ve_y|x \right]$ and $H(Q)$ is not known for each $x$. Furthermore, this gets even trickier when the predicted classes are non uniform, but under-represented classes require better calibration.

\noindent {\bf Cross-Entropy vs Entropy} Due to the limitations of proper scoring rules, such calibrative measures are not meaningful measures of over-confidence on a per-datapoint level. Specifically, we consider the question of whether each datapoint is calibrated and note that the scoring rule of $H(y,\hat{y})$ does not give an inherent notion of calibration. Instead, we motivate the following definition of relative cross-entropy by noting that if $y \sim \hat{y}$, then $S(y, \hat{y}) - H(\hat{y})$ is a random variable with expectation 0.

\subsection{Proofs of Lemmas in \S~\ref{sec:oko_properties}}
\begin{proof}[Proof of Lemma \ref{lem:hard-rc}]
Let $i$ be the index corresponding to $y$. Then, by definitions of corresponding entropy measures,
    \begin{align}
        RC(y, \hat{y}) &= -\log (\hat{y}_i) - \left(\sum_{j} -\hat{y}_j\log(\hat{y}_j)\right) \notag\\
        &= (\hat{y}_i- 1)\log (\hat{y}_i) - \left(\sum_{j \neq i} -\hat{y}_j\log(\hat{y}_j)\right) \label{eqn:subtrahend}\\
        &\geq (\hat{y}_i- 1)\log (\hat{y}_i) + (1- \hat{y}_i)  \log\left(\frac{1-\hat{y}_i}{|C|-1}\right)\label{eqn:max-entropy} \\
        &= (1- \hat{y}_i)\left[  \log\left(\frac{1-\hat{y}_i}{|C|-1} \right)  - \log(\hat{y}_i) \right]. \label{eqn:positive-terms}
    \end{align}
The third line uses the principle that entropy is maximized when uniform. To see this let $\alpha = \sum_{j\neq i} \hat{y}_j= 1 - \hat{y}_i$ and observe that the maximizing of the left hand side of the following equation admits the same argument as the subtrahend in Eq.~\ref{eqn:subtrahend},
\begin{gather*}
    \alpha^{-1} \sum_{j \neq i} -\hat{y}_j\left(\log(\hat{y}_j)- \log \alpha\right) = \sum_{j \neq i} -(\hat{y}_j/\alpha)\log(\hat{y}_j/\alpha),
\end{gather*}
and the right hand side is maximized with a uniform distribution.
The lemma follows since both terms in Eq.~\ref{eqn:positive-terms} are positive. 
\end{proof}

\begin{proof}[Proof of Lemma \ref{lem:perfect}]
This follows since that $\hat{y}$ is a perfectly calibrated predictor, then 
$$ E_{y \sim \hat{y}}[RC(y, \hat{y})]  = E_{y \sim\hat{y}}[C(y, \hat{y})] - H(\hat{y}) = 0.$$
\end{proof}

%%%%%%%%%%%%%%%%%%%%%

\section{Additional experimental results}
\label{appx:exp_results}

In this section, we expand upon the results that we presented in \S\ref{sec:results-calibration} and show additional experimental results and visualizations for model calibration. We start by presenting reliability diagrams for the heavy-tailed class distribution settings and continue with uncertainty and expected calibration error analyses.
\begin{figure}[ht!]
\vspace{-1em}
\centering
    \begin{subfigure}{1.0\textwidth}
        \centering
        \includegraphics[width=1.0\textwidth]{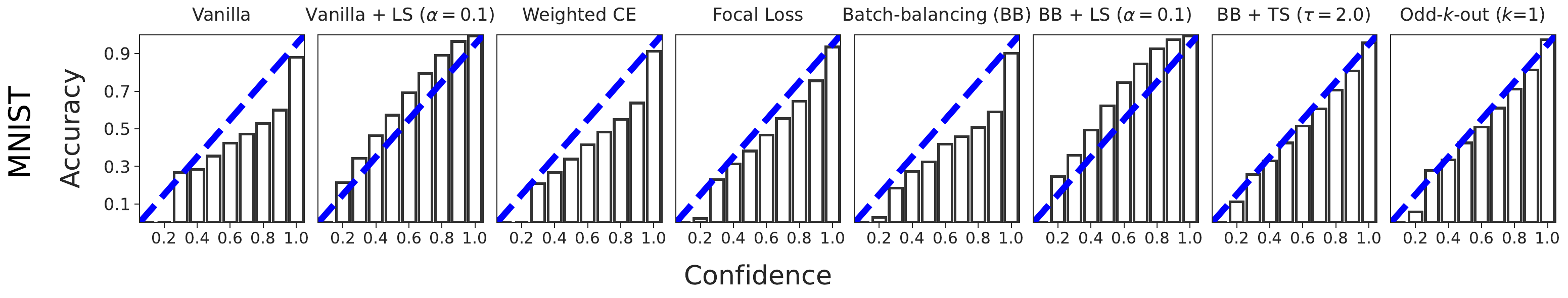}
    \end{subfigure}
    \begin{subfigure}{1.0\textwidth}
        \centering
        \includegraphics[width=1.0\textwidth]{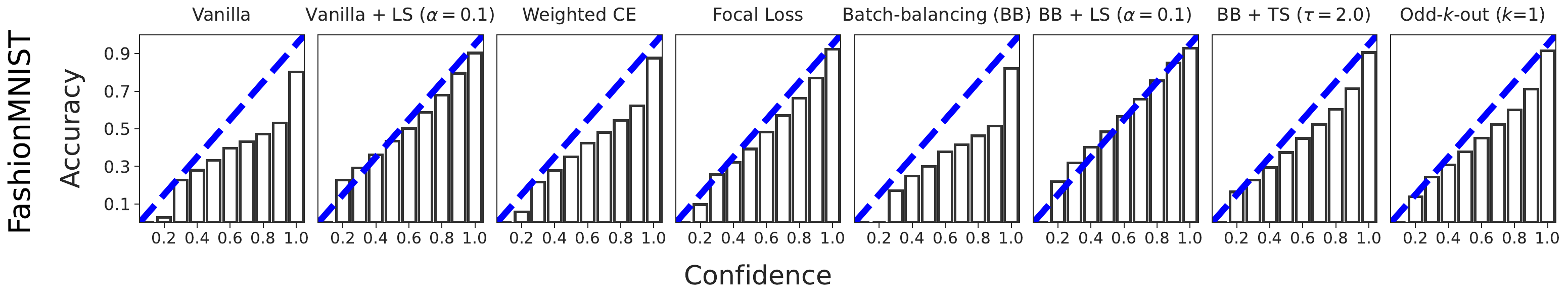}
    \end{subfigure}
    \begin{subfigure}{1.0\textwidth}
        \centering
        \includegraphics[width=1.0\textwidth]{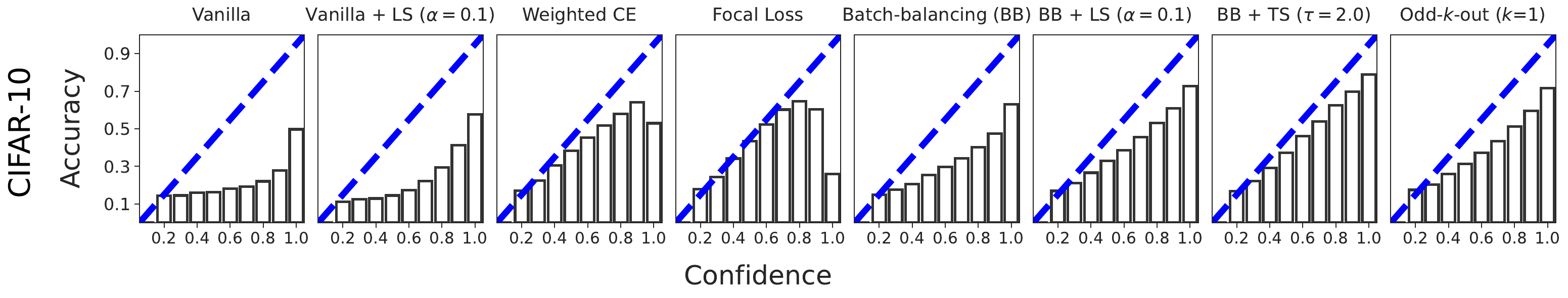}
    \end{subfigure}
    \begin{subfigure}{1.0\textwidth}
        \centering
        \includegraphics[width=1.0\textwidth]{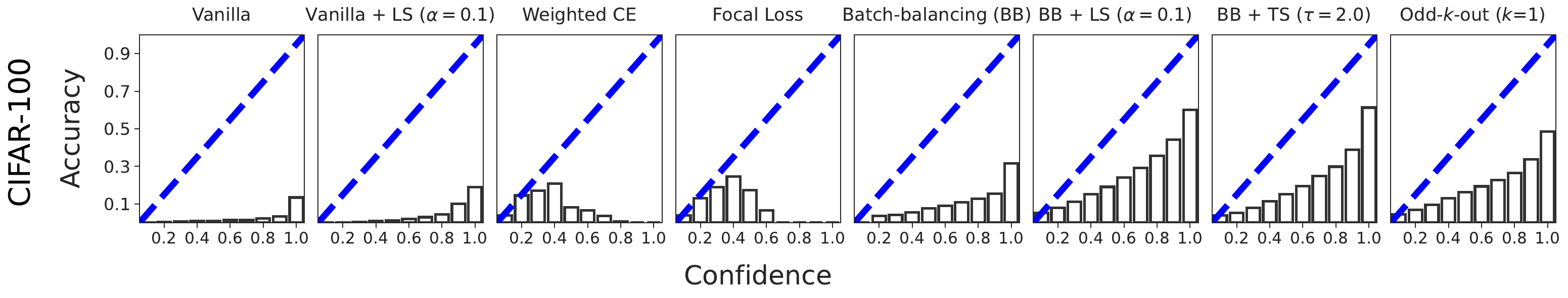}
    \end{subfigure}
\caption{Reliability diagrams for heavy-tailed MNIST, FashionMNIST, CIFAR-10, and CIFAR-100. Confidence and accuracy scores are averaged over five random seeds and across the different number of training data points. Dashed diagonal lines indicate the best possible calibration.}
\vspace{-1em}
\label{fig:reliability_heavy_tail-app}
\end{figure}

\subsection{Reliability}
\label{appx:exp_results-reliability}

In Figure~\ref{fig:reliability_heavy_tail-app}, we present reliability diagrams for heavy-tailed MNIST, FashionMNIST, CIFAR-10 and CIFAR-100. Both confidence and accuracy scores are averaged over five random seeds and across the number of training data points, similarly to Figure~\ref{fig:reliability_balanced} in \S\ref{sec:experiments}. Note that optimal calibration occurs along the diagonal of a reliability diagram, highlighted by the blue dashed lines.  For training regimes with heavy-tailed class distributions, either OKO, batch-balancing in combination with label smoothing, or batch-balancing in combination with posthoc temperature scaling achieves the best calibration on the held-out test set.

\subsection{Uncertainty}
\label{appx:exp_results-uncertainty}

\begin{figure}[ht!]
    \centering
    \begin{subfigure}{1.0\textwidth}
        \centering
    \includegraphics[width=1.0\textwidth]{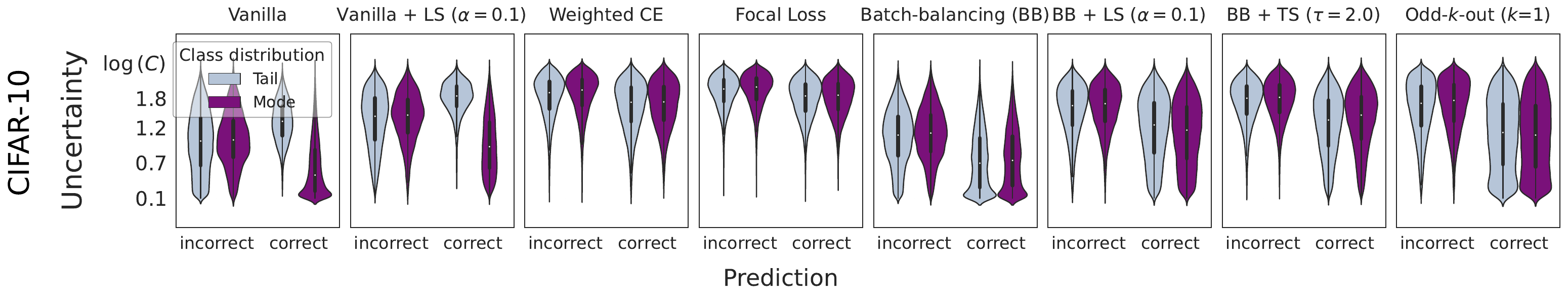}
    \end{subfigure}
    \begin{subfigure}{1.0\textwidth}
        \centering
    \includegraphics[width=1.0\textwidth]{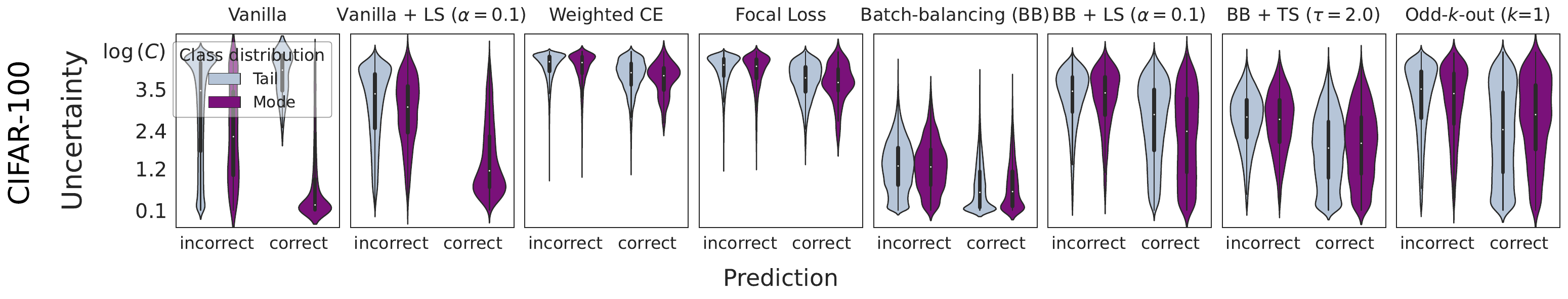}
    \end{subfigure}
\caption{Here, we show the distribution of entropies of the predicted probability distributions for individual test data points across all training settings partitioned into correct and incorrect predictions respectively.}
\label{fig:uncertainty_of_predictions-heavy-tailed}
\end{figure}

\begin{figure}[ht!]
    \centering
    \begin{subfigure}{1.0\textwidth}
        \centering
    \includegraphics[width=1.0\textwidth]{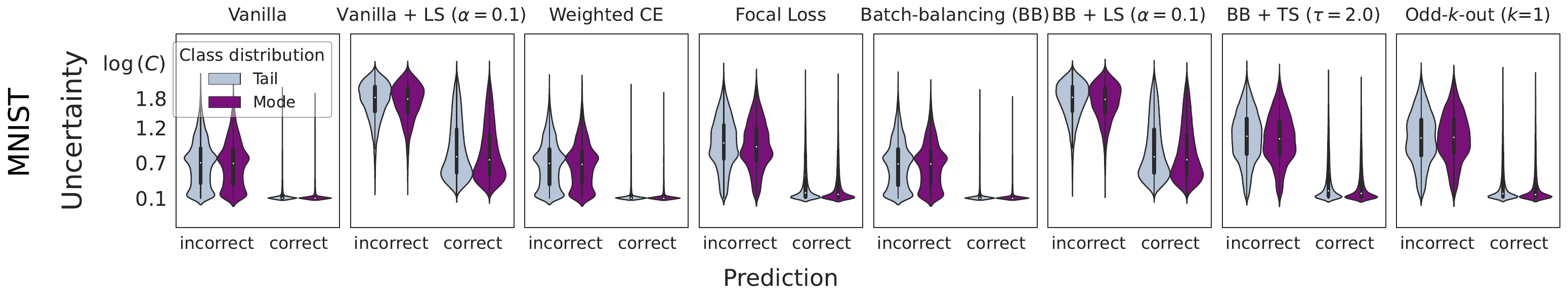}
    \end{subfigure}
    \begin{subfigure}{1.0\textwidth}
        \centering
    \includegraphics[width=1.0\textwidth]{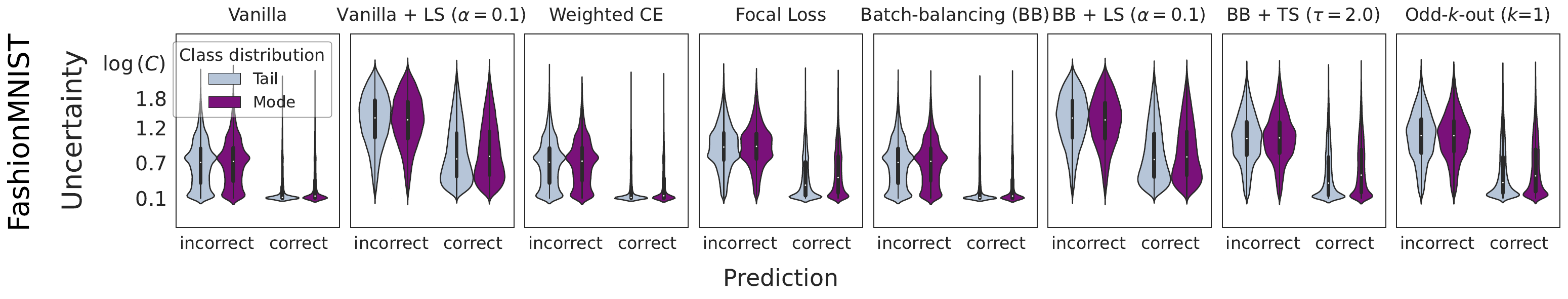}
    \end{subfigure}
        \centering
    \begin{subfigure}{1.0\textwidth}
        \centering
    \includegraphics[width=1.0\textwidth]{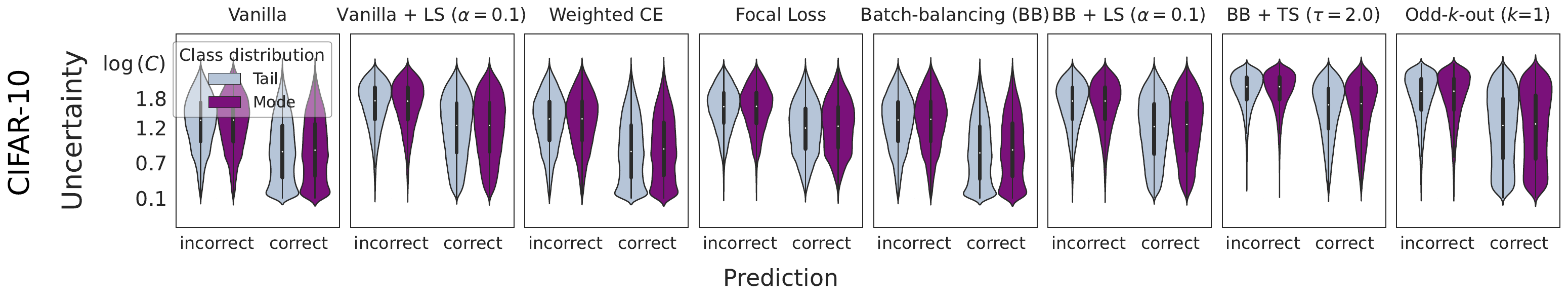}
    \end{subfigure}
    \begin{subfigure}{1.0\textwidth}
        \centering
    \includegraphics[width=1.0\textwidth]{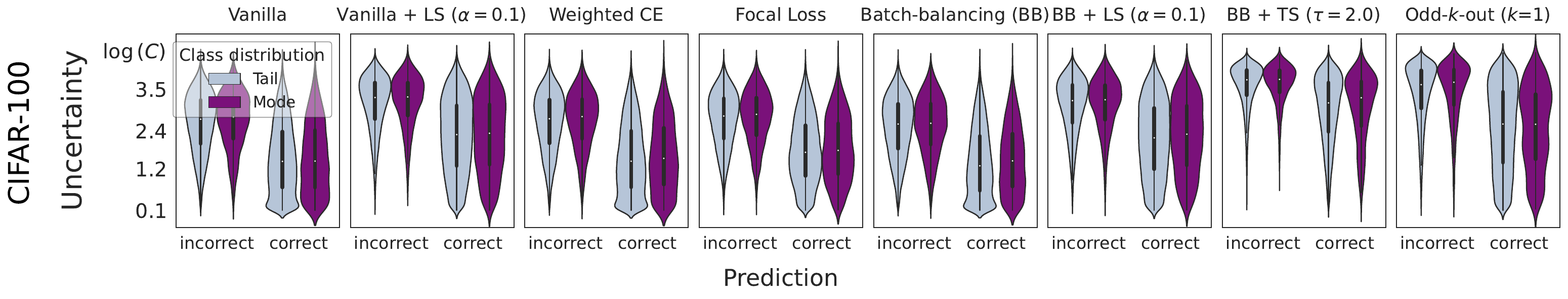}
    \end{subfigure}
\caption{Here, we show the distribution of entropies of the predicted probability distributions for individual test data points across all training settings with a uniform class distribution partitioned into correct and incorrect predictions respectively.}
\label{fig:uncertainty_of_predictions-balanced}
\end{figure}

In Figure~\ref{fig:uncertainty_of_predictions-heavy-tailed} we show the distribution of entropies of the predicted probability distributions for individual test data points across all heavy-tailed training settings for CIFAR-10 and CIFAR-100 respectively. In addition to the distribution of entropies of the predicted probability distribution for heavy-tailed training settings, in Figure~\ref{fig:uncertainty_of_predictions-balanced} we show similar distributions for individual test data points across all balanced training settings for all four datasets. We find OKO to be very certain --- $H(Q)$ is close to $\log\left(1\right)$ -- for the majority of correct predictions and to be highly uncertain --- $H(Q)$ is close to $\log\left(C\right)$ -- for the majority of incorrect predictions across all datasets. Batch-balancing in combination with either label smoothing or temperature scaling shows a similar distribution of entropies for the incorrect predictions, but is often too uncertain for the correct predictions, indicating random guesses rather than certain predictions for a significant number of predictions (see Fig.~\ref{fig:uncertainty_of_predictions-heavy-tailed}).

\begin{figure}[ht!]
    \centering
    \begin{subfigure}{1.0\textwidth}
        \centering
    \includegraphics[width=1.0\textwidth]{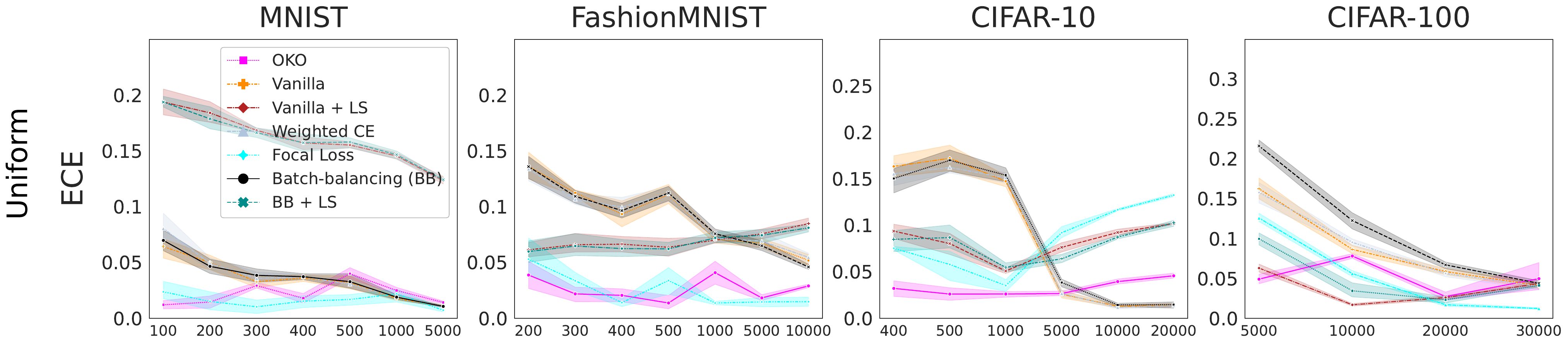}
    \end{subfigure}
    \begin{subfigure}{1.0\textwidth}
        \centering
    \includegraphics[width=1.0\textwidth]{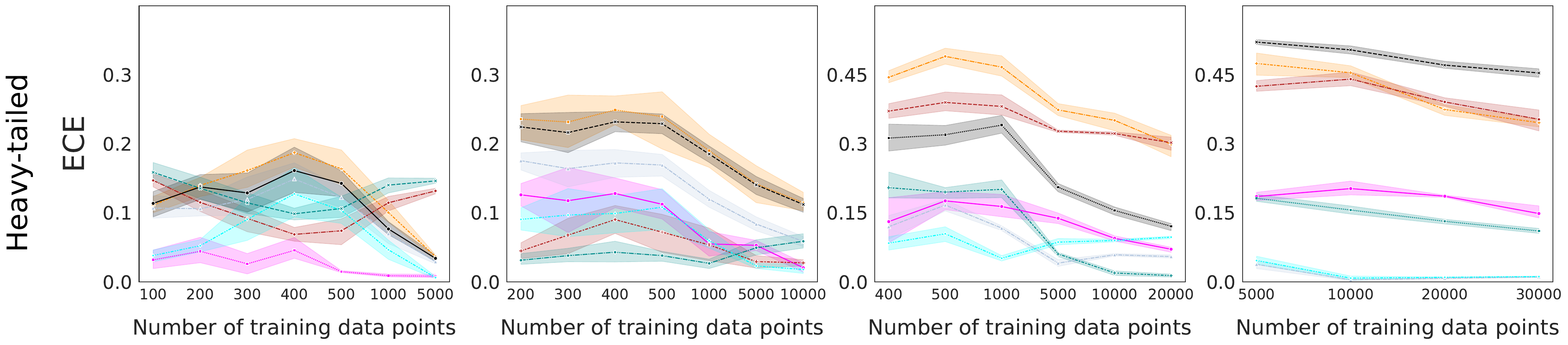}
    \end{subfigure}
\caption{ECE as a function of different numbers of training data points. Error bands depict 95\% CIs and are computed over five random seeds for all training settings and methods. Top: Uniform class distribution during training. Bottom: Heavy-tailed class distribution during training.}
\label{fig:ece}
\end{figure}

\subsection{Expected Calibration Error (ECE)}
\label{appx:exp_results-ece}

Here we present ECE as a function of the number of data points used during training for both uniform and heavy-tailed class distributions for all four datasets considered in our analyses. We remark that for every method the ECE was computed on the official test set. For balanced MNIST, FashionMNIST, and CIFAR-10 as well as for heavy-tailed MNIST OKO achieves a lower ECE than any other training method. For the other training settings, OKO is either on par with label smoothing or achieves a slightly larger ECE compared to label smoothing. This suggests that OKO is either better calibrated than or equally well-calibrated as label smoothing. The results are most striking in the low data settings.

\subsection{How to select \texorpdfstring{$k$}{TEXT}, the number of odd classes}
\label{appx:exp_results-ablation}

In this section, we compare different values of $k$ for generalization performance and calibration. Recall that $k$ determines the number of examples coming from the odd classes --- the classes that are different from the pair class --- in a set $\mathcal{S}$.

\noindent {\bf Odd class examples are crucial}. Removing any odd class examples from the training sets --- i.e., setting $k$ to zero --- decreases generalization performance and worsens calibration across almost all training settings (see Fig.~\ref{fig:ablations-accuracy}, Fig.~\ref{fig:ablations-ece}, and Fig.~\ref{fig:ablations-entropy_vs_centropy}). Although odd class examples are ignored in the training labels, they are crucial for OKO's superior classification and calibration performance. Odd class examples appear to be particularly important for heavy-tailed training settings.

\noindent {\bf The value of $k$ does not really matter}. Concerning test set accuracy, we find that although odd class examples are crucial, OKO is fairly insensitive to the particular value of $k$ apart from balanced CIFAR-10 and CIFAR-100 where $k=1$ achieves stronger generalization performance than larger values of $k$ (see Fig.~\ref{fig:ablations-accuracy}). However, this may be due to the additional classification head that we used for predicting the odd class in a set rather than a special advantage of $k=1$ over larger values of $k$.

We find larger values of $k$ to result in worse ECEs for training settings with a uniform class distribution and similarly low or slightly lower ECEs for heavy-tailed class distribution settings (see Fig.~\ref{fig:ablations-ece}). Similarly, we find the mean absolute difference (MAE) between the average cross-entropy errors, $\bar{H}(P, Q)$, and the average entropies, $\bar{H}(Q)$, on the test sets for different numbers of training data points to be slightly lower for $k=1$ than for larger values of $k$ for uniform class distribution settings and equally low or slightly larger for $k=1$ compared to larger values of $k$ for heavy-tailed class distribution settings (see Fig.~\ref{fig:ablations-entropy_vs_centropy} for a visualization of this relationship and Tab.~\ref{tab:ablations-xent_vs_entropy} for a quantification thereof). Across all training settings the MAE between $\bar{H}(P, Q)$ and $\bar{H}(Q)$ is the largest and therefore the worst for $k=0$.

\begin{figure}[ht!]
\begin{subfigure}{1.\textwidth}
    \centering
    \includegraphics[width=1.\textwidth]{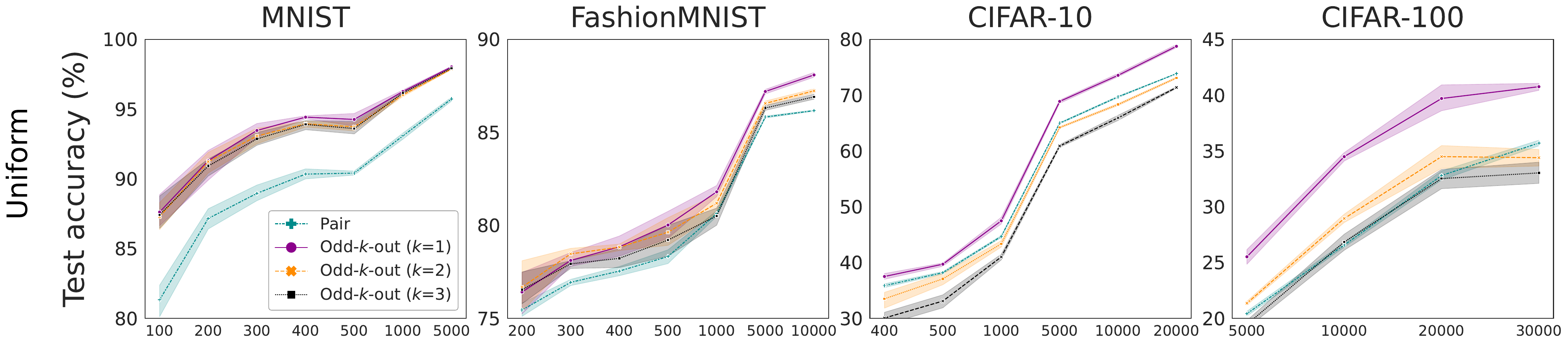}
\end{subfigure}
\begin{subfigure}{1.\textwidth}
    \centering
    \includegraphics[width=1.\textwidth]{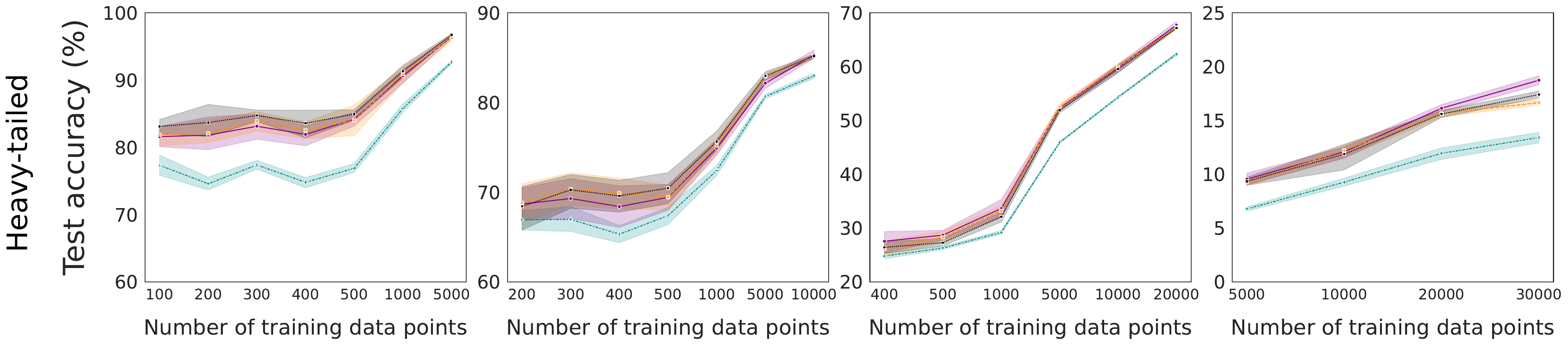}
\end{subfigure}
\caption{Test set accuracy in \% as a function of different numbers of data points used during training. Error bands depict 95\% CIs and are computed over five random seeds for all training settings and values of $k$. Top: Uniform class distribution during training. Bottom: Heavy-tailed class distribution.}
\label{fig:ablations-accuracy}
\end{figure}

\begin{figure}[ht!]
\begin{subfigure}{1.\textwidth}
    \centering
    \includegraphics[width=1.\textwidth]{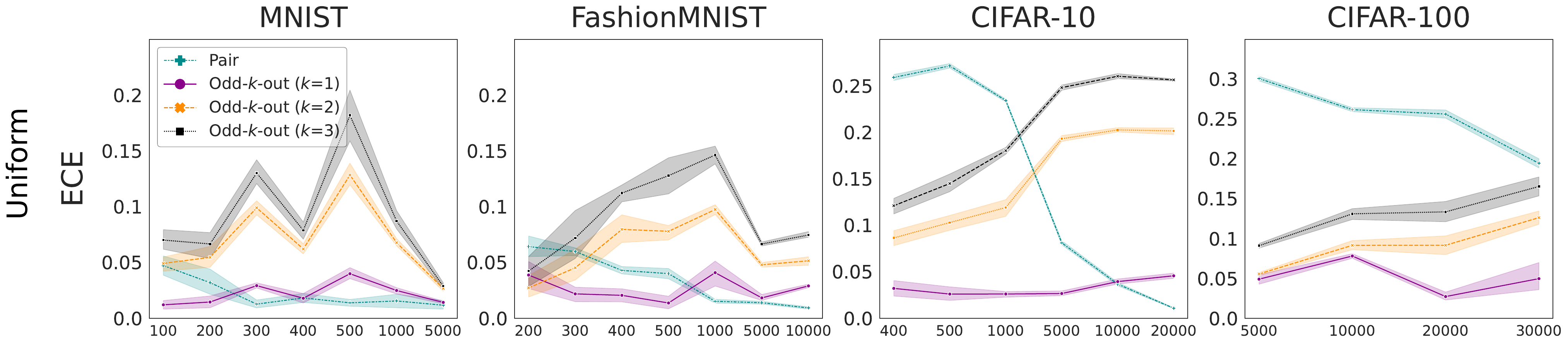}
\end{subfigure}
\begin{subfigure}{1.\textwidth}
    \centering
    \includegraphics[width=1.\textwidth]{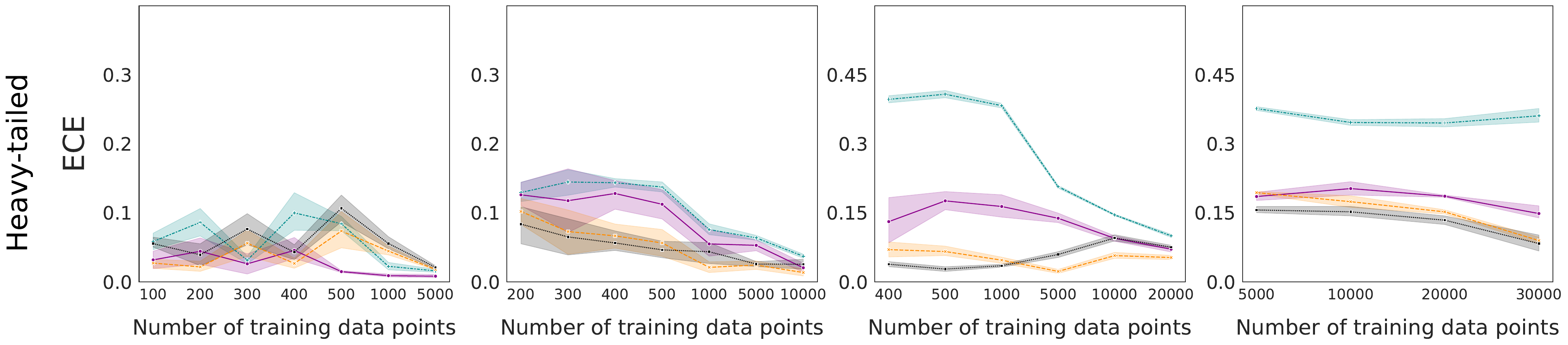}
\end{subfigure}
\caption{ECE as a function of different numbers of training data points. Error bands depict 95\% CIs and are computed over five random seeds for all training settings and values of $k$. Top: Uniform class distribution during training. Bottom: Heavy-tailed class distribution during training.}
\label{fig:ablations-ece}
\end{figure}

\begin{figure}[ht!]
\begin{subfigure}{1.\textwidth}
    \centering
    \includegraphics[width=1.\textwidth]{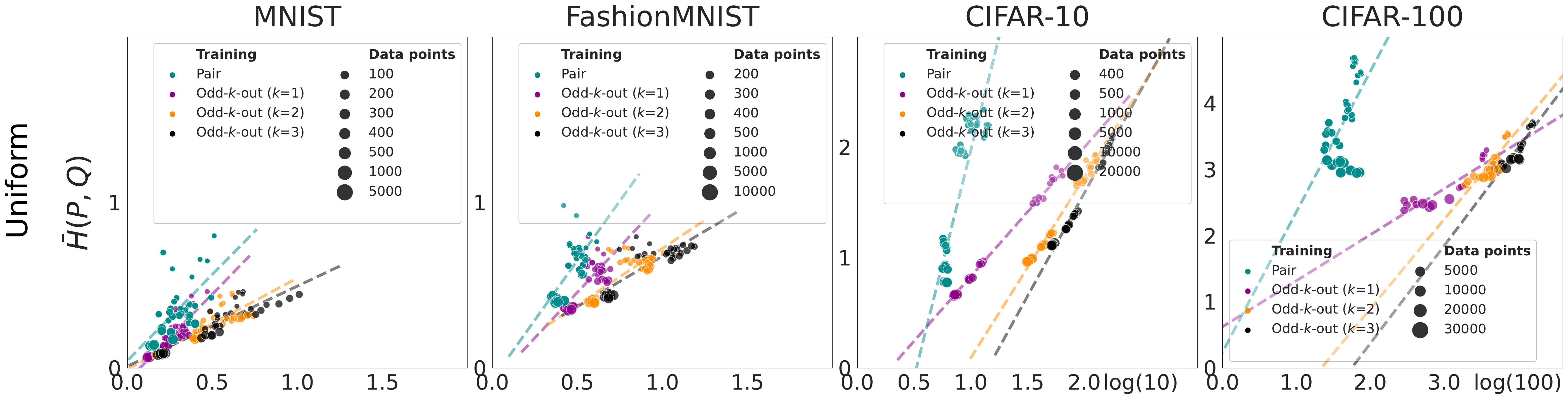}
\end{subfigure}
\begin{subfigure}{1.\textwidth}
    \centering
    \includegraphics[width=1.\textwidth]{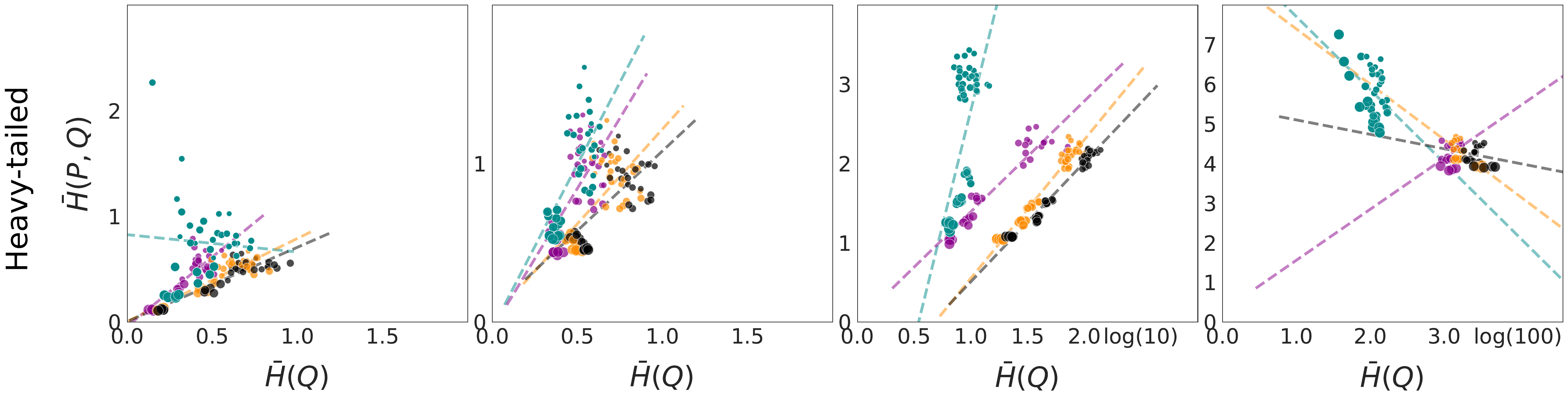}
\end{subfigure}
\caption{Average cross-entropy as a function of the average entropy for different numbers of training data points and different numbers of $k$. Top: Uniform class distribution during training. Bottom: Heavy-tailed class distribution during training.}
\label{fig:ablations-entropy_vs_centropy}
\end{figure}

\begin{table}[ht!]
\caption{MAE between entropies and cross-entropies averaged over the entire test set for different numbers of training data points. Lower is better and therefore bolded. This quantifies the relationships shown in Fig.~\ref{fig:ablations-entropy_vs_centropy}.}
\label{tab:ablations-xent_vs_entropy}
\centering
\scriptsize
\begin{tabular}{l|cc|cc|cc|cc}
\toprule
 & \multicolumn{2}{c}{MNIST} & \multicolumn{2}{c}{FashionMNIST} & \multicolumn{2}{c}{CIFAR-10} & \multicolumn{2}{c}{CIFAR-100} \\
Training $\setminus$ Distribution & uniform & heavy-tailed & uniform & heavy-tailed & uniform & heavy-tailed & uniform & heavy-tailed \\
\midrule
Odd-$k$-out ($k$=0) / Pair  &    0.091  & 0.318  &  0.136   & 0.542 & 0.657 & 1.370  & 2.088  & 3.739 \\
Odd-$k$-out ($k$=1) &    \textbf{0.073}  & \textbf{0.094}  &  \textbf{0.080}   & 0.334 & \textbf{0.116} & 0.498  & \textbf{0.314}  & 1.164 \\
Odd-$k$-out ($k$=2) &   0.224  & 0.123  &  0.204   & 0.208 & 0.352 &  0.194  & 0.527  & 1.040 \\
Odd-$k$-out ($k$=3) &    0.285 & 0.180  &  0.279   &  \textbf{0.160} & 0.376 & \textbf{0.153}  & 0.682  &  \textbf{0.769} \\
\bottomrule
\end{tabular}
\end{table}
\

\subsection{Hard vs. soft loss optimization}
\label{appx:exp_results-ablation-hardvssoft}

\noindent {\bf Soft loss optimization spreads out probability mass almost uniformly}. Empirically, we find that the soft loss optimization produces model predictions that tend to be more uncertain compared to the predictions obtained from using the hard loss (see the large entropy values for models trained with the soft loss in Fig.~\ref{fig:ablations-entropy_vs_centropy-hardvssoft}). The soft loss optimization appears to result in output logits whose probability mass is spread out almost uniformly across classes and, thus, produces probabilistic outputs with high entropy values (often close to $\log{C}$). Fig.~\ref{fig:ablations-entropy_vs_centropy-hardvssoft} visually depicts this phenomenon. These uniformly spread out probabilistic outputs lead to worse ECEs, where differences between the hard and soft loss optimization are more substantial for MNIST and FashionMNIST than for CIFAR-10 and CIFAR-100 respectively (see Fig.~\ref{fig:ablations-ece-hardvssoft}). Interestingly, for CIFAR-100 there is often barely any difference in the ECEs between the hard and soft loss optimization in the balanced class distribution setting, and in the heavy-tailed distribution setting, soft targets even yield lower ECEs than hard targets.

The test set accuracy of models trained with the soft loss is substantially worse in all balanced class distribution training settings (the differences appear to be more pronounced for MNIST and CIFAR-100 than for FashionMNIST and CIFAR-10 respectively), but classification performance is only slightly worse compared to the hard loss in the heavy-tailed distribution settings (see Fig.~\ref{fig:ablations-accuracy-hardvssoft}).

\begin{figure}[ht!]
\begin{subfigure}{1.\textwidth}
    \centering
    \includegraphics[width=1.\textwidth]{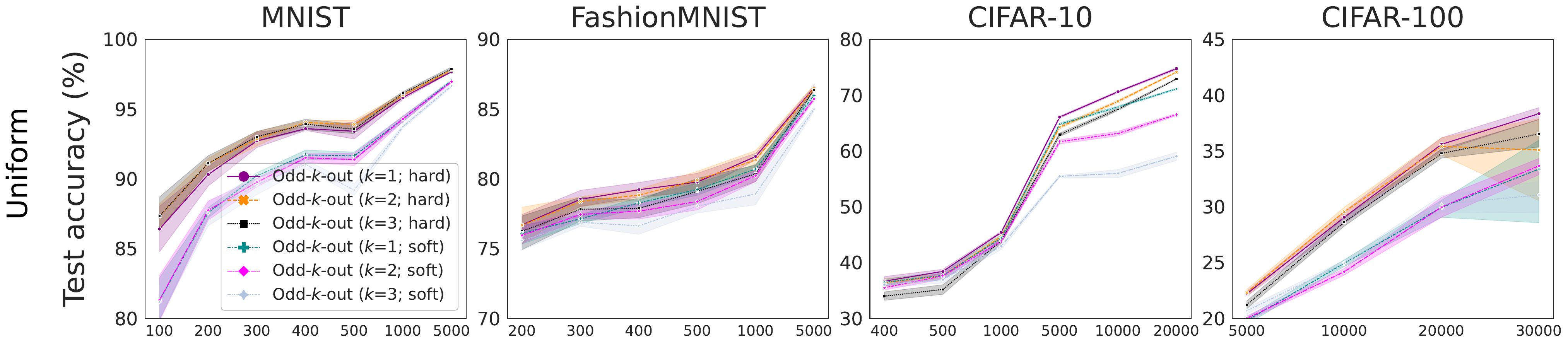}
\end{subfigure}
\begin{subfigure}{1.\textwidth}
    \centering
    \includegraphics[width=1.\textwidth]{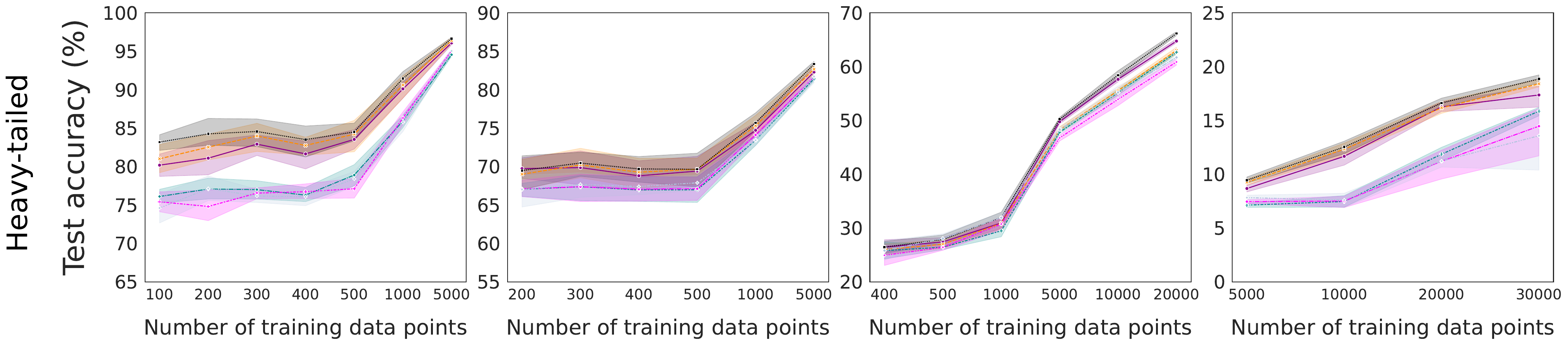}
\end{subfigure}
\caption{Test set accuracy in \% as a function of different numbers of data points used during training. Error bands depict 95\% CIs and are computed over five random seeds for all training settings. Top: Uniform class distribution during training. Bottom: Heavy-tailed class distribution.}
\label{fig:ablations-accuracy-hardvssoft}
\end{figure}

\begin{figure}[ht!]
\begin{subfigure}{1.\textwidth}
    \centering
    \includegraphics[width=1.\textwidth]{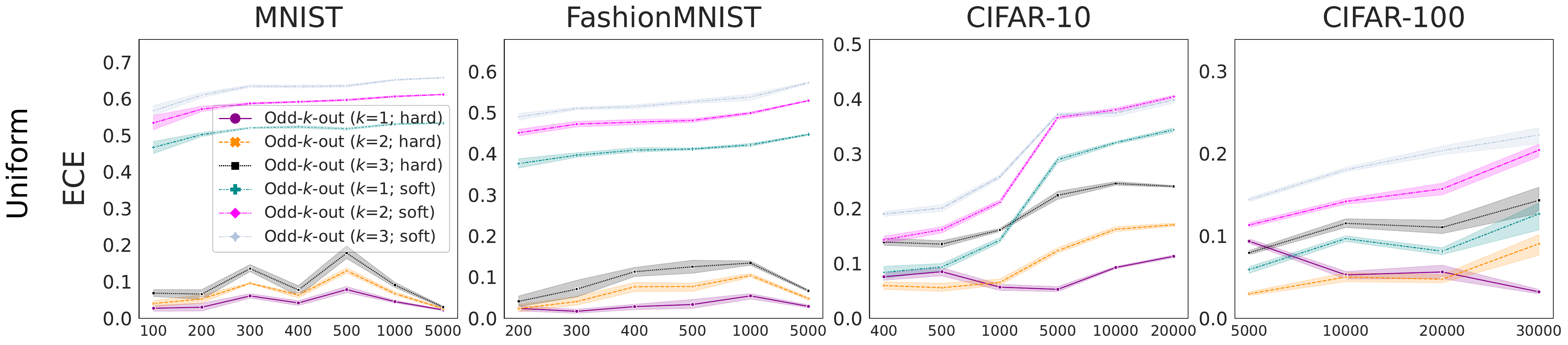}
\end{subfigure}
\begin{subfigure}{1.\textwidth}
    \centering
    \includegraphics[width=1.\textwidth]{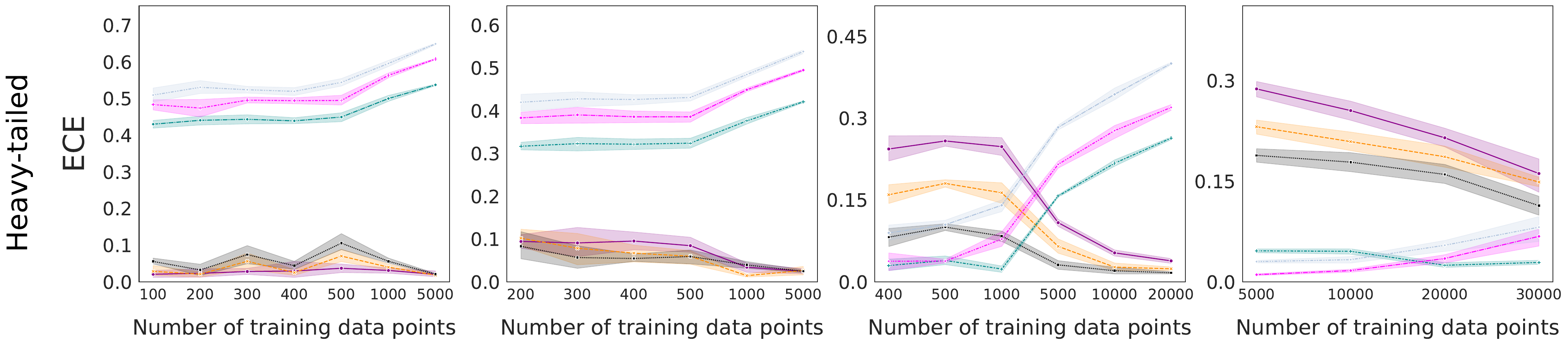}
\end{subfigure}
\caption{ECE as a function of different numbers of training data points. Error bands depict 95\% CIs and are computed over five random seeds for all training settings. Top: Uniform class distribution during training. Bottom: Heavy-tailed class distribution during training.}
\label{fig:ablations-ece-hardvssoft}
\end{figure}

\begin{figure}[ht!]
\begin{subfigure}{1.\textwidth}
    \centering
    \includegraphics[width=1.\textwidth]{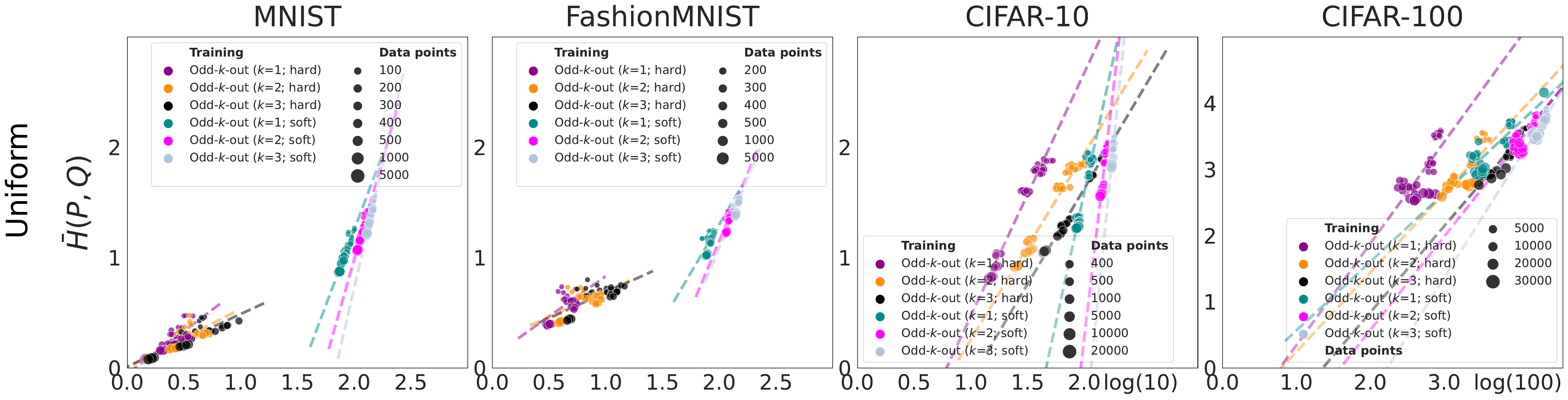}
\end{subfigure}
\begin{subfigure}{1.\textwidth}
    \centering
    \includegraphics[width=1.\textwidth]{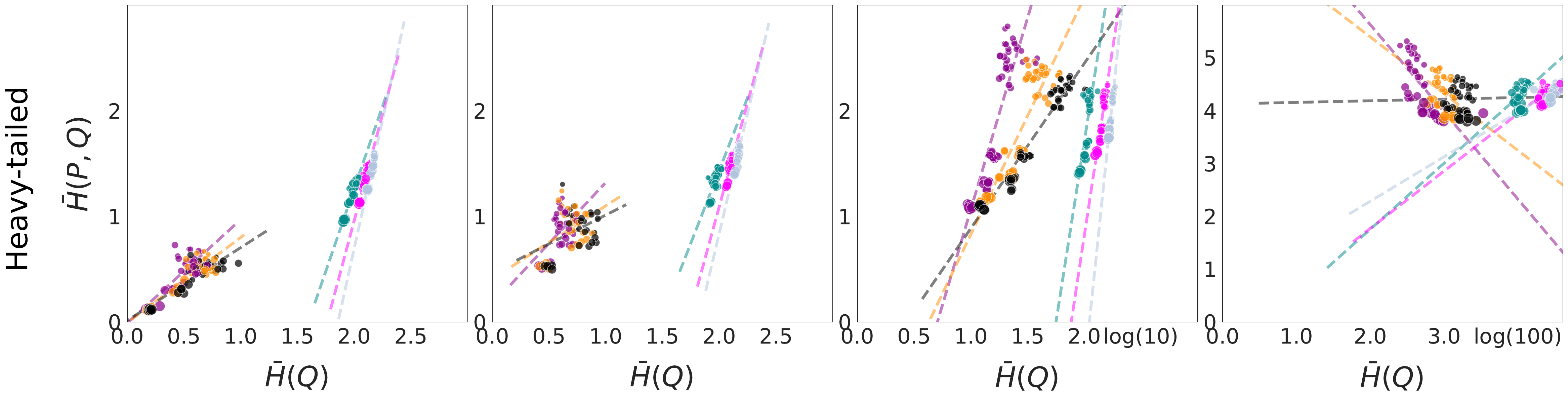}
\end{subfigure}
\caption{Average cross-entropy as a function of the average entropy for different numbers of training data points and different numbers of $k$. Top: Uniform class distribution during training. Bottom: Heavy-tailed class distribution during training.}
\label{fig:ablations-entropy_vs_centropy-hardvssoft}
\end{figure}

\begin{table}[ht!]
\caption{MAE between entropies and cross-entropies averaged over the entire test set for different numbers of training data points. Lower is better and therefore bolded. This quantifies the relationships shown in Fig.~\ref{fig:ablations-entropy_vs_centropy-hardvssoft}.}
\label{tab:ablations-xent_vs_entropy}
\centering
\scriptsize
\begin{tabular}{l|cc|cc|cc|cc}
\toprule
 & \multicolumn{2}{c}{MNIST} & \multicolumn{2}{c}{FashionMNIST} & \multicolumn{2}{c}{CIFAR-10} & \multicolumn{2}{c}{CIFAR-100} \\
Training $\setminus$ Class distribution & uniform & heavy-tailed & uniform & heavy-tailed & uniform & heavy-tailed & uniform & heavy-tailed \\
\midrule
Odd-$k$-out ($k$=1; \texttt{hard}) &    \textbf{0.073}  & \textbf{0.094}  &  \textbf{0.080}   & 0.334 & \textbf{0.116} & 0.498  & \textbf{0.314}  & 1.164 \\
Odd-$k$-out ($k$=2; \texttt{hard}) &   0.224  & 0.123  &  0.204   & 0.208 & 0.352 &  0.194  & 0.527  & 1.040 \\
Odd-$k$-out ($k$=3; \texttt{hard}) &    0.285 & 0.180  &  0.279   &  \textbf{0.160} & 0.376 & \textbf{0.153}  & 0.682  &  0.769 \\
Odd-$k$-out ($k$=1; \texttt{soft})  &    0.893  & 0.759  &  0.761   & 0.627 & 0.417 & 0.243  & 0.331  & 0.281 \\
Odd-$k$-out ($k$=2; \texttt{soft}) &   0.845  & 0.717  &  0.733   & 0.620 & 0.394 &  0.248 & 0.564  & \textbf{0.082} \\
Odd-$k$-out ($k$=3; \texttt{soft}) &    0.772 & 0.667  &  0.668   &  0.585 & 0.301 & 0.268  & 0.626  &  0.107 \\
\bottomrule
\end{tabular}
\end{table}
\

\subsubsection{Why do soft targets produce worse accurate classifiers?}

Understanding why the soft loss optimization (see Eq.~\ref{eqn:oko-loss-soft}) results in worse accurate classifiers than the hard loss optimization (see Eq.~\ref{eqn:oko-loss-hard}) requires experiments that, unfortunately, go beyond the scope of this paper. However, using the theory that we developed for understanding the properties of OKO in combination with the experimental results from the ablation experiments for comparing the hard against the soft loss (see above), we can try providing an intuition about why the soft loss produces worse accurate classifiers. We remark that this should be read as an interpretation rather than a conclusion because we have no clear evidence for our intuition. Recall that the soft loss (see Eq.~\ref{eqn:oko-loss-soft}) in combination with Alg.~\ref{alg:oko_batch_balancing} results in twice a mitigation of the overconfidence problem:

\begin{enumerate}[label=(\alph*)]
    \item The output logits are aggregated across all examples in a set and, thus, the loss is computed for a set of inputs rather than a single input (here, an input is an image). This step happens in both the hard and the soft loss optimization.
    \item Probability mass in the target distribution is spread out across the different classes in a set and, hence, transforms the majority class prediction problem --- which is what the hard loss optimizes for --- into a proportion estimation problem, which may make the model unnecessarily underconfident about the correct class.
\end{enumerate}

Since the output logits aggregation step (a) is part of both the hard and the soft loss optimization, the hard loss may better strike the balance between mitigating the overconfidence problem and potentially shooting the optimization into local minima that amplify uncertainty where uncertainty/underconfidence is actually not desirable (which could be what is happening in the second step).

\section{Compute}

We used a compute time of approximately 50 hours on a single Nvidia A100 GPU with 40GB VRAM for all CIFAR-10 and CIFAR-100 experiments using a ResNet-18 or ResNet-34 respectively and approximately 100 CPU-hours of 2.90GHz Intel Xeon Gold 6326 CPUs for MNIST and FashionMNIST experiments using the custom convolutional neural network architecture. The computations were performed on a standard, large-scale academic SLURM cluster.

\end{document}